
\documentclass{article}

\usepackage{microtype}
\usepackage{graphicx}
\usepackage{subfigure}
\usepackage{booktabs} 

\usepackage{hyperref}
\usepackage{listings}

\lstset{
  basicstyle=\ttfamily\small,
  breaklines=true,
  backgroundcolor=\color{lightgray!20},
  frame=single,
  keywordstyle=\color{blue},
  commentstyle=\color{gray},
  showstringspaces=false
}



\usepackage[accepted]{icml2025}

\usepackage{amsmath}
\usepackage{amssymb}
\usepackage{mathtools}
\usepackage{amsthm}

\usepackage[capitalize,noabbrev]{cleveref}

\theoremstyle{plain}
\newtheorem{theorem}{Theorem}[section]

\newtheorem{lemma}[theorem]{Lemma}

\theoremstyle{definition}
\newtheorem{definition}[theorem]{Definition}
\newtheorem{assumption}[theorem]{Assumption}
\theoremstyle{remark}

\usepackage[textsize=tiny]{todonotes}

\icmltitlerunning{Reward-free World Models for Online Imitation Learning}

\begin{document}

\twocolumn[
\icmltitle{Reward-free World Models for Online Imitation Learning}




\begin{icmlauthorlist}
\icmlauthor{Shangzhe Li}{xxx}
\icmlauthor{Zhiao Huang}{yyy}
\icmlauthor{Hao Su}{yyy,zzz}
\end{icmlauthorlist}

\icmlaffiliation{xxx}{South China University of Technology}
\icmlaffiliation{yyy}{Hillbot Inc.}
\icmlaffiliation{zzz}{University of California, San Diego}

\icmlcorrespondingauthor{Shangzhe Li}{shangzhe@unc.edu}

\icmlkeywords{Machine Learning, ICML}

\vskip 0.3in
]



\printAffiliationsAndNotice{}  

\begin{abstract}
Imitation learning (IL) enables agents to acquire skills directly from expert demonstrations, providing a compelling alternative to reinforcement learning. However, prior online IL approaches struggle with complex tasks characterized by high-dimensional inputs and complex dynamics. In this work, we propose a novel approach to online imitation learning that leverages reward-free world models. Our method learns environmental dynamics entirely in latent spaces without reconstruction, enabling efficient and accurate modeling. We adopt the inverse soft-Q learning objective, reformulating the optimization process in the Q-policy space to mitigate the instability associated with traditional optimization in the reward-policy space. By employing a learned latent dynamics model and planning for control, our approach consistently achieves stable, expert-level performance in tasks with high-dimensional observation or action spaces and intricate dynamics. We evaluate our method on a diverse set of benchmarks, including DMControl, MyoSuite, and ManiSkill2, demonstrating superior empirical performance compared to existing approaches.
\end{abstract}

\section{Introduction}

Imitation learning (IL) has garnered considerable attention due to its broad applications across various domains, such as robotic manipulation \citep{zhu2023viola, chi2023diffusionpolicy} and autonomous driving \citep{hu2022model, zhou2021exploring}. Unlike reinforcement learning, where agents learn through reward signals, IL involves learning directly from expert demonstrations. Recent advances in offline IL, including Diffusion Policy \citep{chi2023diffusionpolicy} and Implicit BC \citep{florence2022implicit}, highlight the advantages of leveraging large datasets in conjunction with relatively straightforward behavioral cloning (BC) methodologies. However, despite its wide applicability, IL methods that do not incorporate online interaction often suffer from poor generalization outside the expert data distribution, especially when encountering out-of-distribution states. Such limitations make these methods vulnerable to failure, as even minor perturbations in state can lead to significant performance degradation. This is often reflected in issues such as bias accumulation and suboptimal results \citep{reddy2019sqil}. These challenges stem from BC’s inability to fully capture the underlying dynamics of the environment and its inherent lack of exploration capabilities \citep{garg2021iq}.

To address these shortcomings, methods like GAIL \citep{ho2016generative}, SQIL \citep{reddy2019sqil}, IQ-Learn \citep{garg2021iq}, and CFIL \citep{freund2023coupled} have introduced value or reward estimation to facilitate a deeper understanding of the environment, while leveraging online interactions to enhance exploration. Nevertheless, these approaches continue to face substantial challenges, particularly when applied to tasks with high-dimensional observation and action spaces or complex dynamics. Additionally, framing online IL as a min-max optimization problem within the reward-policy space, often inspired by inverse reinforcement learning (IRL) techniques, introduces instability during training \citep{garg2021iq}. Recent advancements in world models have demonstrated exceptional performance across a wide range of control tasks, underscoring their potential in complex decision-making and planning scenarios \citep{hafner2019dream, hafner2019learning, hafner2020mastering, hafner2023mastering, hansen2022temporal, hansen2023td}. Specifically, world models offer advantages over model-free agents in terms of sampling complexity and future planning capabilities, resulting in superior performance on complex tasks \citep{hansen2022temporal,hansen2023td,hafner2019dream}. Notably, decoder-free world models, which operate exclusively in latent spaces without reconstruction, have proven to be highly effective and efficient in modeling complex environment dynamics \citep{hansen2022temporal, hansen2023td}.

Motivated by these insights, we explore the application of world models in the context of online imitation learning without rewards, enabling IL agents to develop a deeper understanding of environmental dynamics and improve their performance in tasks characterized by high-dimensional observations and complex dynamics. In this work, we present a novel approach to online imitation learning that leverages the strengths of decoder-free world models, specifically designed for complex tasks involving high-dimensional observations, intricate dynamics, and vision-based inputs. In contrast to conventional latent world models, which rely on reward and Q-function estimation, our approach completely eliminates the need for explicit reward modeling. We propose a framework for \textit{reward-free} world models that redefines the optimization process within the Q-policy space, addressing the instability associated with min-max optimization in the reward-policy space. By utilizing an inverse soft-Q learning objective for the critic network \citep{garg2021iq}, our method derives rewards directly from Q-values and the policy, effectively rendering the world model \textit{reward-free}. {Moreover, by performing imitation learning online, our model addresses key challenges in IL, such as out-of-distribution errors and bias accumulation.}

Through online training with finite-horizon planning based on learned latent dynamics, our method demonstrates strong performance in complex environments. We evaluate our approach across a diverse set of locomotion and manipulation tasks, utilizing benchmarks from DMControl \citep{tunyasuvunakool2020dm_control}, MyoSuite \citep{caggiano2022myosuite}, and ManiSkill2 \citep{gu2023maniskill2}, and demonstrate superior empirical performance compared to existing online imitation learning methods.

Our contributions are as follows: 
\begin{itemize} 
    \item We introduce a novel, robust methodology that leverages world models for online imitation learning, effectively addressing the challenges posed by complex robotics tasks. 
    \item We propose an innovative gradient-free planning process, operating without explicit reward modeling, within the context of model predictive control. 
    \item We showcase the model's effectiveness in inverse reinforcement learning tasks by demonstrating a positive correlation between decoded and ground-truth rewards. \end{itemize}
\section{Related Works}

Our work builds upon literature in Imitation Learning (IL) and Model-based Reinforcement Learning.
\paragraph{Imitation Learning} Recent works regarding IL leveraged deep neural architectures to achieve better performance. Generative Adversarial Imitation Learning (GAIL) \citep{ho2016generative} formulated the reward learning as a min-max problem similar to GAN \citep{goodfellow2014generative}. {Model-based Adversarial Imitation Learning (MAIL) \citep{baram2016model} extended the GAIL approach to incorporate a forward model trained by data-driven methodology.} Inverse Soft Q-Learning \citep{garg2021iq} reformulated the learning objective of GAIL and integrated their findings into soft actor-critic \citep{haarnoja2018soft} and soft Q-learning agents for imitation learning. CFIL \citep{freund2023coupled} introduced a coupled flow approach for reward generation and policy learning using expert demonstrations. ValueDICE \citep{kostrikov2019imitation} proposed an off-policy imitation learning approach by transforming the distribution ratio estimation objective. \cite{das2021model} proposed a model-based inverse RL approach by predicting key points for imitation learning tasks. SQIL \citep{reddy2019sqil} proposed an online imitation learning algorithm with soft Q functions. Diffusion Policy \citep{chi2023diffusionpolicy} is a recent offline IL method using a diffusion model for behavioral cloning. Implicit BC \citep{florence2022implicit} discovers that treating supervised policy learning with an implicit model generally improves the empirical performance for robot learning tasks. Hybrid inverse reinforcement learning \citep{ren2024hybrid} proposed a new methodology leveraging a mixture of online and expert demonstrations for agent training, achieving robust performance in environments with stochasticity. Prior works \citep{englert13model, hu2022model, igl2022symphony} explored the potentials of model-based imitation learning on real-world robotics control and autonomous driving. EfficientImitate \citep{yin2022planning} combined EfficientZero \citep{ye2021mastering} with adversarial imitation learning, achieving excellent results in DMControl \citep{tassa2018deepmind} imitation learning tasks. V-MAIL \citep{rafailov2021visual} introduced a model-based approach for imitation learning using variational models. CMIL \citep{kolev2024efficient} proposed an imitation learning approach with conservative world models for image-based manipulation tasks. Ditto \citep{demoss2023ditto} developed an offline imitation learning approach with Dreamer V2 \citep{hafner2020mastering} and adversarial imitation learning. DMIL \citep{zhang2023discriminator} utilized a discriminator to simultaneously evaluate both the accuracy of the dynamics and the suboptimality of model rollout data relative to real expert demonstrations in the context of offline imitation learning.
\paragraph{Model-based Reinforcement Learning} Contemporary model-based RL methods often learn a dynamics model for future state prediction via data-driven approaches. PlaNet \citep{hafner2019learning} was introduced as a model-based learning approach for partially observed MDPs by proposing a recurrent state-space model (RSSM) and an evidence lower-bound (ELBO) training objective. Dreamer algorithm \citep{hafner2019dream, hafner2020mastering,hafner2023mastering} is a model-based reinforcement learning approach that uses a learned world model to efficiently simulate future trajectories in a latent space, allowing an agent to learn and plan effectively. TD-MPC series \citep{hansen2022temporal}\citep{hansen2023td} learns a scalable world model for model predictive control using temporal difference learning objective.

Our approach employs a model-based methodology to address challenges in online imitation learning. By integrating a data-driven approach for latent dynamics learning with planning for control, the agent is able to effectively capture and leverage the underlying environment dynamics. Empirical evaluations demonstrate that our model achieves superior performance on complex online imitation learning tasks compared to existing methods.

\section{Preliminary}
We model the decision-making process in the environment as a Markov Decision Process (MDP), which can be defined as a tuple $\langle\mathcal{S},\mathcal{A},p_0,\mathcal{P},r,\gamma\rangle$. $\mathcal{S}$ and $\mathcal{A}$ represent state and action space. $p_0$ is the initial state distribution and $\mathcal{P}:\mathcal{S}\times\mathcal{A}\rightarrow\Delta_{\mathcal{S}}$ is the transition probability. $r(\mathbf{s},\mathbf{a})\in\mathcal{R}$ is the reward function and $\mathcal{R}$ is the reward space. $\gamma\in(0,1)$ is the discount factor. We denote the expert state-action distribution as $\rho_E$ and the behavioral distribution as $\rho_\pi$. Similarly, we denote the expert policy as $\pi_E$ and the behavioral policy as $\pi$. $\Pi$ is the set of all stochastic stationary policies that sample an action $\mathbf{a}\in\mathcal{A}$ given a state $\mathbf{s} \in\mathcal{S}$. $\mathcal{Z}$ is the space for the latent representation of the original state observations, and $\mathcal{Q}$ is the space for all possible Q functions. $\mathcal{H}(\cdot)$ represents the entropy of a distribution.

\begin{figure*}[t]
    \centering
    \includegraphics[scale=0.35]{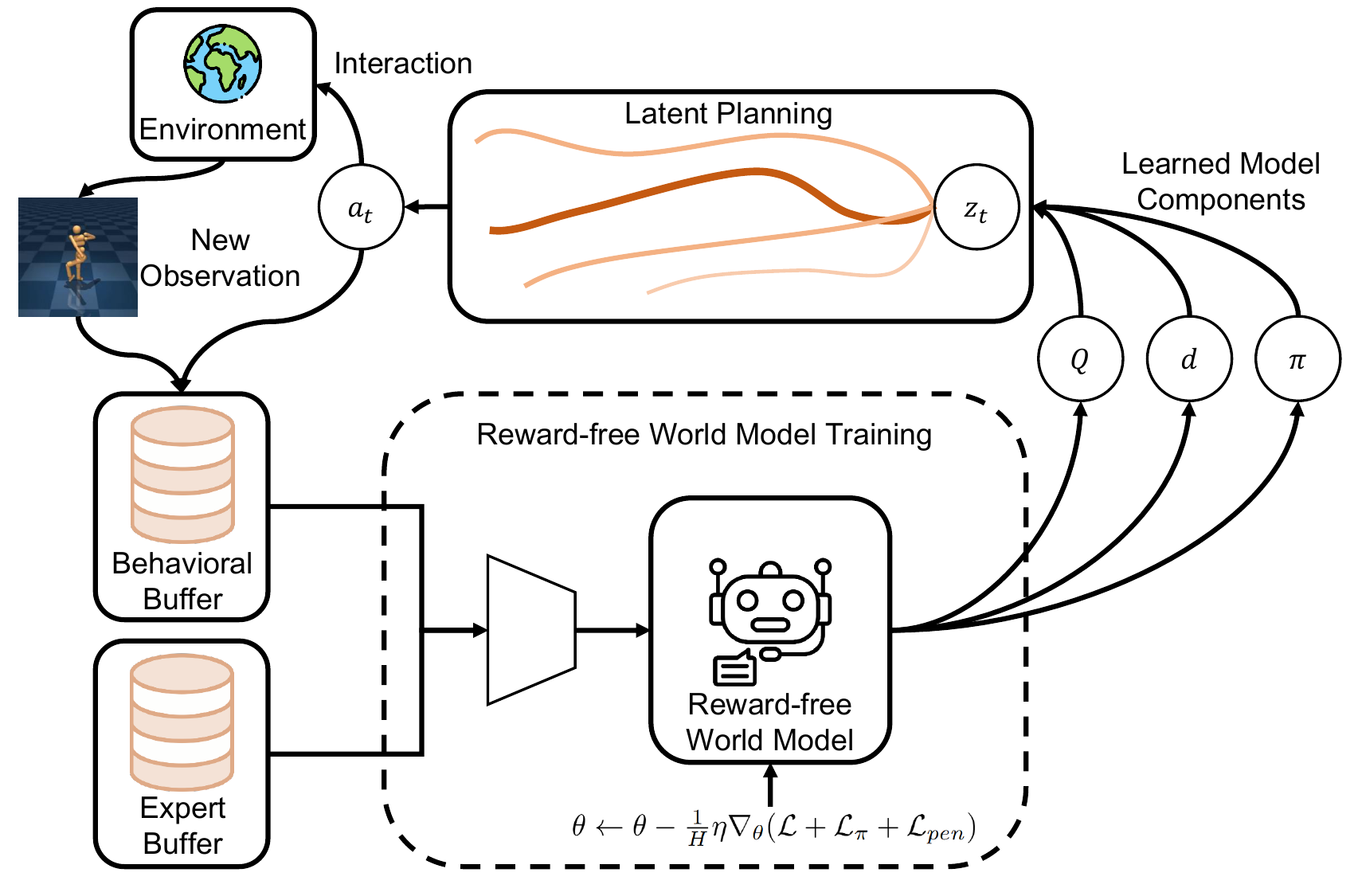}
    \caption{\textbf{IQ-MPC} We demonstrate the training workflow for IQ-MPC. The reward-free world model leverages both expert and behavioral data for training, using objectives in Section \ref{sec:learning-process}. The policy prior from the world model guides the MPPI planning process along with rewards decoded from Q estimations. The detailed planning process is revealed in Algorithm \ref{alg:inference}.}
    \label{fig:pipeline}
\end{figure*}

\paragraph{Maximum Entropy Inverse Reinforcement Learning}
Inverse Reinforcement Learning (IRL) focuses on recovering a specific reward function $r(s,a)$ in the reward space $\mathcal{R}$ given a certain amount of expert samples using expert policy $\pi_E$. Maximum entropy IRL \citep{Ziebart2008MaximumEI} seeks to solve this problem by optimizing ${\text{max}}_{r\in\mathcal{R}}{\text{min}}_{\pi\in\Pi}\mathbb{E}_{\rho_E}[r(\mathbf{s},\mathbf{a})]-(\mathbb{E}_{\rho_\pi}[r(\mathbf{s},\mathbf{a})]+\mathcal{H}(\pi))$. GAIL \citep{ho2016generative} generalized the objective into a form including an explicit reward mapping with a convex regularizer $\psi(r)$:
\begin{equation}
\label{eqn:GAIL-obj}\underset{r\in\mathcal{R}}{\text{max}}\underset{\pi\in\Pi}{\text{min}}\quad\mathbb{E}_{\rho_E}[r(\mathbf{s},\mathbf{a})]-\mathbb{E}_{\rho_\pi}[r(\mathbf{s},\mathbf{a})]-\mathcal{H}(\pi)-\psi(r)
\end{equation}
For a non-restrictive set of reward functions $\mathcal{R}=\mathbb{R}^{\mathcal{S}\times\mathcal{A}}$, the objective can be reformulated into a minimization of the statistical distance between distributions $\rho_E$ and $\rho_\pi$ \citep{ho2016generative}:
\begin{equation}
\label{eqn:statistical-distance}
    \underset{\pi}{\min}\quad d_\psi(\rho_\pi,\rho_E)-\mathcal{H}(\pi)
\end{equation}

\paragraph{Inverse Soft-Q Learning}

Prior work \citep{garg2021iq} introduced a bijection mapping $\mathcal{T}^\pi:\mathbb{R}^{\mathcal{S}\times\mathcal{A}}\rightarrow\mathbb{R}^{\mathcal{S}\times\mathcal{A}}$ between Q space $\mathcal{Q}$ and reward space $\mathcal{R}$, \text{i.e.}, the inverse Bellman operator:
\begin{equation}
    (\mathcal{T}^\pi Q)(\mathbf{s},\mathbf{a})= Q(\mathbf{s},\mathbf{a})-\gamma\mathbb{E}_{\mathbf{s}'\sim\mathcal{P}(\cdot|\mathbf{s},\mathbf{a})}V^\pi(\mathbf{s}')
\end{equation}
where $V^\pi(\mathbf{s})=\mathbb{E}_{\mathbf{a}\sim\pi(\cdot|\mathbf{s})}[Q(\mathbf{s},\mathbf{a})-\log\pi(\mathbf{a}|\mathbf{s})]$. The reward decoding is defined as $r=\mathcal{T}^\pi Q$. By applying the operator $\mathcal{T}^\pi$ over Eq.\ref{eqn:GAIL-obj}, prior work reformulated the GAIL training objective in Q-policy space \citep{garg2021iq}:
\begin{equation}
\label{eqn:iq-loss-original}
\begin{split}
    \mathcal{J}(\pi,Q)&=\mathbb{E}_{(\mathbf{s},\mathbf{a})\sim\rho_E}\Big[Q(\mathbf{s},\mathbf{a})-\gamma\mathbb{E}_{\mathbf{s}'\sim\mathcal{P}(\cdot|\mathbf{s},\mathbf{a})}V^\pi(\mathbf{s}')\Big]\\
    &-\mathbb{E}_{(\mathbf{s},\mathbf{a})\sim\rho_\pi}\Big[V^\pi(\mathbf{s})-\gamma\mathbb{E}_{\mathbf{s}'\sim\mathcal{P}(\cdot|\mathbf{s},\mathbf{a})}V^\pi(\mathbf{s}')\Big]\\
    &-\psi(\mathcal{T}^\pi Q)\\
\end{split}
\end{equation}
which is the inverse soft-Q objective for critic learning. {In this way, we can perform imitation learning by leveraging actor-critic architecture.} The critic and policy can be learned by finding the saddle point in a joint optimization problem $Q^*=\text{argmax}_{Q\in\mathcal{Q}}\min_{\pi\in\Pi}\mathcal{J}(\pi, Q)$ and $\pi^*=\text{argmin}_{\pi\in\Pi}\max_{Q\in\mathcal{Q}}\mathcal{J}(\pi, Q)$. \cite{garg2021iq} proved the uniqueness of the saddle point. For a fixed Q, the optimization for policy has a closed-form solution, which is the softmax policy:
\begin{equation}
    \pi_Q(\mathbf{a}|\mathbf{s})=\frac{\exp Q(\mathbf{s}, \mathbf{a})}{\sum_\mathbf{a} \exp Q(\mathbf{s},\mathbf{a})}
\end{equation}
In the actor-critic setting, we can optimize the policy $\pi$ using maximum-entropy RL objective, which approximates $\pi_Q$, and learn critic using:
\begin{equation}
\label{eqn:iq-loss}
\begin{split}
    &\quad\underset{Q\in\mathcal{Q}}{\max}\mathcal{J}(\pi,Q)\\
    &=\underset{Q\in\mathcal{Q}}{\max}\quad\Bigg[\mathbb{E}_{(\mathbf{s},\mathbf{a})\sim\rho_E}\Big[\phi(Q(\mathbf{s},\mathbf{a})-\gamma\mathbb{E}_{\mathbf{s}'\sim\mathcal{P}(\cdot|\mathbf{s},\mathbf{a})}V^\pi(\mathbf{s}'))\Big]\\
    &-\mathbb{E}_{(\mathbf{s},\mathbf{a})\sim\rho_\pi}\Big[V^\pi(\mathbf{s})-\gamma\mathbb{E}_{\mathbf{s}'\sim\mathcal{P}(\cdot|\mathbf{s},\mathbf{a})}V^\pi(\mathbf{s}')\Big]\Bigg]
\end{split}
\end{equation}
where $\phi$ is a concave function. Specifically, if we leverage $\chi^2$ regularization, we will have  $\phi(x)=x-\frac{1}{4\alpha}x^2$. { The scalar coefficient $\alpha$ controls the strength of $\chi^2$ regularization in the inverse soft-Q objective. Intuitively, the additonal regularization term penalizes the magnitude of the estimated reward. Prior work \citep{al2023ls} interpreted the objective with this regularizer as minimizing the squared Bellman error, establishing a connection between inverse soft-Q learning and SQIL \citep{reddy2019sqil}. A detailed empirical analysis on hyperparameter $\alpha$ is shown in Appendix \ref{sec:additional-ablation}.}

\section{Methodology}

{In reinforcement learning, world models typically regress explicit reward signals provided by the environment. In imitation learning, prior approaches \citep{kolev2024efficient,demoss2023ditto} aim to train a reward model through adversarial objectives alongside a separate critic network trained on temporal difference objectives. In contrast, we eliminate the need for a separate reward model by retrieving rewards directly from the learned critic. To this end, we propose a \textit{reward-free} world model that learns exclusively from reward-free expert demonstrations and environment interactions, without training a dedicated reward model. Furthermore, since our model can decode dense rewards from the critic, it can solve inverse reinforcement learning tasks using reward-free interactions and a limited set of expert demonstrations that include only states and actions.}

\subsection{Learning Process of a Reward-free World Model}
\label{sec:learning-process}
World models used in reinforcement learning settings often contain a reward model $R(\mathbf{z},\mathbf{a})$ that requires supervised learning using explicit reward signals from the online environment interactions or the offline data. However, if our learning objective is able to form a bijection between Q space $\mathcal{Q}$ and reward space $\mathcal{R}$, it would be natural to decode the reward from the Q value instead of learning another separate mapping for the reward, which also enables the world model to perform imitation learning with expert demonstrations without explicit reward signal. An overview of our proposed method is shown in Figure \ref{fig:pipeline}. The detailed training algorithm is shown in Algorithm \ref{alg:training}. {We also provide a theoretical analysis of our training objective, as detailed in Section \ref{sec:theory-learning-obj} Appendix \ref{sec:bounded-suboptimal}}.
\vspace{-0.2in}
\paragraph{Model Components}
We introduce our approach for imitation learning as \textbf{I}nverse Soft-\textbf{Q} Learning for \textbf{M}odel \textbf{P}redictive \textbf{C}ontrol, or \textbf{IQ-MPC} as an abbreviation. Our architecture consists of four components:
\vspace{-0.2in}
\begin{align}
\text{Encoder:} &\quad \mathbf{z} = h(\mathbf{s}) \label{eq:encoder} \\
\text{Latent dynamics:} &\quad \mathbf{z}' = d(\mathbf{z}, \mathbf{a}) \label{eq:latent_dynamics}
\end{align}
\begin{align}
\text{Value function:} &\quad \hat{q} = Q(\mathbf{z}, \mathbf{a}) \label{eq:value_function} \\
\text{Policy prior:} &\quad \mathbf{\hat a} = \pi(\mathbf{z}) \label{eq:policy_prior}
\end{align}
where $\mathbf{s}$ and $\mathbf{a}$ are states and actions, $\mathbf{z}$ is latent representations. The policy prior $\pi$ guides the model predictive planning process, along with rewards decoded from the value function $Q$. We maintain two separate replay buffers $\mathcal{B}_E$ and $\mathcal{B}_\pi$ for expert and behavioral data storage respectively. Behavioral data are collected during the learning process. For simplicity, we denote the sampling process from the joint buffer as $\mathcal{B} = \mathcal{B}_E \cup \mathcal{B}_\pi$. We sample trajectories with short horizons of length $H$ from the replay buffers.
\paragraph{Model Learning}
We learn the encoder $h$, latent dynamics $d(\mathbf{z},\mathbf{a})$ and Q function $Q(\mathbf{z},\mathbf{a})$ jointly by minimizing the objective for prediction consistency and critic learning:
\begin{equation}
\label{eqn:critic-obj}
\mathcal{L} =  \sum_{t=0}^H \lambda^t \Bigg(\mathbb{E}_{(\mathbf{s}_t, \mathbf{a}_t, \mathbf{s}'_t) \sim \mathcal{B}} \Vert \mathbf{z}_{t+1} - \text{sg}(h(\mathbf{s}'_t)) \Vert_2^2\Bigg)+ \mathcal{L}_{iq}
\end{equation}
where $\text{sg}$ is the stop gradient operator and $\mathcal{L}_{iq}$ is the inverse soft-Q critic objective, which is a modification for horizon $H$ and latent representation $\mathbf{z}$ from Eq.\ref{eq:encoder} based on Eq.\ref{eqn:iq-loss}:
\begin{equation}
\label{eqn:iq-loss-mod}
\begin{split}
    &\quad\quad\mathcal{L}_{iq}(Q,\pi)\\
    &=\sum_{t=0}^H \lambda^t\Bigg[-\mathbb{E}_{(\mathbf{s}_t, \mathbf{a}_t, \mathbf{s}'_t) \sim \mathcal{B}_E}\Big[Q(\mathbf{z}_t,\mathbf{a}_t)-\gamma \bar V^\pi(h(\mathbf{s}'_t))\Big]\\  
    &+\mathbb{E}_{\mathbf{s}_0 \sim \mathcal{B}_E}\Big[(1-\gamma)V^\pi(\mathbf{z}_0)\Big]\\
    &+\mathbb{E}_{(\mathbf{s}_t, \mathbf{a}_t, \mathbf{s}'_t) \sim \mathcal{B}} \frac{1}{4\alpha}\Big[Q(\mathbf{z}_t,\mathbf{a}_t)-\gamma \bar V^\pi(h(\mathbf{s}'_t))\Big]^2\Bigg]
\end{split}
\end{equation}
Compared to Eq.\ref{eqn:iq-loss}, the key difference is the second term of the objective, which computes the original value difference $\mathbb{E}_{(\mathbf{s}_t,\mathbf{a}_t,\mathbf{s}'_t)\sim\mathcal{B}_\pi}[V^\pi(\mathbf{z}_t) - \gamma V^\pi(\mathbf{z}'_t)]$ using only the representation of the initial state $\mathbf{s}_0$. This reformulation, derived in Lemma \ref{thm:obj-eq} (Appendix \ref{sec:obj-eq}), yields more stable Q estimation, as confirmed by the ablation in Appendix \ref{sec:additional-ablation}. We also apply $\chi^2$ regularization, as noted in \cite{garg2021iq}. We leverage $\lambda \in (0,1]$ as a constant discounting weight over the horizon, guaranteeing the influence to be smaller for states and actions farther ahead. Note that $\lambda$ here differs from the environment discount factor $\gamma$. All value functions in the objective are computed from Q and policy network via $ V^\pi(\mathbf{z})=\mathbb{E}_{\mathbf{a}\sim\pi(\cdot|\mathbf{z})}[ Q(\mathbf{z},\mathbf{a})-\beta\log\pi(\mathbf{a}|\mathbf{z})]$, where $\beta$ is the entropy coefficient. We will further discuss the selection of $\beta$ in the policy learning part. Especially, $\bar V^\pi(h(\mathbf{s}'))$ is the value function computed by the target Q network $\bar Q$. $\mathbf{z}$ is retrieved by rolling out dynamics model from the latent representation of the first state:
\begin{equation*}
    \mathbf{z}_{t+1}=d(\mathbf{z}_t,\mathbf{a}_t),\quad \mathbf{z}_0=h(\mathbf{s}_0)
\end{equation*}
We update the Q, encoder, and dynamics network by minimizing Eq.\ref{eqn:critic-obj} while keeping policy prior $\pi$ fixed.
\paragraph{Policy Prior Learning}
We choose to learn the policy prior network with the maximum entropy reinforcement learning.
We minimize the following maximum entropy RL objective using data sampled from both the expert buffer and the behavioral buffer:
\begin{equation}
\label{eqn:policy-learning}
\mathcal{L}_{\pi}=\sum_{t=0}^H \lambda^t\Bigg[\mathbb{E}_{(\mathbf{s}_t,\mathbf{a}_t) \sim \mathcal{B}}\Big[-Q(\mathbf{z}_t,\pi(\mathbf{z}_t))+\beta\log(\pi(\cdot|\mathbf{z}_t))\Big]\Bigg]
\end{equation}
$\beta$ is an entropy coefficient which is a fixed scalar. \cite{hansen2023td} experimented on adaptive entropy coefficient and observed no performance improvement on model predictive control compared to a fixed scalar. Therefore, we also choose not to leverage a learnable $\beta$ for simplicity. We prove in Theorem \ref{thm:policy-update} that this policy update can achieve $\pi^*=\text{argmax}_{\pi\in\Pi}\min_{Q\in\mathcal{Q}}\mathcal{L}_{iq}(Q,\pi)$ to find the saddle point.
\paragraph{Balancing Critic and Policy Training}
We observe unstable training processes in some tasks due to the imbalance between the critic and the policy. When the discriminative power of the critic is too strong, the policy prior $\pi$ may fail to learn properly. In those cases the Q value difference between expert batch and behavioral batch $\mathbb{E}_{(\mathbf{s},\mathbf{a})_{(0:H)}\sim\mathcal{B}_E}Q(\mathbf{z}_t,\mathbf{a}_t)-\mathbb{E}_{(\mathbf{s},\mathbf{a})_{(0:H)}\sim\mathcal{B}_\pi}Q(\mathbf{z}_t,\mathbf{a}_t)$ will not converge. To mitigate this issue, we choose to use the Wasserstein-1 metric for gradient penalty \citep{gulrajani2017improved, garg2021iq} in addition to the original inverse soft-Q objective, enforcing Lipschitz condition for the gradient:
\begin{equation}
\label{eqn:grad-pen}
    \mathcal{L}_{pen}=\sum_{t=0}^H\lambda^t\Bigg[\mathbb{E}_{(\mathbf{\hat s}_t, \mathbf{\hat a}_t)\sim\mathcal{B}}\Big(\Vert \nabla Q(\mathbf{\hat z}_t,\mathbf{\hat a}_t)\Vert_2-1\Big)^2\Bigg]
\end{equation}
In Eq.\ref{eqn:grad-pen}, $\mathbf{\hat s}$ and $\mathbf{\hat a}$ are sampled from the straight line between samples from expert buffer $(\mathbf{s},\mathbf{a})\sim\mathcal{B}_E$ and behavioral buffer $(\mathbf{s},\mathbf{a})\sim\mathcal{B}_\pi$ by linear interpolation. $\nabla$ is the gradient with respect to the interpolated input $\mathbf{\hat z}_t$ and $\mathbf{\hat a}_t$. By incorporating this additional objective, we can enforce unit gradient norm over the straight lines between state-action distribution $\rho_\pi$ and $\rho_E$. We show the ablation study regarding this regularization term in Appendix \ref{sec:additional-ablation}.

\begin{figure}[t]
\begin{algorithm}[H]
\caption{~~IQ-MPC {(\emph{inference})}}
\label{alg:inference}
\begin{algorithmic}[1]
\REQUIRE $\theta:$ learned network parameters\\
~~~~~~~~~~$\mu^{0}, \sigma^{0}$: initial parameters for $\mathcal{N}$\\
~~~~~~~~~~$N, N_{\pi}$: number of sample/policy trajectories\\
~~~~~~~~~~$\mathbf{s}_{t}, H$: current state, rollout horizon
\STATE Encode state $\mathbf{z}_{t} \leftarrow h_{\theta}(\mathbf{s}_{t})$
\FOR{each iteration $j=1..J$}
\STATE Sample $N$ trajectories of length $H$ from $\mathcal{N}(\mu^{j-1}, (\sigma^{j-1})^{2} \mathrm{I})$
\STATE {Sample $N_{\pi}$ trajectories of length $H$ using $\pi_{\theta}, d_{\theta}$} \\
{\emph{// Estimate trajectory returns $\phi_{\Gamma}$ using $d_{\theta},Q_{\theta},\pi_\theta$, starting from $\mathbf{z}_{t}$ and initialize $\phi_{\Gamma}=0$:}}
\FOR{all $N+N_{\pi}$ trajectories $(\mathbf{a}_{t}, \mathbf{a}_{t+1}, \dots, \mathbf{a}_{t+H})$}
\FOR{step $t=0..H-1$}
\STATE $\mathbf{z}_{t+1} \leftarrow d_{\theta}(\mathbf{z}_{t}, \mathbf{a}_{t})$ \hfill {$\vartriangleleft$ \emph{Latent transition}}
\STATE $\mathbf{\hat a}_{t+1}\sim\pi_\theta(\cdot|\mathbf{z}_{t+1})$
\STATE Compute $V^\pi(\mathbf{z}_{t+1})$ via $Q_\theta(\mathbf{z}_{t+1}, \mathbf{\hat a}_{t+1})-\beta\log\pi_\theta(\mathbf{\hat a}_{t+1}|\mathbf{z}_{t+1})$
\STATE $r(\mathbf{z}_{t}, \mathbf{a}_{t})=Q_\theta(\mathbf{z_t},\mathbf{a_t})-\gamma V^\pi(\mathbf{z_{t+1}})$\hfill{$\vartriangleleft$ \emph{Reward decoding}}
\STATE $\phi_{\Gamma} = \phi_{\Gamma} + \gamma^{t} [r(\mathbf{z}_{t}, \mathbf{a}_{t})-\beta\log\pi_\theta(\mathbf{a}_t|\mathbf{z}_t)]$ \hfill 
\ENDFOR
\STATE $\phi_{\Gamma} = \phi_{\Gamma} + \gamma^{H} V^\pi(\mathbf{z}_H)$ \hfill {$\vartriangleleft$ \emph{Terminal value}}
\ENDFOR
\STATE {\emph{// Update parameters $\mu,\sigma$ for next iteration:}}
\STATE $\mu^{j}, \sigma^{j} \leftarrow$ MPPI update with $\phi_\Gamma$.
\ENDFOR
\STATE \textbf{return} $\mathbf{a} \sim \mathcal{N}(\mu^{J}, (\sigma^{J})^{2} \mathrm{I})$
\end{algorithmic}
\end{algorithm}
\vspace{-0.4in}
\end{figure}
\subsection{Theoretical Analysis on the Learning Objective}
\label{sec:theory-learning-obj}
{In this section, we demonstrate theoretically that the learning objectives of IQ-MPC effectively minimize the value difference between the current policy and the expert, ensuring that Q-value estimation can follow as the latent dynamics model learns. We begin by utilizing the following lemma established in  \citep{kolev2024efficient}: 
\begin{lemma}[Bounded Suboptimality]
\label{thm:bounded-suboptimal}
    Given an unknown latent MDP \(\mathcal{M}\) and our learned latent MDP \(\hat{\mathcal{M}}\) with transition probabilities \(d\) and \(\hat{d}\) in the latent state space \(\mathcal{Z}\) and action space \(\mathcal{A}\), and letting \(R_{\max}\) denote the maximum reward of the unknown MDP, the difference between the expected return of the current policy, $\eta^\pi_\mathcal{M}$, and that of the expert policy, $\eta^{\pi_E}_\mathcal{M}$, is bounded by:
\begin{equation*}
\begin{split}
     |\eta^{\pi_E}_\mathcal{M}-\eta^\pi_\mathcal{M}|&\leq\underbrace{\frac{2R_{\max}}{1-\gamma}D_{TV}(\rho^\pi_{\hat{\mathcal{M}}},\rho^{\pi_E}_{{\mathcal{M}}})}_{\text{T1}}\\
     &+\underbrace{\frac{\gamma R_{\max}}{(1-\gamma)^2}\mathbb{E}_{\rho^\pi_{\hat{\mathcal{M}}}}\Big[D_{TV}(d(\mathbf{z}'|\mathbf{z},\mathbf{a}),\hat{d}(\mathbf{z}'|\mathbf{z},\mathbf{a}))\Big]}_{\text{T2}}\\
\end{split}
\end{equation*}
\end{lemma}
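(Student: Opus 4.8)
The plan is to decompose the value difference
$|\eta^{\pi_E}_{\mathcal{M}} - \eta^{\pi}_{\mathcal{M}}|$
through an intermediate quantity so that the policy mismatch and the model mismatch can be isolated and bounded separately. Specifically, I would introduce the return of the learned policy $\pi$ evaluated under the \emph{learned} dynamics $\hat{d}$, call it $\eta^{\pi}_{\hat{\mathcal{M}}}$, and apply the triangle inequality:
\begin{equation*}
|\eta^{\pi_E}_{\mathcal{M}} - \eta^{\pi}_{\mathcal{M}}| \leq |\eta^{\pi_E}_{\mathcal{M}} - \eta^{\pi}_{\hat{\mathcal{M}}}| + |\eta^{\pi}_{\hat{\mathcal{M}}} - \eta^{\pi}_{\mathcal{M}}|.
\end{equation*}
The first term compares two different policies (expert versus learned) but holds the evaluation measure fixed at the occupancy level, which is what lets it collapse to a total-variation distance between occupancy measures; the second term compares the \emph{same} policy $\pi$ under two different transition models, which is exactly the simulation-lemma situation and should produce the $D_{TV}(d,\hat{d})$ term with the quadratic horizon factor.

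For the first term (T1), I would write each return as an integral of the reward against the corresponding state-action occupancy measure, $\eta^{\pi'}_{\mathcal{M}'} = \frac{1}{1-\gamma}\mathbb{E}_{\rho^{\pi'}_{\mathcal{M}'}}[r]$, so that the difference becomes $\frac{1}{1-\gamma}\big|\mathbb{E}_{\rho^{\pi_E}_{\mathcal{M}}}[r] - \mathbb{E}_{\rho^{\pi}_{\hat{\mathcal{M}}}}[r]\big|$. Bounding $r$ by $R_{\max}$ in absolute value and using the variational characterization of total variation (that $|\mathbb{E}_{p}[f] - \mathbb{E}_{q}[f]| \leq 2 R_{\max}\, D_{TV}(p,q)$ when $\|f\|_\infty \le R_{\max}$) yields precisely $\frac{2R_{\max}}{1-\gamma} D_{TV}(\rho^{\pi}_{\hat{\mathcal{M}}}, \rho^{\pi_E}_{\mathcal{M}})$, matching T1.

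For the second term (T2), the standard route is a telescoping / simulation-lemma argument: unroll both returns step by step and collect the per-step discrepancy, which is driven entirely by the one-step transition gap $D_{TV}(d(\cdot|\mathbf{z},\mathbf{a}), \hat{d}(\cdot|\mathbf{z},\mathbf{a}))$ because the policy, reward bound, and discounting are shared. The geometric sum over the horizon contributes one factor of $\frac{1}{1-\gamma}$ from the accumulation of future reward after a divergence and a second factor of $\frac{1}{1-\gamma}$ from summing the divergence opportunities across timesteps, giving the characteristic $\frac{\gamma R_{\max}}{(1-\gamma)^2}$ coefficient with the expectation taken under $\rho^{\pi}_{\hat{\mathcal{M}}}$, the occupancy of the rolled-out trajectories in the learned model.

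I expect the main obstacle to be the bookkeeping in the T2 telescoping argument: one must be careful about \emph{which} occupancy measure the transition-gap expectation is evaluated against (here $\rho^{\pi}_{\hat{\mathcal{M}}}$, the learned-model occupancy, not the true one), and about correctly splitting the two horizon factors so that one arises from value accumulation and the other from summing single-step errors. The cleanest way to handle this is to invoke the standard simulation lemma (or replay the telescoping decomposition of $\eta^{\pi}_{\hat{\mathcal{M}}} - \eta^{\pi}_{\mathcal{M}}$ directly), since this bound is imported from \citep{kolev2024efficient} and the argument is by now a routine, if delicate, instance of that technique.
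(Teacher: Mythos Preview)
The paper does not actually prove this lemma; it explicitly states that the result is ``established in \citep{kolev2024efficient}'' and simply invokes it to motivate the training objective. Your proposal correctly recognizes this and sketches the standard two-step argument (triangle inequality through $\eta^{\pi}_{\hat{\mathcal{M}}}$, then occupancy-measure bound for T1 and simulation lemma for T2), which is indeed the conventional route for such bounds and matches what one would find in the cited reference.
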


Our critic and policy objectives can be interpreted as a min-max optimization of Eq.\ref{eqn:iq-loss-original}. Moreover, this approach can be viewed as minimizing a statistical distance with an entropy term, corresponding to Eq.\ref{eqn:statistical-distance}. Thus, on one hand, our critic and policy objectives effectively minimize $\text{T1}$ in the bound provided in Lemma \ref{thm:bounded-suboptimal}. On the other hand, optimizing our consistency loss in Eq.\ref{eqn:critic-obj} is approximately minimizing the second term $\text{T2}$ in the bound. Our training objective ensures that as the dynamics model learns, it simultaneously minimizes the upper bound of the deviation between the expected return for current policy $\pi$ and the expert expected return. A more detailed analysis on $\text{T2}$ is given in Appendix \ref{sec:bounded-suboptimal}.}

\subsection{Planning with Policy Prior}
Similar to TD-MPC \citep{hansen2022temporal} and TD-MPC2 \citep{hansen2023td}, we utilize the Model Predictive Control (MPC) framework for local trajectory optimization over the latent representations and acquire control action by leveraging Model Predictive Path Integral (MPPI)\citep{williams2015model} with sampled action sequences $(\mathbf{a}_t,\mathbf{a}_{t+1},...,\mathbf{a}_{t+H})$ of length $H$. Instead of planning with explicit reward models like TD-MPC and TD-MPC2, we estimate the parameters $(\mu^*, \sigma^*)$ using derivative-free optimization with reward information decoded from the critic's estimation:
\begin{equation}
\label{eqn:MPPI}
    \begin{split} \mu^*,\sigma^*&=\underset{(\mu,\sigma)}{\text{argmax}}\underset{(\mathbf{a}_t,... ,\mathbf{a}_{t+H})\sim\mathcal{N}(\mu,\sigma^2)}{\mathbb{E}}\Big[\gamma^H V^\pi(\mathbf{z}_{t+H})\\
    &+\sum^{H-1}_{h=0}\gamma^h(r(\mathbf{z}_{t+h},\mathbf{a}_{t+h})-\beta\log\pi(\mathbf{a}_{t+h}|\mathbf{z}_{t+h}))\Big]\\
    \end{split}
\end{equation}

where $\mu,\sigma\in\mathbb{R}^{H\times m},m=\dim\mathcal{A}$. $(\mathbf{z}_t,...,\mathbf{z}_{t+H})$ are computed by unrolling with $(\mathbf{a}_t,..,\mathbf{a}_{t+H})$ using dynamics model $d_\theta$. The reward $r(\mathbf{z},\mathbf{a})$ is computed by $Q(\mathbf{z},\mathbf{a})-\gamma \mathbb{E}_{\mathbf{z}'\sim d(\cdot|\mathbf{z},\mathbf{a})}V^\pi(\mathbf{z}')$. Eq.\ref{eqn:MPPI} is solved by iteratively computing soft expected return $\phi_{\Gamma}$ of sampled actions from $\mathcal{N}(\mu,\sigma^2)$ and update $\mu,\sigma$ based on weighted average with $\phi_\Gamma$. We describe the detailed planning procedure in Algorithm \ref{alg:inference}. Eq.\ref{eqn:MPPI} is an estimation of the soft-Q learning objective \citep{haarnoja2017reinforcement} for RL with horizon $H$. After iteration, we execute the first action sampled from the normal distribution $a_t\sim\mathcal{N}(\mu^*_t, (\sigma^*_t)^2\mathrm{I})$ in the environment to collect a new trajectory for behavioral buffer $\mathcal{B}_\pi$.

\section{Experiments}
We conduct experiments for locomotion and manipulation tasks to demonstrate the effectiveness of our approach. We choose to leverage the online version of IQ-Learn+SAC (referred to as IQL+SAC in the experiment plots) \citep{garg2021iq}, CFIL+SAC \citep{freund2023coupled}, {and HyPE \citep{ren2024hybrid}} as our baselines for comparison studies. {The results presented below for our IQ-MPC model are obtained through planning. We provide an analysis of the computational overhead of our model in Appendix \ref{sec:training-time}.} The empirical results regarding state-based and visual experiments are shown in Section \ref{sec:main-results}. We experiment on the reward recovery capability of our IQ-MPC model, for which we reveal the results in Appendix \ref{sec:reward-correlation}. We conduct ablation studies for our model, which are discussed in Section \ref{sec:ablation}. The details of the environments and tasks can be found in Appendix \ref{sec:env-details}. {We also analyze the training time and the robustness of our model under noisy environment dynamics. The corresponding results are presented in Appendix \ref{sec:training-time} and \ref{sec:noisy-environment}.}

\begin{figure*}[h]
    \centering
    \begin{minipage}{0.3\textwidth}
        \centering
        \includegraphics[width=\textwidth]{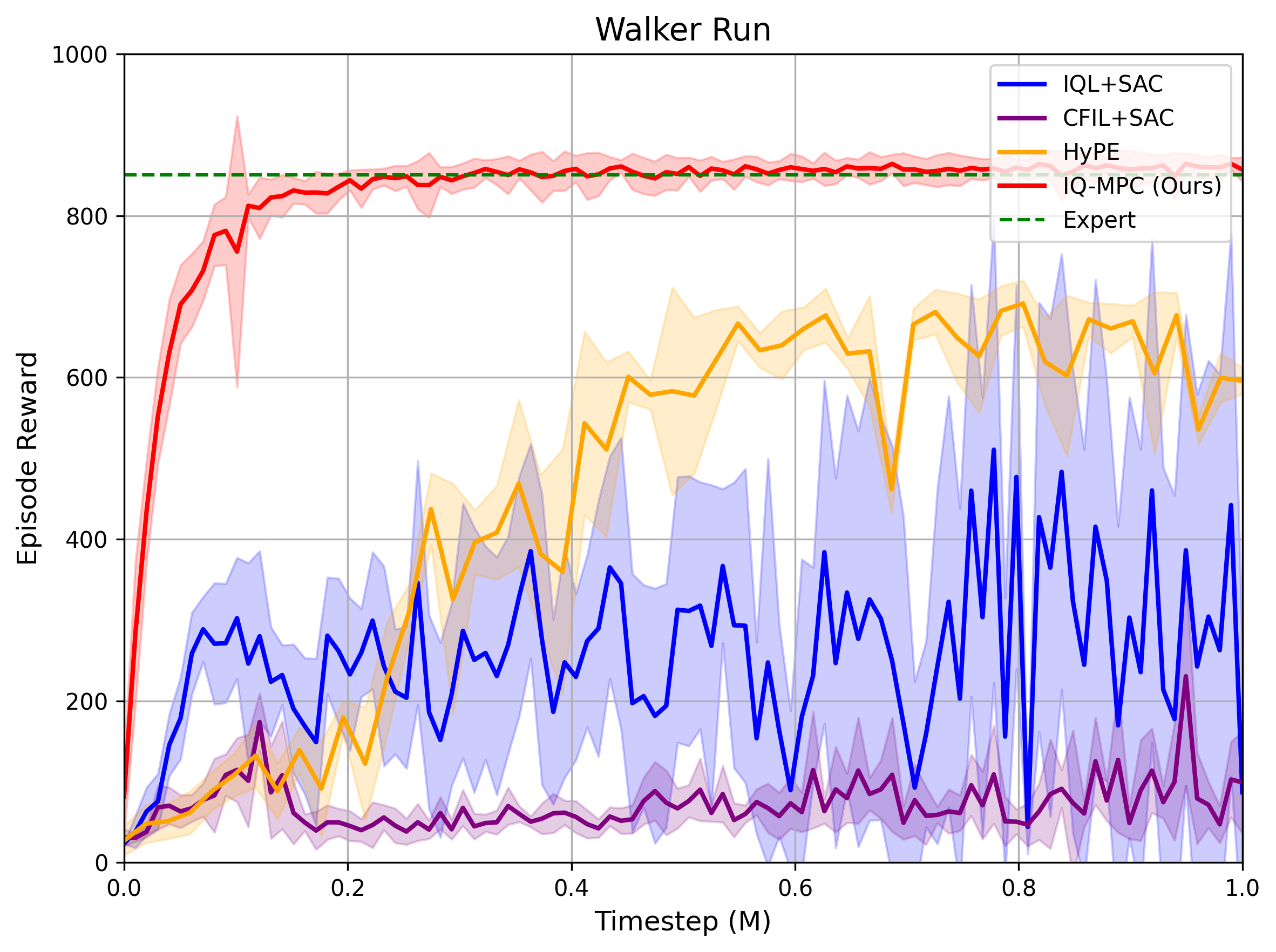}
    \end{minipage}
    \begin{minipage}{0.3\textwidth}
        \centering
        \includegraphics[width=\textwidth]{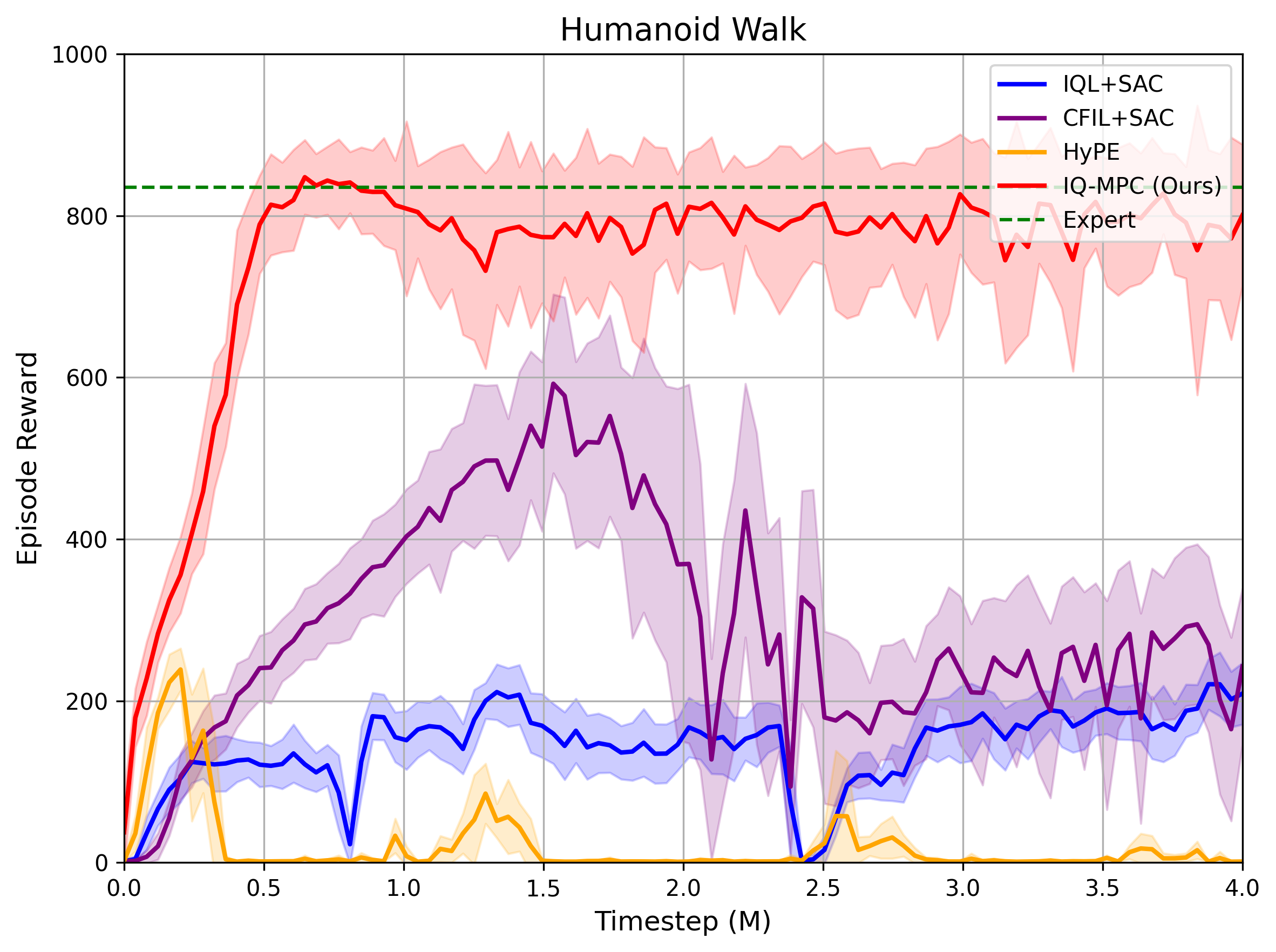}
    \end{minipage}
    \begin{minipage}{0.3\textwidth}
        \centering
        \includegraphics[width=\textwidth]{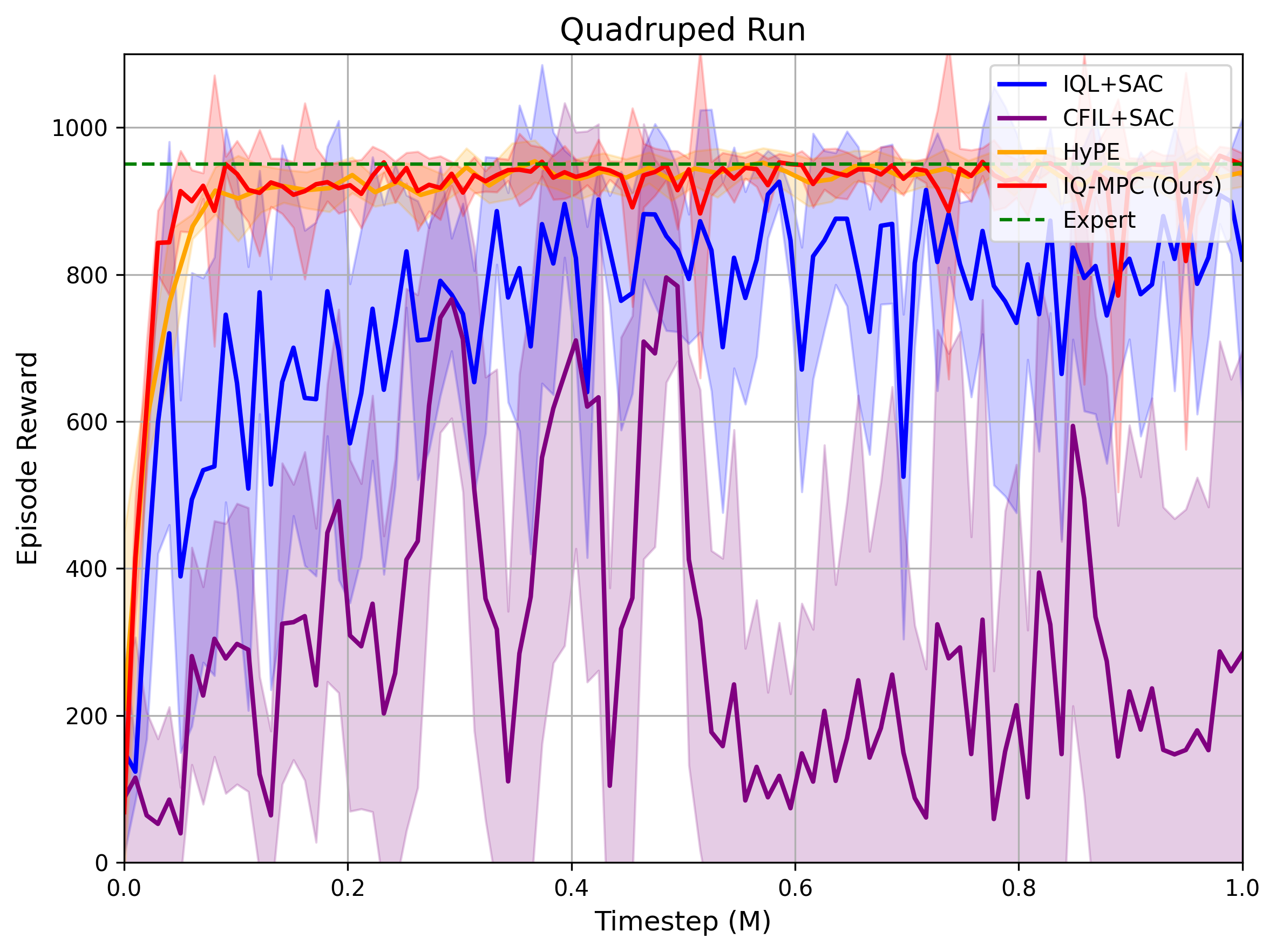}
    \end{minipage}
    
    \vspace{10pt} 
    
    \begin{minipage}{0.3\textwidth}
        \centering
        \includegraphics[width=\textwidth]{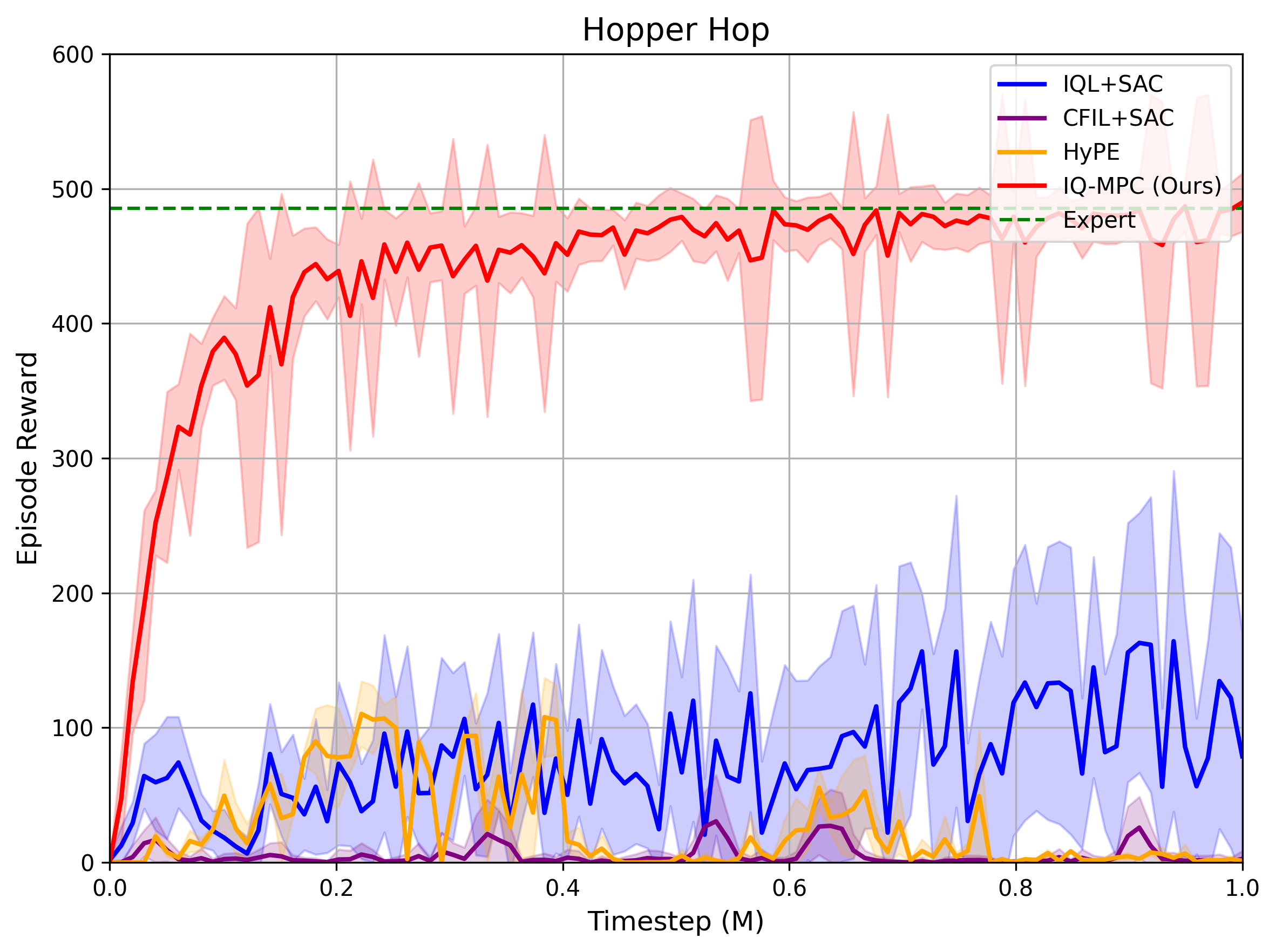}
    \end{minipage}
    \begin{minipage}{0.3\textwidth}
        \centering
        \includegraphics[width=\textwidth]{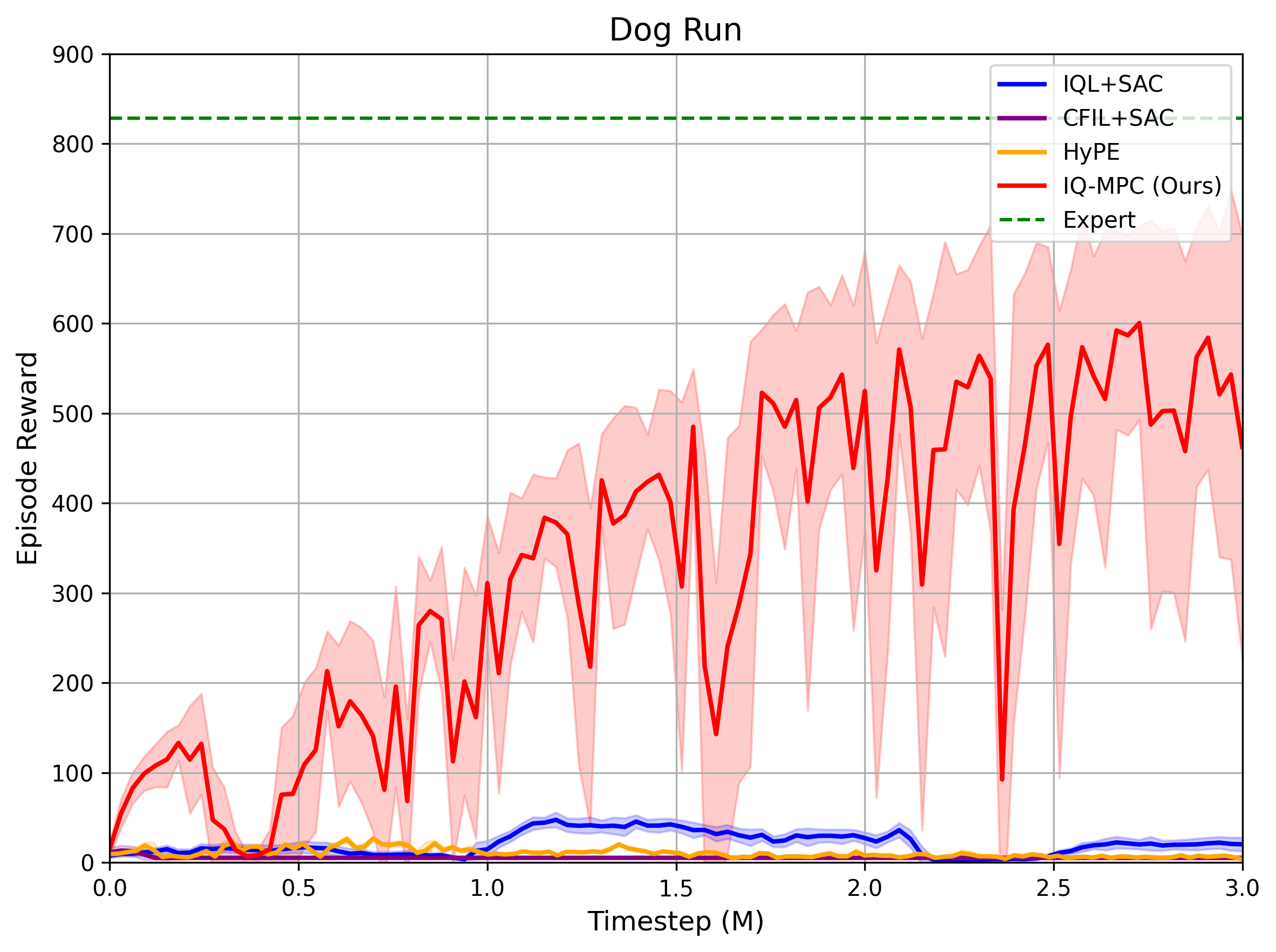}
    \end{minipage}
    \begin{minipage}{0.3\textwidth}
        \centering
        \includegraphics[width=\textwidth]{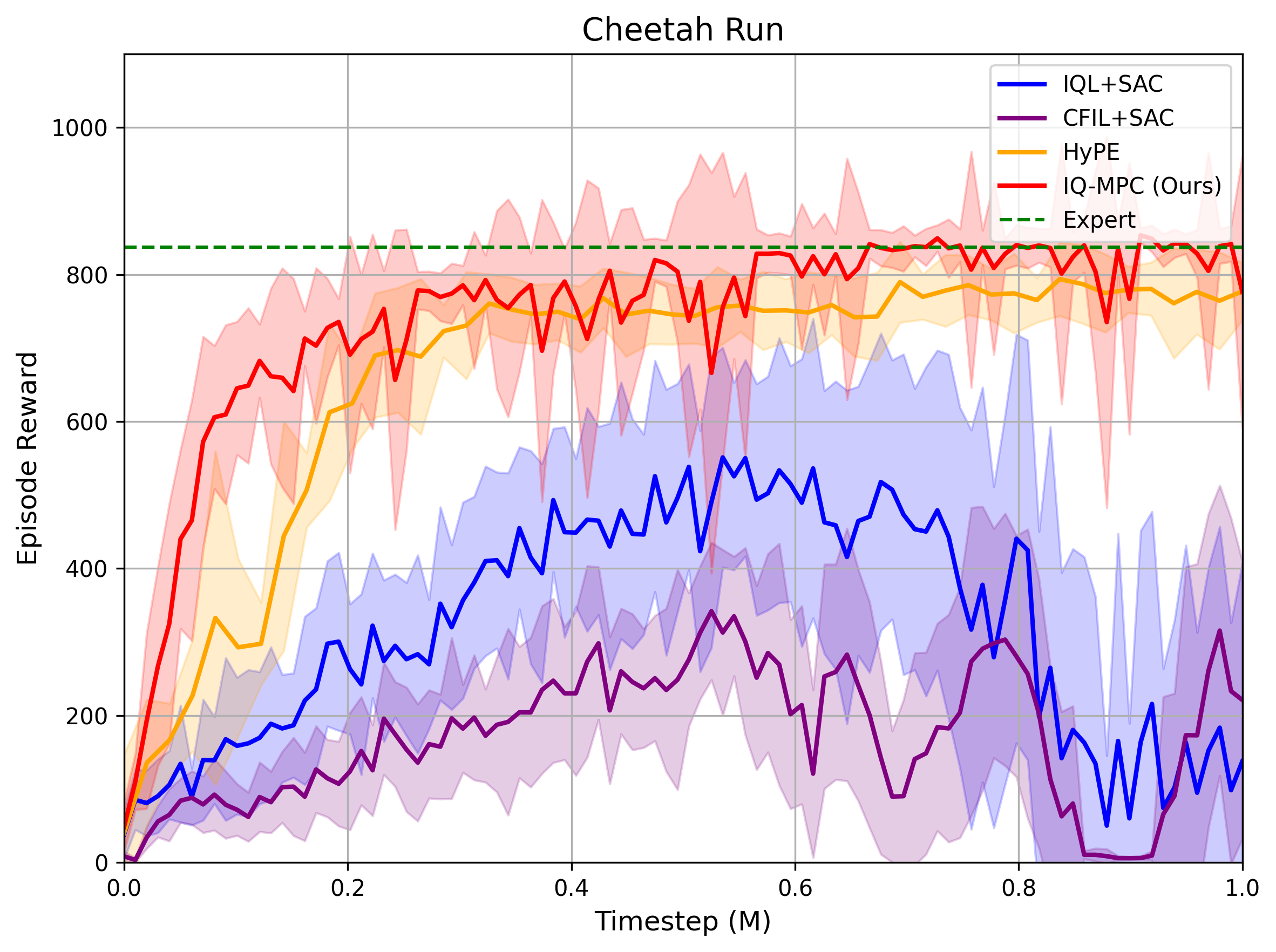}
    \end{minipage}
    \caption{\textbf{Locomotion Results} Our method demonstrates much stabler performance near expert level compared to baseline methods. In the plots, {\color{blue}{blue lines}} refers to the online version of IQL+SAC \citep{garg2021iq}, {\color{orange}{orange lines}} refers to the HyPE method \citep{ren2024hybrid}, {\color{purple}purple lines} refers to the CFIL+SAC \citep{freund2023coupled} baseline and {\color{red}{red lines}} refers to our IQ-MPC model. The {\color{teal}{dotted green lines}} are the mean episode reward for the expert trajectories used during training.}
    \label{fig:locomotion-results}
\end{figure*}

\subsection{Main Results}
\label{sec:main-results}
\subsubsection{State-based Experiments}
\label{sec:state-based}
\paragraph{Locomotion Tasks}
We benchmark our algorithm on DMControl \citep{tunyasuvunakool2020dm_control}, evaluating tasks in both low- and high-dimensional environments. Our method outperforms baselines in performance and training stability. We use 100 expert trajectories for low-dimensional tasks (Hopper, Walker, Quadruped, Cheetah), 500 for Humanoid, and 1000 for Dog (both high-dimensional). Each trajectory contains 500 steps, sampled using trained TD-MPC2 world models \citep{hansen2023td}. Performance is averaged over 3 seeds per task. The results are demonstrated in Figure \ref{fig:locomotion-results}. {Our method is comparable to HyPE in the Quadruped Run and Cheetah Run tasks, while outperforming all other baselines in the remaining tasks.} We also conducted high-dimensional experiments on various tasks in the Dog environment, with results provided in Appendix \ref{sec:addtional-locomotion}.

\begin{figure*}[h]
    \centering
    \begin{minipage}{0.3\textwidth}
        \centering
        \includegraphics[width=\textwidth]{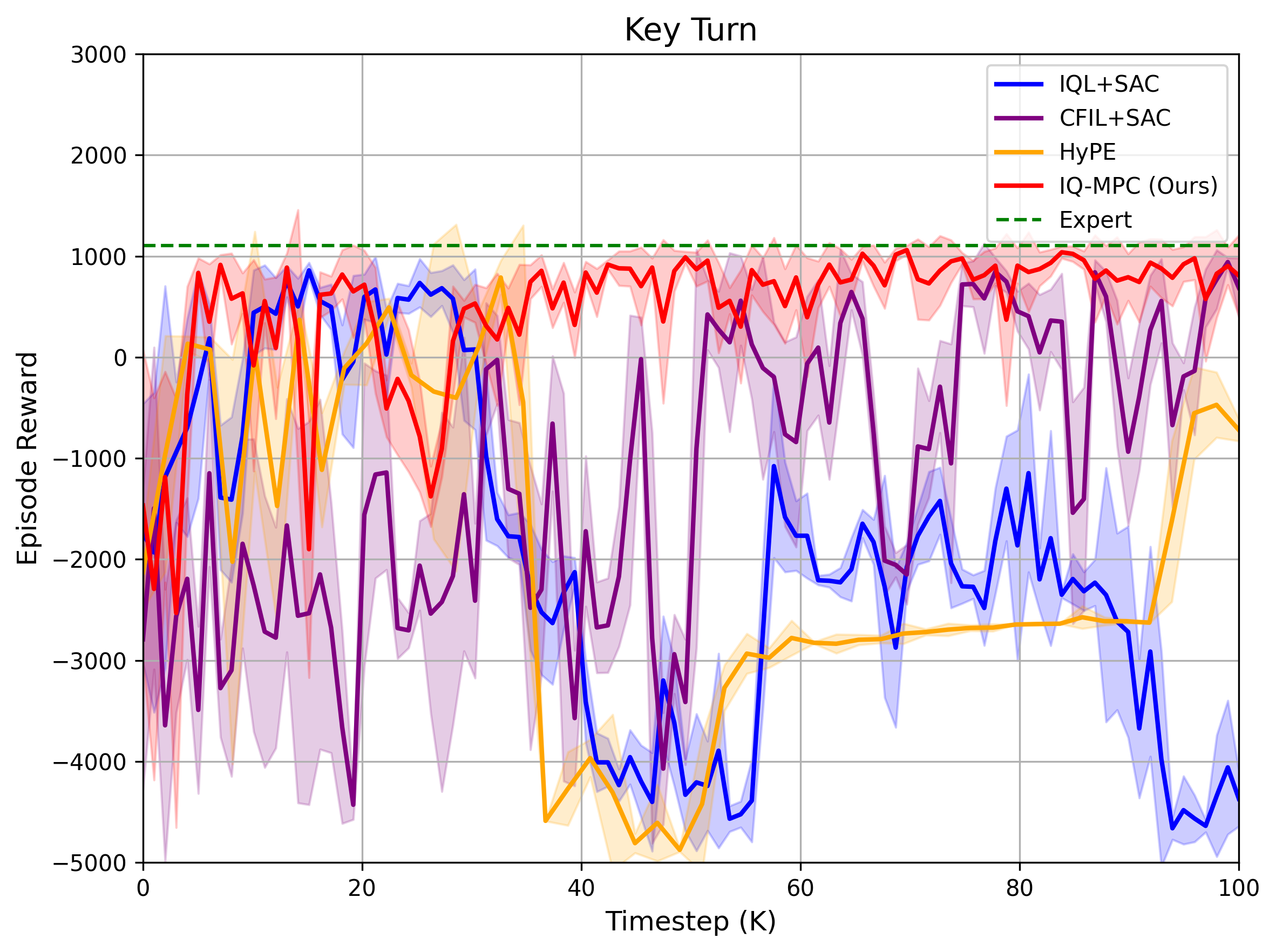}
    \end{minipage}
    \begin{minipage}{0.3\textwidth}
        \centering
        \includegraphics[width=\textwidth]{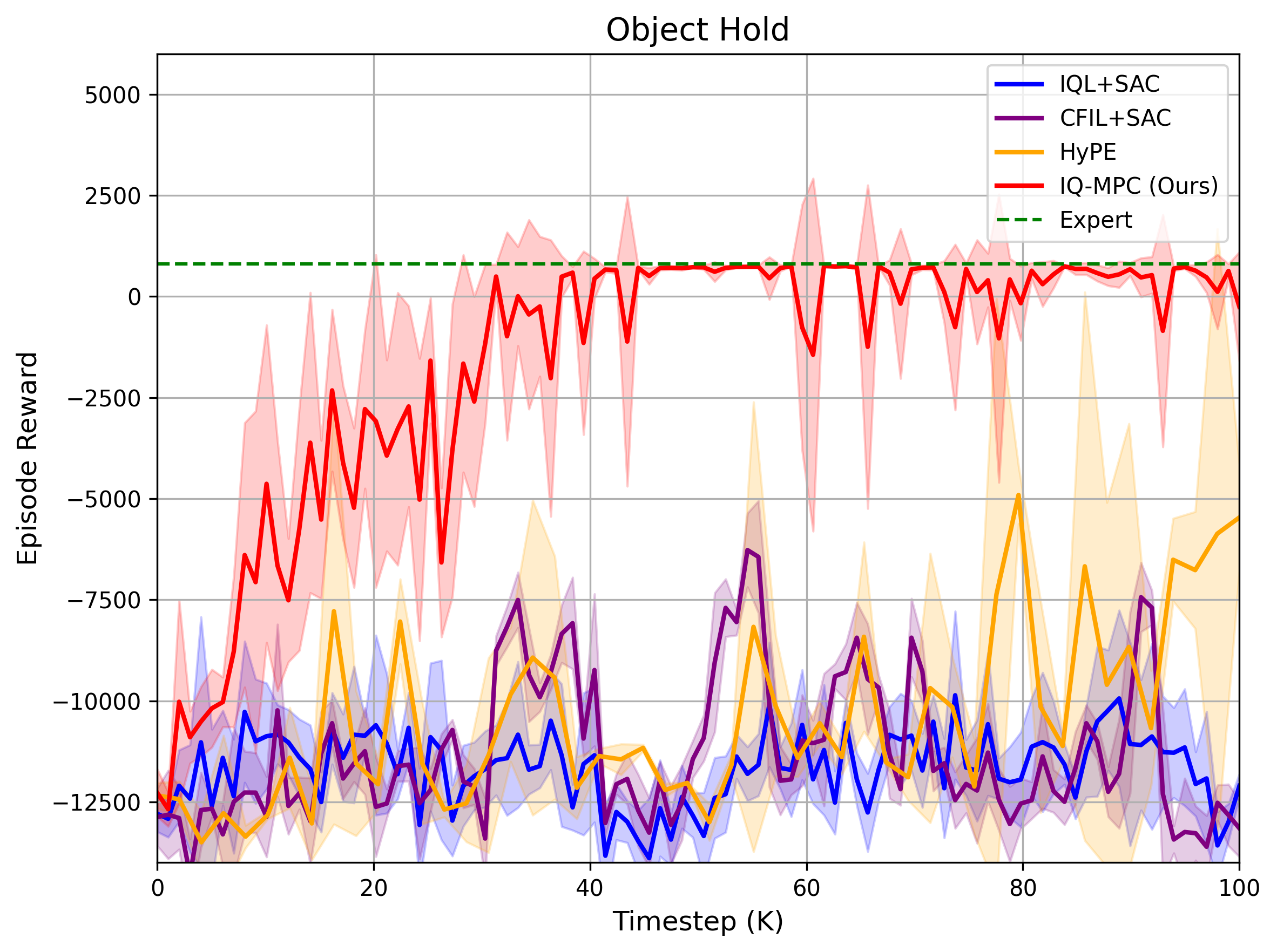}
    \end{minipage}
    \begin{minipage}{0.3\textwidth}
        \centering
        \includegraphics[width=\textwidth]{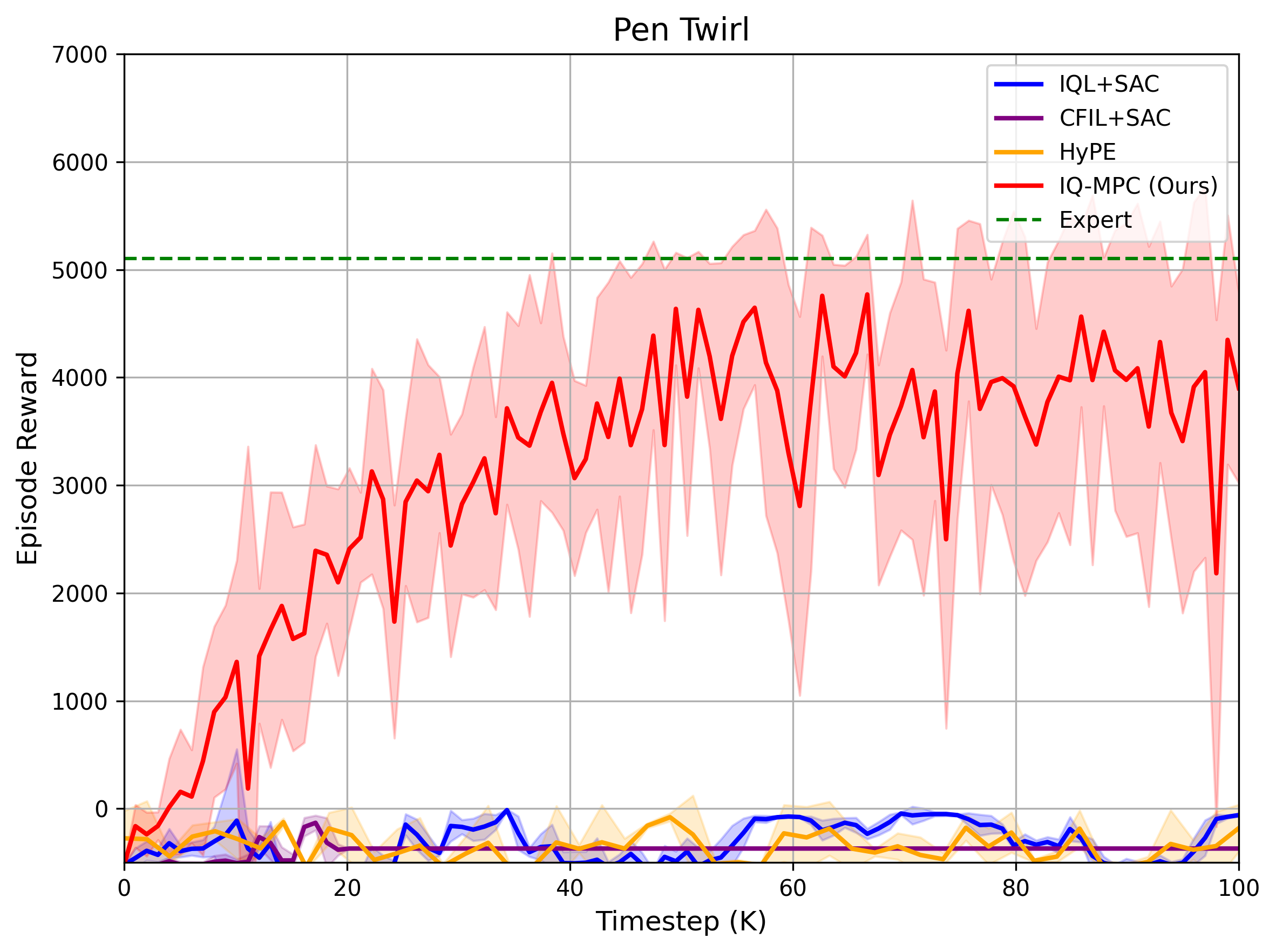}
    \end{minipage}
    \caption{\textbf{Manipulation Results in MyoSuite} Our IQ-MPC shows stable and outperforming results in MyoSuite manipulation experiments with dexterous hands. In the plots, the color settings are the same as those in Figure \ref{fig:locomotion-results}. In the Pen Twirl task, the CFIL+SAC agent is unable to train after 20K time steps. Thus, we interpolate the rest of the time steps with a straight line in the plot.}
    \label{fig:manipulation-results-myosuite}
\end{figure*}

\paragraph{Manipulation Tasks}
We consider manipulation tasks with a dexterous hand from MyoSuite \citep{caggiano2022myosuite} to show the capability and robustness of our IQ-MPC model in high-dimensional and complex dynamics scenarios. We leverage 100 expert trajectories with 100 steps sampled from trained TD-MPC2 for each task. We evaluate the episode reward and success rate of our model along with IQ-Learn+SAC, HyPE, and CFIL+SAC. We show superior empirical performance in three different tasks, including object holding, pen twirling, and key turning. Regarding the results for episode reward, we refer to Figure \ref{fig:manipulation-results-myosuite}. Table \ref{tab:manipulation-success-myosuite} shows the success rate results. We take the mean for 3 seeds regarding the performance for each task. We have conducted additional experiments on ManiSkill2 \citep{gu2023maniskill2}, for which we refer to Appendix \ref{sec:additional-maniskill}.

\begin{table*}[h]
    \centering
    \begin{tabular}{c|cccc}
        \toprule
        \textbf{Method} & \textbf{IQL+SAC} & \textbf{CFIL+SAC} & \textbf{HyPE} & \textbf{IQ-MPC(Ours)} \\
        \midrule
        Key Turn & 0.72$\pm$0.04 & 0.65$\pm$0.08 & 0.55 $\pm$ 0.09 & \textbf{0.87$\pm$0.03}\\
        Object Hold & 0.00 $\pm$ 0.00 & 0.01$\pm$0.01 & 0.13 $\pm$ 0.10 & \textbf{0.96$\pm$0.03}\\
        Pen Twirl & 0.00 $\pm$ 0.00 & 0.00$\pm$0.00 & 0.00 $\pm$ 0.00 & \textbf{0.73$\pm$0.05}\\
        \bottomrule
    \end{tabular}
    \caption{\textbf{Manipulation Success Rate Results in MyoSuite} We evaluate the success rate of IQ-MPC on the Key Turn, Object Hold, and Pen Twirl tasks in MyoSuite. Our IQ-MPC demonstrates strength in handling complex manipulation tasks with dexterous hands and musculoskeletal motor control. We show the results by averaging over 100 trajectories and evaluating over 3 random seeds.}
    \label{tab:manipulation-success-myosuite}
\end{table*}

\subsubsection{Visual Experiments}
We further investigate the capability of handling visual tasks for our IQ-MPC model. We conduct the experiments on locomotion tasks in DMControl with visual observations. We demonstrate that our IQ-MPC model can cope with visual modality inputs by only replacing the encoder with a shallow convolution network and keeping the rest of the model unchanged. We sample the expert data using trained TD-MPC2 models with visual inputs. We take 100 expert trajectories for each task. The expert trajectories contain actions and RGB frame observations. We leverage a modification of IQL+SAC as our baseline. We add the same convolutional encoder as our IQ-MPC for processing visual inputs and keep the rest of the architecture the same. We perform superior to the baseline model in a series of visual experiments in the DMControl environment. We demonstrate the results in Figure \ref{fig:visual-results}.


\begin{figure*}[h]
    \centering
    \begin{minipage}{0.3\textwidth}
        \centering
        \includegraphics[width=\textwidth]{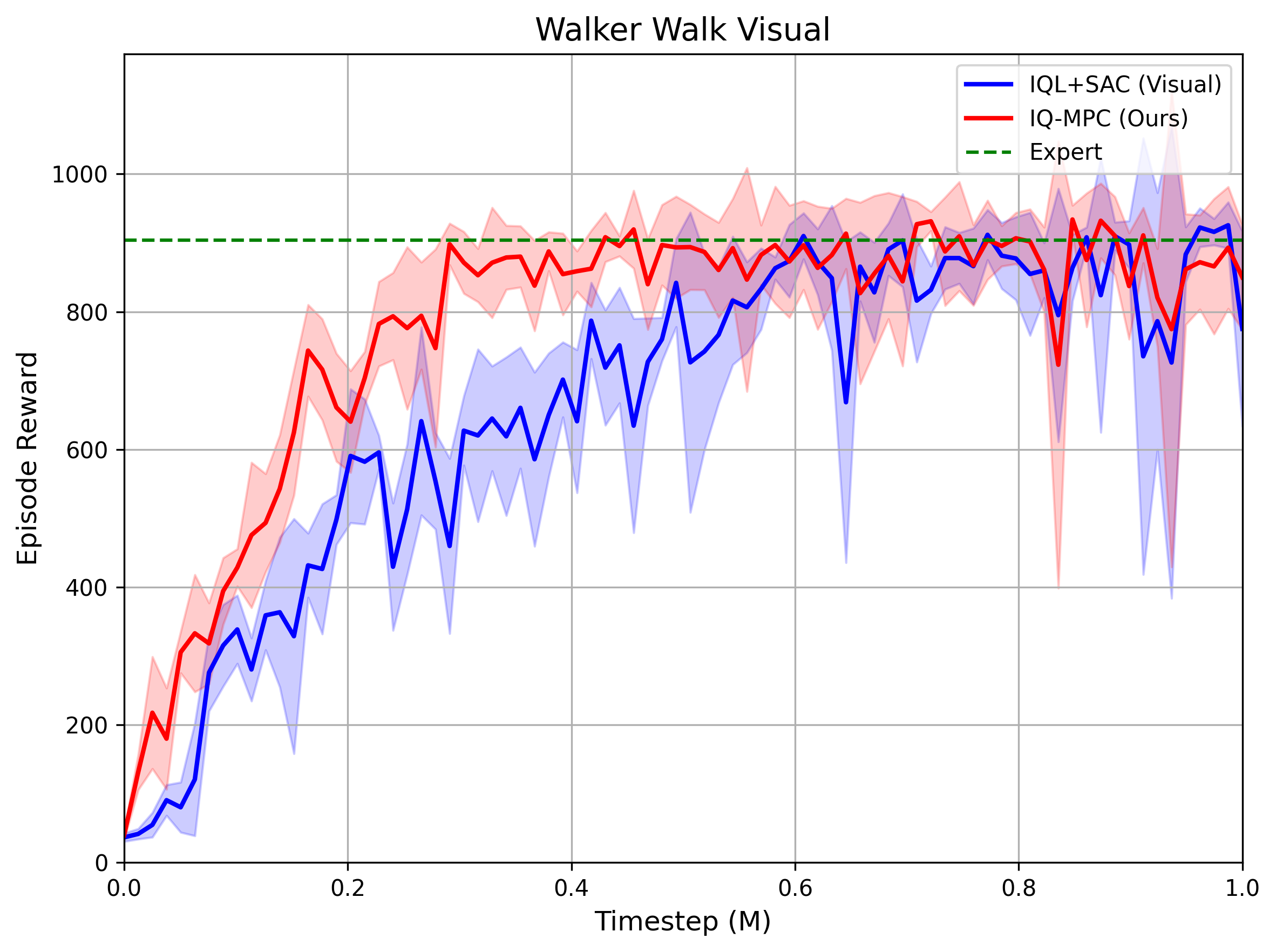}
    \end{minipage}
    \begin{minipage}{0.3\textwidth}
        \centering
        \includegraphics[width=\textwidth]{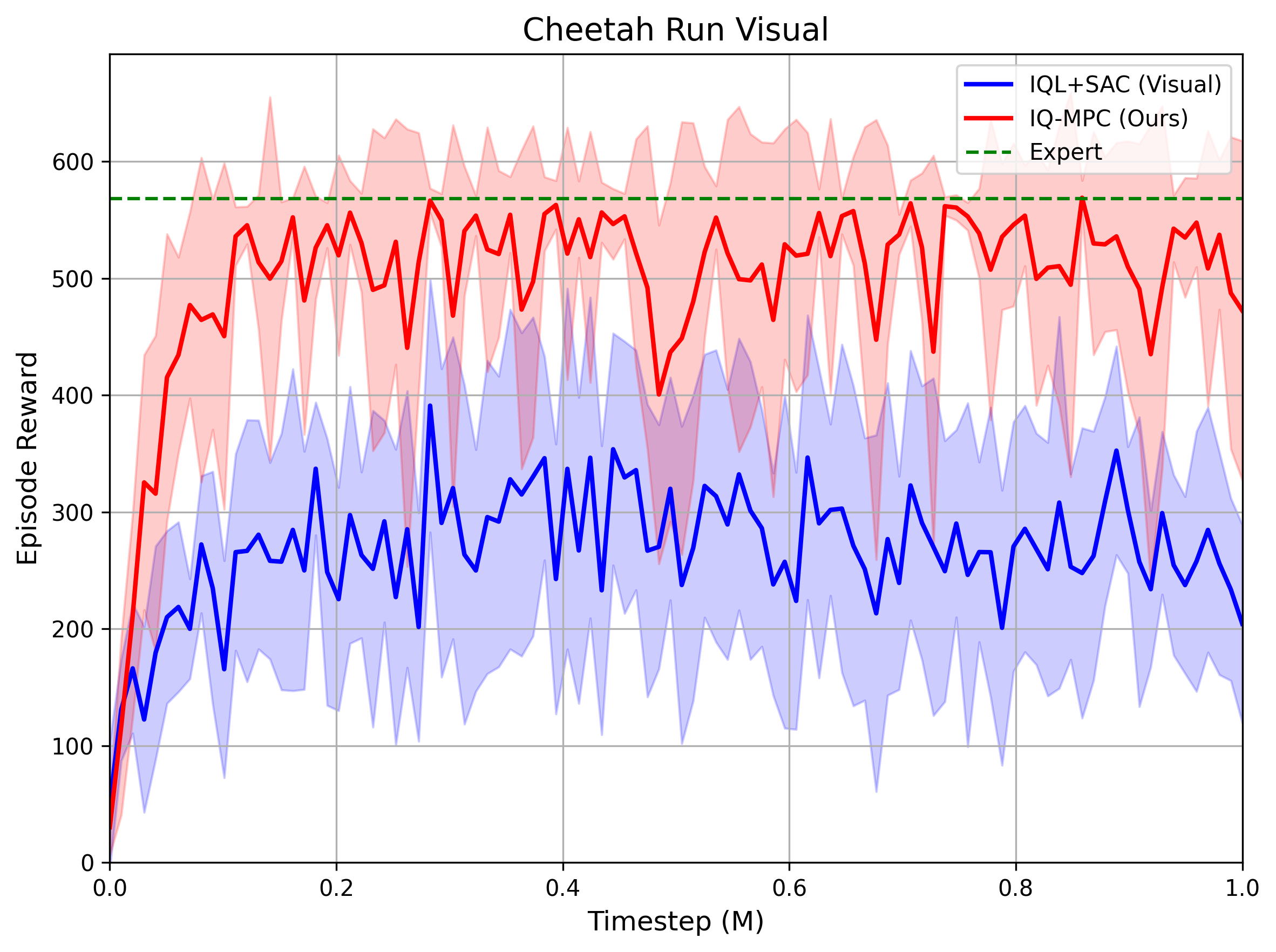}
    \end{minipage}
    \begin{minipage}{0.3\textwidth}
        \centering
        \includegraphics[width=\textwidth]{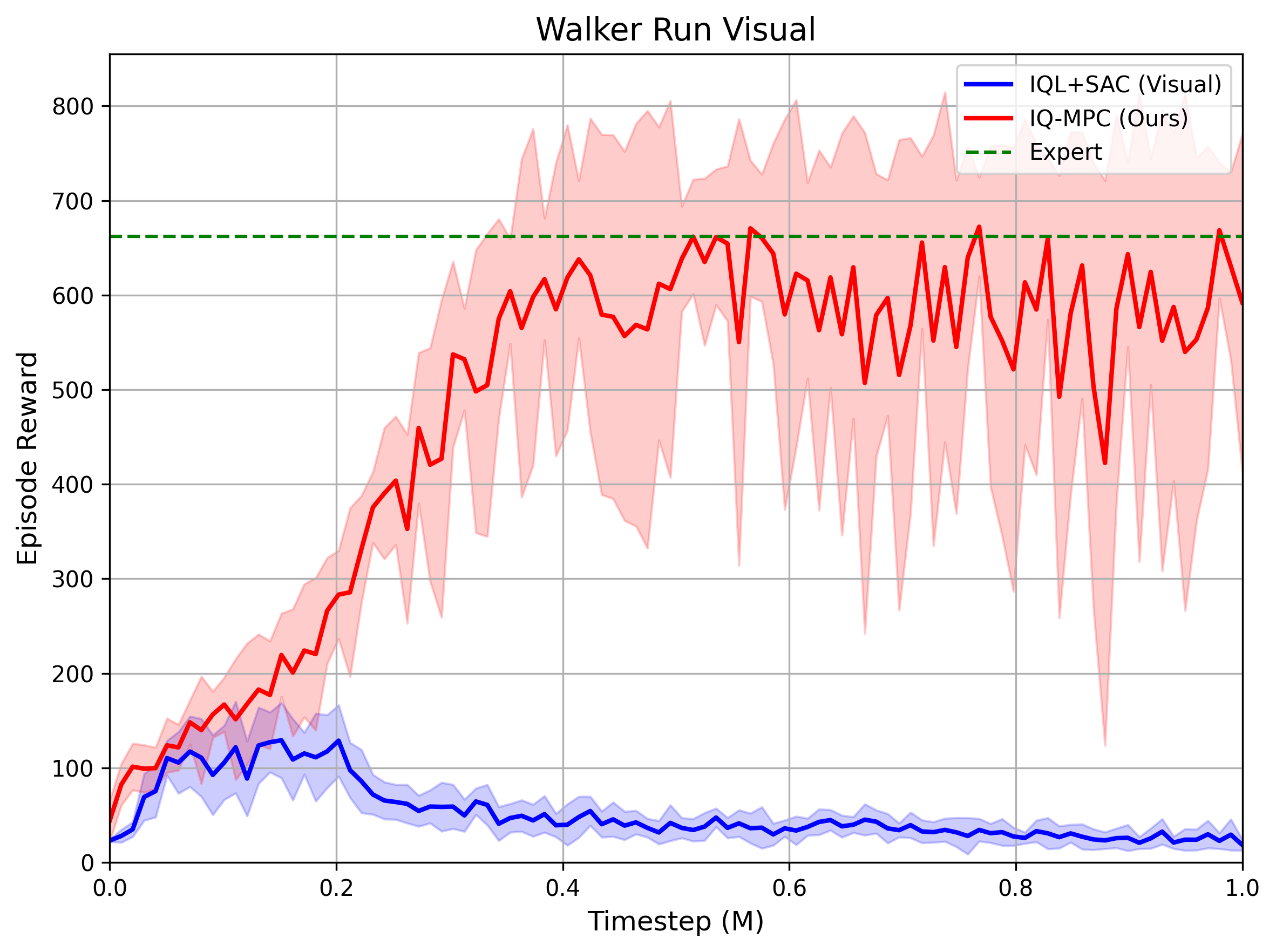}
    \end{minipage}
    \caption{\textbf{Results for Visual Experiments} Our IQ-MPC ({\color{red}red lines}) shows stable and expert-level results in visual observation tasks. In the plots, we denote the IQL+SAC with an additional convolutional encoder as IQL+SAC (Visual) ({\color{blue}blue lines}). Our model outperforms IQL+SAC (Visual) in the Cheetah Run and Walker Run, and it has comparable performance in the Walker Walk task. The expert trajectories used for training are sampled from TD-MPC2 trained on visual observations.}
    \label{fig:visual-results}
\end{figure*}

\subsection{Ablation Studies}
\label{sec:ablation}

In this section, we will show the ablation studies of our model, including the ablation over expert trajectory numbers. Regarding the ablations for objective formulation, gradient penalty selection, {and hyperparameter $\alpha$}, we refer to Appendix \ref{sec:additional-ablation}.

 We ablate over the expert trajectories used for IQ-MPC training. We demonstrate our results with 100, 50, 10, and 5 expert trajectories in the Hopper Hop task and Object Hold task. We show that our world model can still reach expert-level performance with only a small amount of expert demonstrations but with slower convergence. The instability is observed with 5 expert trajectories in the Hopper Hop task. We reveal the empirical results for this ablation in Figure \ref{fig:ablation-traj-num}.

\section{Conclusions and Broader Impact}
We propose an online imitation learning approach that utilizes reward-free world models to address tasks in complex environments. By incorporating latent planning and dynamics learning, our model can have a deeper understanding of intricate environment dynamics. We demonstrate stable, expert-level performance on challenging tasks, including dexterous hand manipulation and high-dimensional locomotion control. In terms of broader impact, our model holds potential for real-world applications in manipulation and locomotion, particularly for tasks that involve visual inputs and complex environment dynamics.
\begin{figure}[H]
    \centering
    \includegraphics[width=0.8\columnwidth]{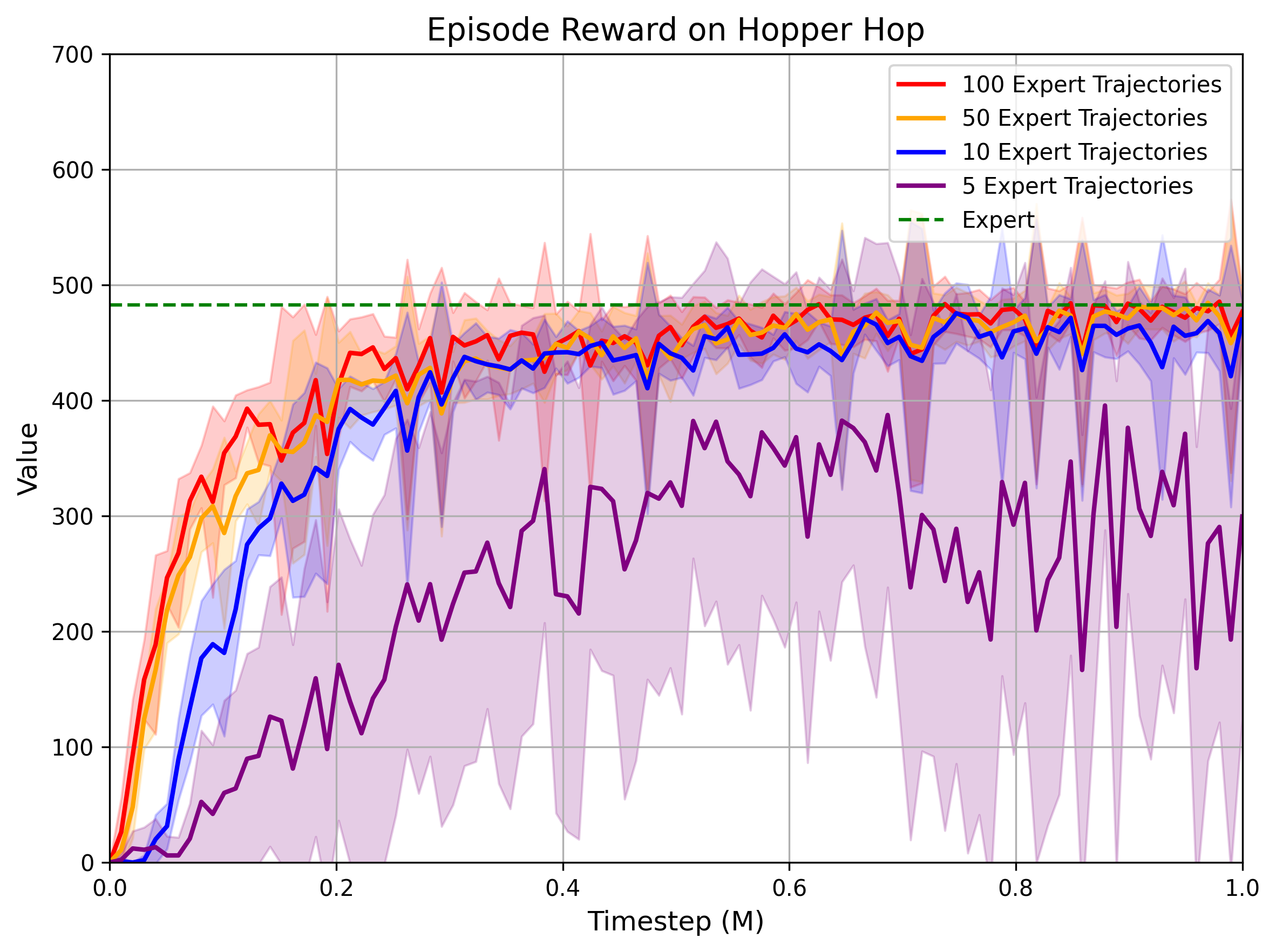}
    \vspace{0.4em} 
    \includegraphics[width=0.8\columnwidth]{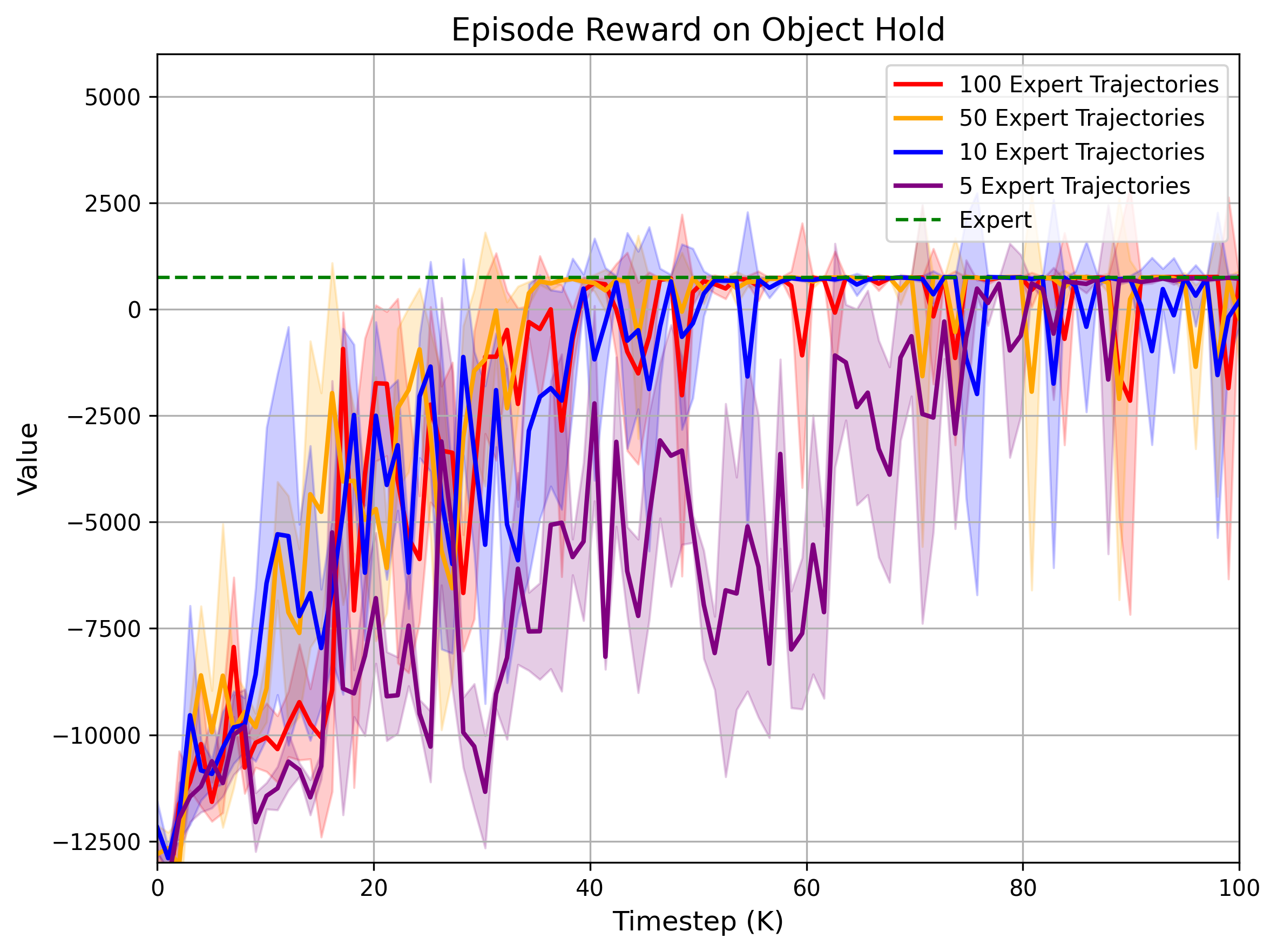}
    \vspace{-10pt}
    \caption{\textbf{Ablation on Expert Trajectory Numbers.} Performance of IQ-MPC with varying numbers of expert trajectories. Stable expert-level performance is achieved with only 10 expert demonstrations for Hopper Hop (top) and 5 for Object Hold (bottom).}
    \label{fig:ablation-traj-num}
\end{figure}
\newpage
\section*{Impact Statement}
This paper presents work whose goal is to advance the field of Machine Learning. There are many potential societal consequences of our work, none of which we feel must be specifically highlighted here.

\bibliography{example_paper}
\bibliographystyle{icml2025}

\newpage
\appendix
\onecolumn
\section{Hyperparameters and Architectural Details}
This section will show the detailed hyperparameters and architectures used in our IQ-MPC model.
\subsection{World Model Architecture}
All of the components are built using MLPs with Layernorm \citep{ba2016layer} and Mish activation functions \citep{misra2019mish}. We leverage Dropout for Q networks. The amount of total learnable parameters for IQ-MPC is 4.3M. We depict the architecture in a Pytorch-like notation:

\begin{lstlisting}
Architecture: IQ-MPC(
  (_encoder): ModuleDict(
    (state): Sequential(
      (0): NormedLinear(in_features=state_dim, out_features=256, bias=True, act=Mish)
      (1): NormedLinear(in_features=256, out_features=512, bias=True, act=SimNorm)
    )
  )
  (_dynamics): Sequential(
    (0): NormedLinear(in_features=512+action_dim, out_features=512, bias=True, act=Mish)
    (1): NormedLinear(in_features=512, out_features=512, bias=True, act=Mish)
    (2): NormedLinear(in_features=512, out_features=512, bias=True, act=SimNorm)
  )
  (_pi): Sequential(
    (0): NormedLinear(in_features=512, out_features=512, bias=True, act=Mish)
    (1): NormedLinear(in_features=512, out_features=512, bias=True, act=Mish)
    (2): Linear(in_features=512, out_features=2*action_dim, bias=True)
  )
  (_Qs): Vectorized ModuleList(
    (0-4): 5 x Sequential(
      (0): NormedLinear(in_features=512+action_dim, out_features=512, bias=True, dropout=0.01, act=Mish)
      (1): NormedLinear(in_features=512, out_features=512, bias=True, act=Mish)
      (2): Linear(in_features=512, out_features=1, bias=True)
    )
  )
  (_target_Qs): Vectorized ModuleList(
    (0-4): 5 x Sequential(
      (0): NormedLinear(in_features=512+action_dim, out_features=512, bias=True, dropout=0.01, act=Mish)
      (1): NormedLinear(in_features=512, out_features=512, bias=True, act=Mish)
      (2): Linear(in_features=512, out_features=1, bias=True)
    )
  )
)
Learnable parameters: 4,274,259
\end{lstlisting}
The exact parameters above represent the situation when the state dimension is 91, and the action dimension is 39.
\newpage
Additionally, we also show the convolutional encoder used in our visual experiments:
\begin{lstlisting}
(_encoder): ModuleDict(
    (rgb): Sequential(
      (0): ShiftAug()
      (1): PixelPreprocess()
      (2): Conv2d(9, 32, kernel_size=(7, 7), stride=(2, 2))
      (3): ReLU(inplace=True)
      (4): Conv2d(32, 32, kernel_size=(5, 5), stride=(2, 2))
      (5): ReLU(inplace=True)
      (6): Conv2d(32, 32, kernel_size=(3, 3), stride=(2, 2))
      (7): ReLU(inplace=True)
      (8): Conv2d(32, 32, kernel_size=(3, 3), stride=(1, 1))
      (9): Flatten(start_dim=1, end_dim=-1)
      (10): SimNorm(dim=8)
    )
)
\end{lstlisting}

\subsection{Hyperparameter Details}
The detailed hyperparameters used in IQ-MPC are as follows:
\begin{itemize}
    \item The batch size during training is 256.
    \item We balance each part of the loss by assigning weights. For inverse soft Q loss, we assign 0.1. For consistency loss, we assign 20. For the policy and gradient penalty, we assign 1 as the weight.
    \item We leverage $\lambda=0.5$ in a horizon.
    \item We apply the same heuristic discount calculation as TD-MPC2 \citep{hansen2023td}, using 5 as the denominator, with a maximum discount of 0.995 and a minimum of 0.95.
    \item We iterate 6 times during MPPI planning.
    \item We utilize 512 samples as the batch size for planning.
    \item We select 64 samples via top-k selection during MPPI iteration.
    \item During planning, 24 of the trajectories are generated by the policy prior $\pi$, while normal distributions generate the rest.
    \item Planning horizon $H=3$.
    \item The temperature coefficient is 0.5.
    \item We set the learning rate of the model to $3e-4$.
    \item The entropy coefficient $\beta=1e-4$.
    \item We found no significant improvement by adding the Wasserstein-1 gradient penalty (Eq.\ref{eqn:grad-pen}) in locomotion tasks. Therefore, we only apply gradient penalty to manipulation tasks.
    \item We use $\alpha=0.5$ for $\chi^2$ divergence $\phi(x)=x-\frac{1}{4\alpha}x^2$.
    \item We use soft update coefficient $\tau=0.01$.
\end{itemize}
\newpage
\section{Training Algorithm}
For completeness, we show the pseudo-code for IQ-MPC training in Algorithm \ref{alg:training}.
\begin{algorithm}[H]
\caption{~~IQ-MPC {(\emph{training})}}
\label{alg:training}
    \begin{algorithmic}
    \REQUIRE $\theta, \theta^{-}$: randomly initialized network parameters\\
    ~~~~~~~~~~$\eta, \tau, \lambda, \mathcal{B}_\pi, \mathcal{B}_E$: learning rate, soft update coefficient, horizon discount coefficient, behavioral buffer, expert buffer
    \FOR{training steps}
    \STATE \emph{// Collect episode with IQ-MPC from $\mathbf{s}_{0} \sim p_{0}$:}
    \FOR{step $t=0...T$}
    \STATE Compute $\mathbf{a}_{t}$ with $\pi_{\theta}(\cdot | h_{\theta}(\mathbf{s}_{t}))$ using Algorithm \ref{alg:inference}\hfill{$\vartriangleleft$ \emph{Planning with IQ-MPC}}
    \STATE $(\mathbf{s}'_{t},r_{t}) \sim \text{env.step}(\mathbf{a}_t)$
    \STATE $\mathcal{B}_\pi \leftarrow \mathcal{B}_\pi \cup (\mathbf{s}_{t}, \mathbf{a}_{t}, \mathbf{s}'_{t})$\hfill{$\vartriangleleft$ \emph{Add to behavioral buffer}}
    \STATE $\mathbf{s}_{t+1}\leftarrow\mathbf{s}'_t$
    \ENDFOR
    \STATE { \emph{// Update reward-free world model using collected data in $\mathcal{B}_\pi$ and $\mathcal{B}_E$:}}
    \FOR{num updates per step}
    \STATE $(\mathbf{s}_{t}, \mathbf{a}_{t}, r_{t}, \mathbf{s}'_t)_{0:H} \sim \mathcal{B}_\pi\cup\mathcal{B}_E$\hfill{$\vartriangleleft$ \emph{Combine behavioral and expert batch}}
    \STATE $\mathbf{z}_{0} = h_{\theta}(\mathbf{s}_{0})$\hfill{$\vartriangleleft$ \emph{Encode first observation}}
    \STATE { \emph{// Unroll for horizon $H$}}
    \FOR{$t = 0...H$}
    \STATE $\mathbf{z}_{t+1} = d_{\theta}(\mathbf{z}_{t}, \mathbf{a}_{t})$
    \STATE $\hat q_t = Q(\mathbf{z}_t,\mathbf{a}_t)$
   
    \ENDFOR
     \STATE Compute critic and consistency loss $\mathcal{L}(\mathbf{z}_{0:H}, \hat{q}_{0:H}, h(\mathbf{s}'_{0:H}), \lambda)$\hfill{$\vartriangleleft$ \emph{Equation \ref{eqn:critic-obj}}}
     \STATE Compute policy prior loss $\mathcal{L}_\pi(\mathbf{z}_{0:H},\lambda)$\hfill{$\vartriangleleft$ \emph{Equation \ref{eqn:policy-learning}}}
     \IF{use gradient penalty}
     \STATE Compute gradient penalty $\mathcal{L}_{pen}(\mathbf{z}_{0:H}, \mathbf{a}_{0:H}, \lambda)$\hfill{$\vartriangleleft$ \emph{Equation \ref{eqn:grad-pen}}}
     \ELSE
     \STATE $\mathcal{L}_{pen}=0$
     \ENDIF
    \STATE $\theta \leftarrow \theta - \frac{1}{H} \eta \nabla_{\theta} (\mathcal{L} + \mathcal{L}_\pi + \mathcal{L}_{pen})$\hfill{$\vartriangleleft$ \emph{Update online network}}
    \STATE $\theta^{-} \leftarrow (1 - \tau) \theta^{-} + \tau \theta$\hfill{$\vartriangleleft$ \emph{Update target network}}
    \ENDFOR
    \ENDFOR
    \end{algorithmic}
\end{algorithm}
\newpage
\section{Task Visualizations}
We visualize each task using the random initialization state of an episode. Regarding the locomotion tasks in DMControl, we show them in Figure \ref{fig:locomotion-vis}. Figure \ref{fig:myosuite-vis} shows the visualizations of manipulation tasks with dexterous hands in MyoSuite.

\begin{figure}[H]
    \centering
    \begin{minipage}{0.3\textwidth}
        \centering
        \includegraphics[width=0.8\textwidth]{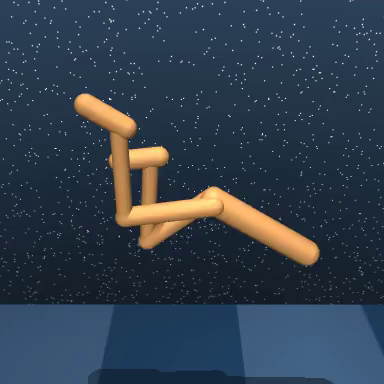}
    \end{minipage}
    \begin{minipage}{0.3\textwidth}
        \centering
        \includegraphics[width=0.8\textwidth]{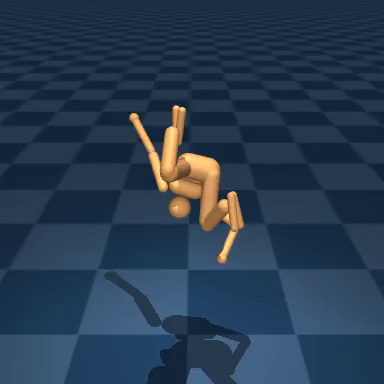}
    \end{minipage}
    \begin{minipage}{0.3\textwidth}
        \centering
        \includegraphics[width=0.8\textwidth]{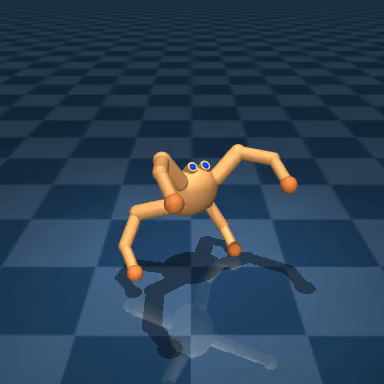}
    \end{minipage}
    
    \vspace{10pt} 
    
    \begin{minipage}{0.3\textwidth}
        \centering
        \includegraphics[width=0.8\textwidth]{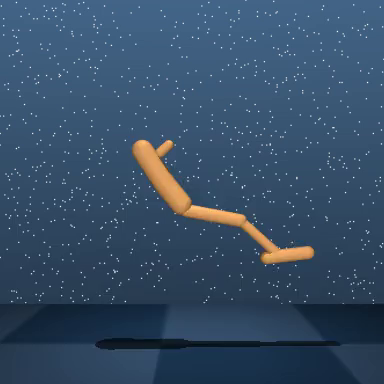}
    \end{minipage}
    \begin{minipage}{0.3\textwidth}
        \centering
        \includegraphics[width=0.8\textwidth]{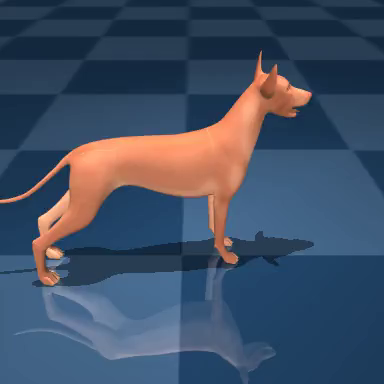}
    \end{minipage}
    \begin{minipage}{0.3\textwidth}
        \centering
        \includegraphics[width=0.8\textwidth]{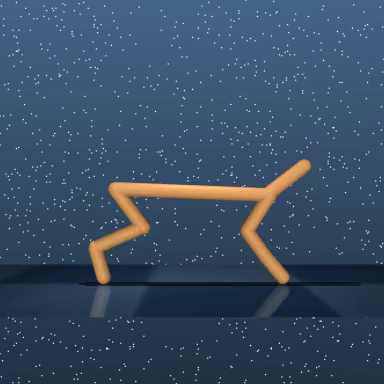}
    \end{minipage}
    
    \caption{\textbf{Locomotion Visualizations} The visualizations for DMControl environments, including Hopper, Cheetah, Walker, Quadruped, Humanoid, and Dog.}
    \label{fig:locomotion-vis}
\end{figure}

\begin{figure}[H]
    \centering
    \begin{minipage}{0.3\textwidth}
        \centering
        \includegraphics[width=0.8\textwidth]{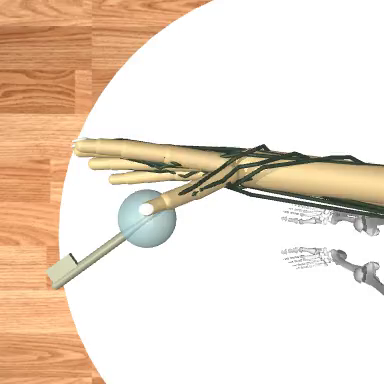}
    \end{minipage}
    \begin{minipage}{0.3\textwidth}
        \centering
        \includegraphics[width=0.8\textwidth]{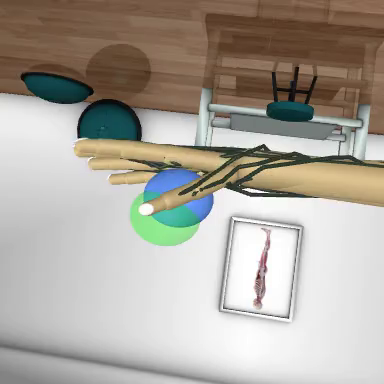}
    \end{minipage}
    \begin{minipage}{0.3\textwidth}
        \centering
        \includegraphics[width=0.8\textwidth]{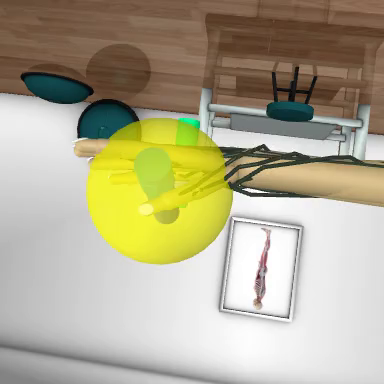}
    \end{minipage}
    
    \caption{\textbf{Manipulation Visualizations with Dexterous Hands} The visualizations for MyoSuite tasks, including Key Turn, Object Hold, and Pen Twirl.}
    \label{fig:myosuite-vis}
\end{figure}

\begin{figure}[H]
    \centering
    \begin{minipage}{0.4\textwidth}
        \centering
        \includegraphics[width=0.8\textwidth]{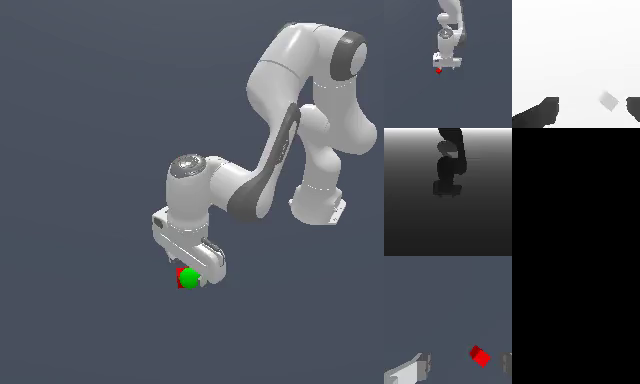}
    \end{minipage}
    \begin{minipage}{0.4\textwidth}
        \centering
        \includegraphics[width=0.8\textwidth]{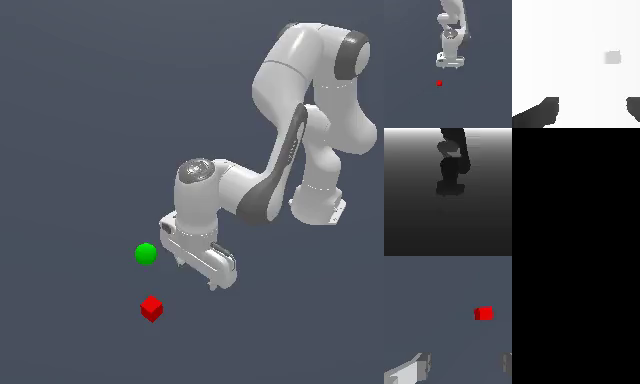}
    \end{minipage}
    \caption{\textbf{Manipulation Visualizations with Robot Arms} The visualizations for ManiSkill2 tasks, including Pick Cube and Lift Cube.}
    \label{fig:maniskill-vis}
\end{figure}

\section{Environment and Task Details}
\label{sec:env-details}
\subsection{Locomotion Environments}
We experiment on 6 locomotion environments in DMControl. The details of the corresponding environments are shown in Table \ref{tab:dmc-spec}. Regarding the visual inverse RL tasks, we take RGB image observations with the shape of $64\times 64\times 9$ for inputs. Each observation consists of 3 RGB frames.
\begin{table}[H]
    \centering
    \begin{tabular}{c|ccc}
        \toprule
        \textbf{Environment} & \textbf{Observation Dimension} & \textbf{Action Dimension} & \textbf{High-dimensional?}\\
        \midrule
        Hopper & 15 & 4 & No\\
        Cheetah & 17 & 6 & No \\
        Quadruped & 78 & 12 & No \\
        Walker & 24 & 6 & No\\
        \midrule
        Humanoid & 67 & 24 & Yes\\
        Dog & 223 & 38 & Yes\\
        \bottomrule
    \end{tabular}
    \caption{\textbf{Environment Details for State-based Experiments in DMControl.} We show the environment details for experiments on DMControl with state-based observations. High-dimensional tasks have higher hard levels compared to normal tasks for imitation learning.}
    \label{tab:dmc-spec}
\end{table}
\subsection{Manipulation Environment}
We experiment on 5 manipulation tasks in ManiSkill2 and MyoSuite. Among these tasks, 2 of them are in ManiSkill2, for which we describe the task details in Table \ref{tab:mani-spec}, and 3 of them are in MyoSuite, for which we describe the task details in Table \ref{tab:myosuite-spec}.

\begin{table}[H]
    \centering
    \begin{tabular}{c|cc}
        \toprule
        \textbf{Task} & \textbf{Observation Dimension} & \textbf{Action Dimension}\\
        \midrule
        Lift Cube & 42 & 4 \\
        Pick Cube & 51 & 4 \\
        \bottomrule
    \end{tabular}
    \caption{\textbf{Task Details for Experiments in ManiSkill2.} We show the environment details for experiments on ManiSkill2. The ManiSkill2 benchmark is built for large-scale robot learning and features extensive randomization and diverse task variations.}
    \label{tab:mani-spec}
\end{table}

\begin{table}[H]
    \centering
    \begin{tabular}{c|cc}
        \toprule
        \textbf{Task} & \textbf{Observation Dimension} & \textbf{Action Dimension}\\
        \midrule
        Object Hold & 91 & 39 \\
        Pen Twirl & 83 & 39 \\
        Key Turn & 93 & 39 \\
        \bottomrule
    \end{tabular}
    \caption{\textbf{Task Details for Experiments in MyoSuite.} We show the environment details for experiments on MyoSuite. The MyoSuite benchmark is designed for physiologically accurate, high-dimensional musculoskeletal motor control, featuring highly complex object manipulation using a dexterous hand.}
    \label{tab:myosuite-spec}
\end{table}

\newpage
\section{Additional Experiments}
\subsection{Additional High-dimensional Locomotion Experiments}
\label{sec:addtional-locomotion}
To show the robustness of our model in high-dimensional tasks, we conduct locomotion experiments on the Dog environment with different tasks such as standing, trotting, and walking, in addition to the running task in Section \ref{sec:state-based}. The dog environment is a relatively complex environment due to its high-dimensional observation and action spaces. We leverage 500 expert trajectories sampled from trained TD-MPC2 for each experiment. We show the results in Figure \ref{fig:locomotion-high-dim}.

\begin{figure}[H]
    \centering
    \begin{minipage}{0.3\textwidth}
        \centering
        \includegraphics[width=\textwidth]{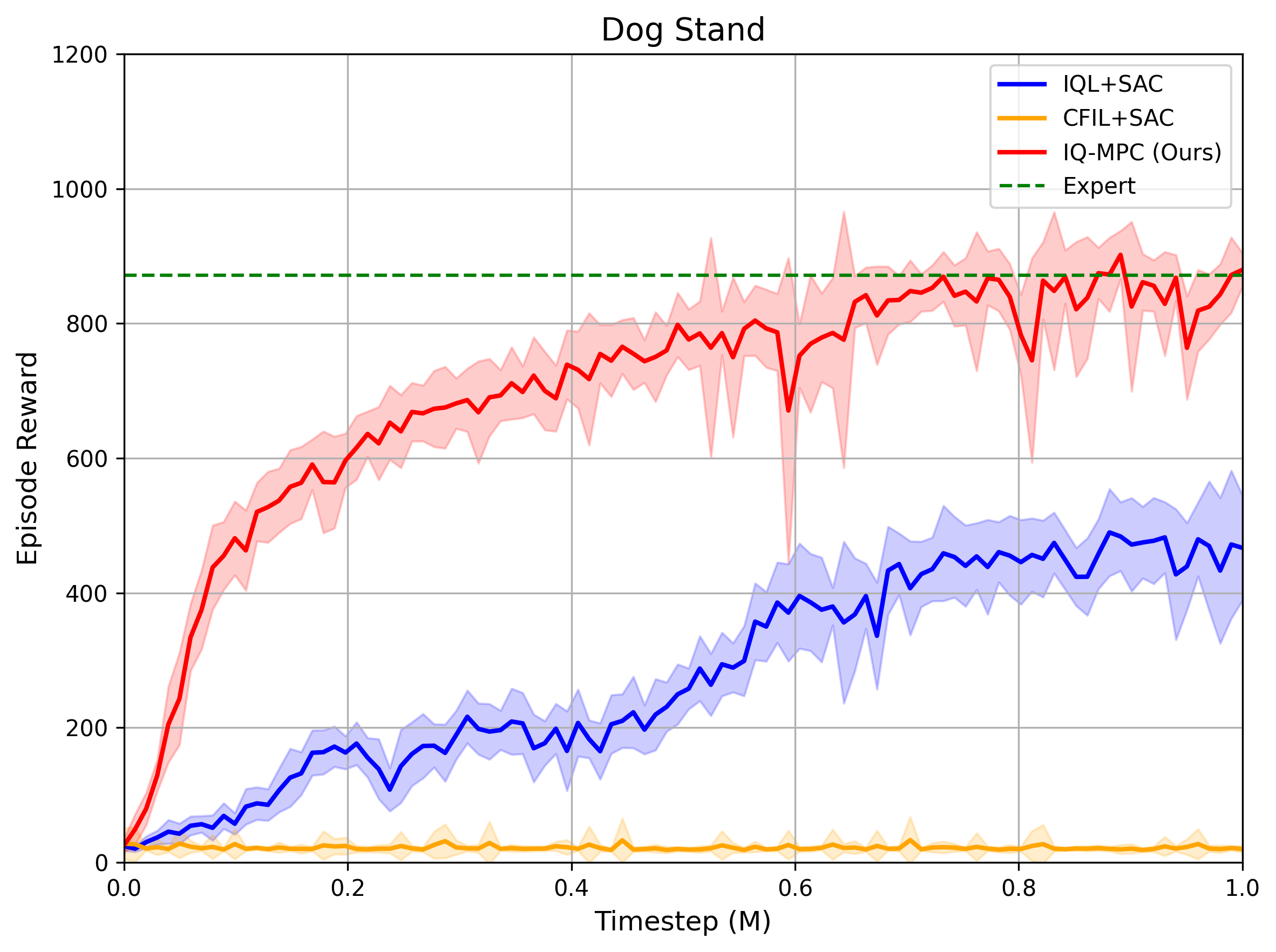}
    \end{minipage}
    \begin{minipage}{0.3\textwidth}
        \centering
        \includegraphics[width=\textwidth]{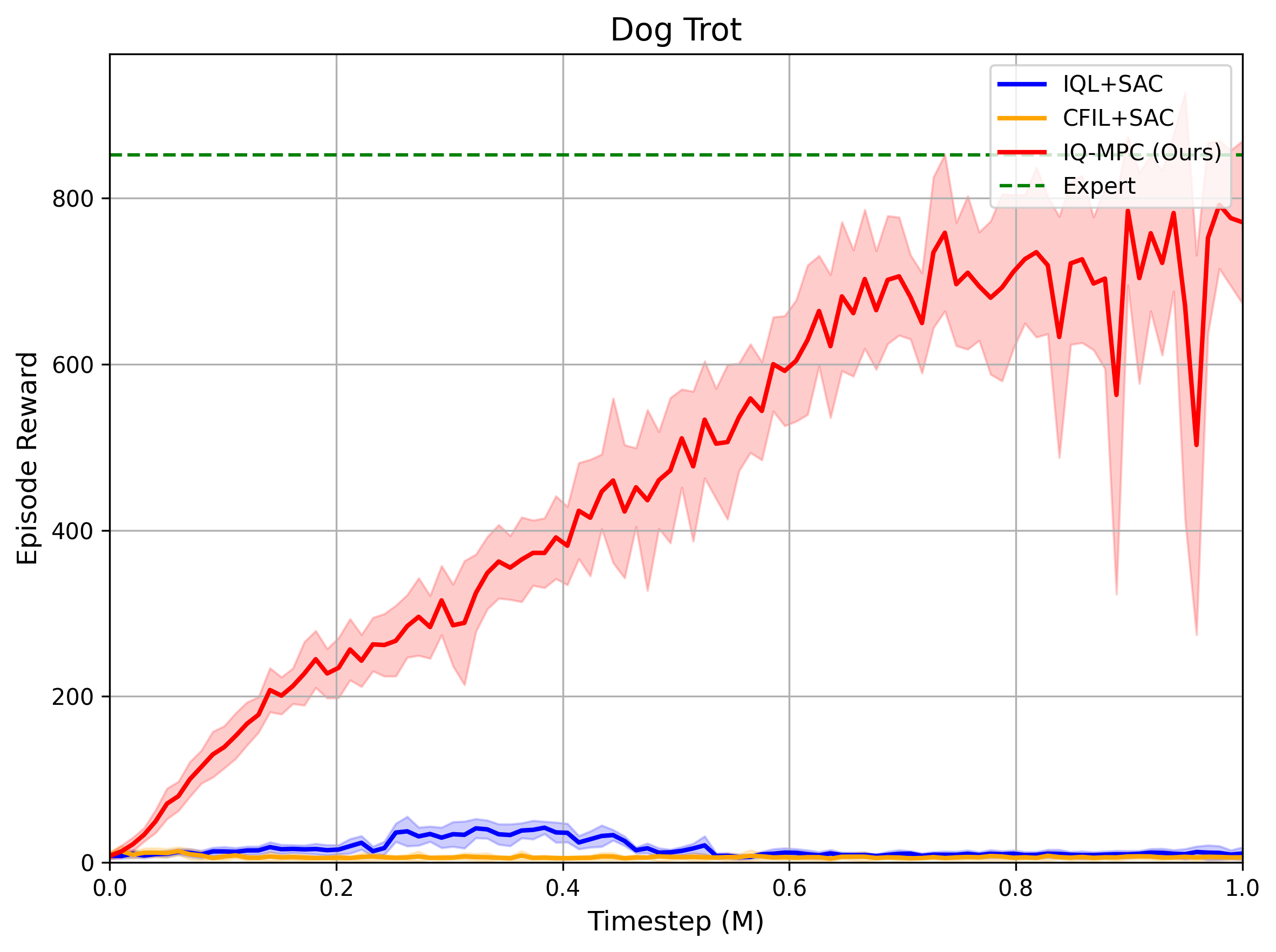}
    \end{minipage}
    \begin{minipage}{0.3\textwidth}
        \centering
        \includegraphics[width=\textwidth]{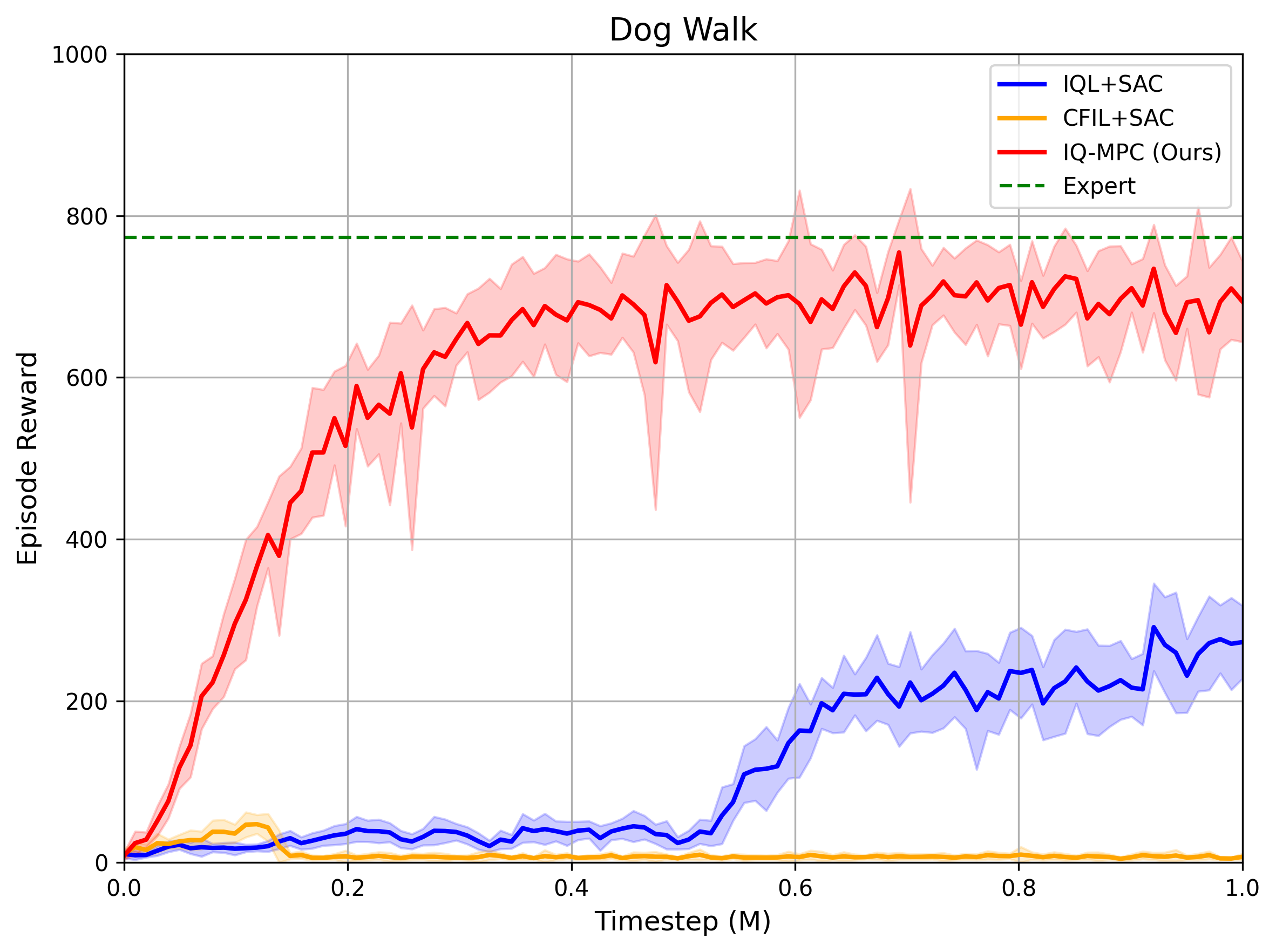}
    \end{minipage}
    \caption{\textbf{Additional High-dimensional Locomotion Experiments} Our IQ-MPC shows stable and expert-level performance on different tasks in the Dog environment, which demonstrates our model's capability in handling high-dimensional tasks. In the plots, the {\color{blue}blue lines} and {\color{orange}orange lines} represent the results from IQL+SAC \citep{garg2021iq} and CFIL+SAC \citep{freund2023coupled}, respectively, while the {\color{red}red lines} represent the results from our IQ-MPC.}
    \label{fig:locomotion-high-dim}
\end{figure}

\subsection{Additional Manipulation Experiments}
\label{sec:additional-maniskill}
We also evaluate our method on simpler manipulation tasks in ManiSkill2 \citep{gu2023maniskill2}. We show stable and comparable results in the pick cube task and lift cube task. IQL+SAC \citep{garg2021iq} also performs relatively well in these simple settings. Figure \ref{fig:manipulation-results-maniskill} shows the episode rewards results in ManiSkill2 tasks, and Table \ref{tab:manipulation-success-maniskill} demonstrates the success rate of each method.
\begin{figure}[H]
    \centering
    \begin{minipage}{0.5\textwidth}
        \centering
        \includegraphics[width=0.8\textwidth]{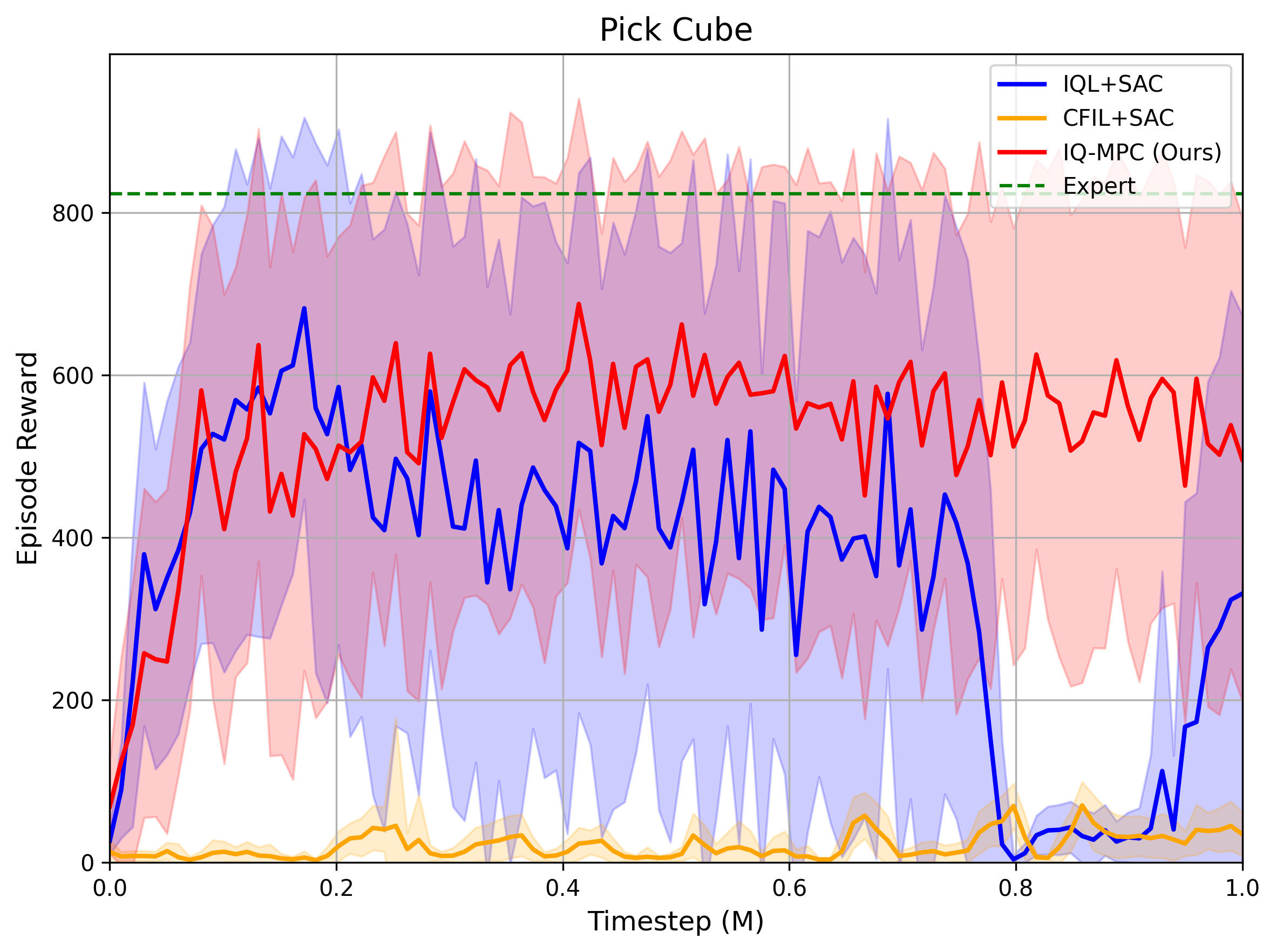}
    \end{minipage}\hfill
    \begin{minipage}{0.5\textwidth}
        \centering
        \includegraphics[width=0.8\textwidth]{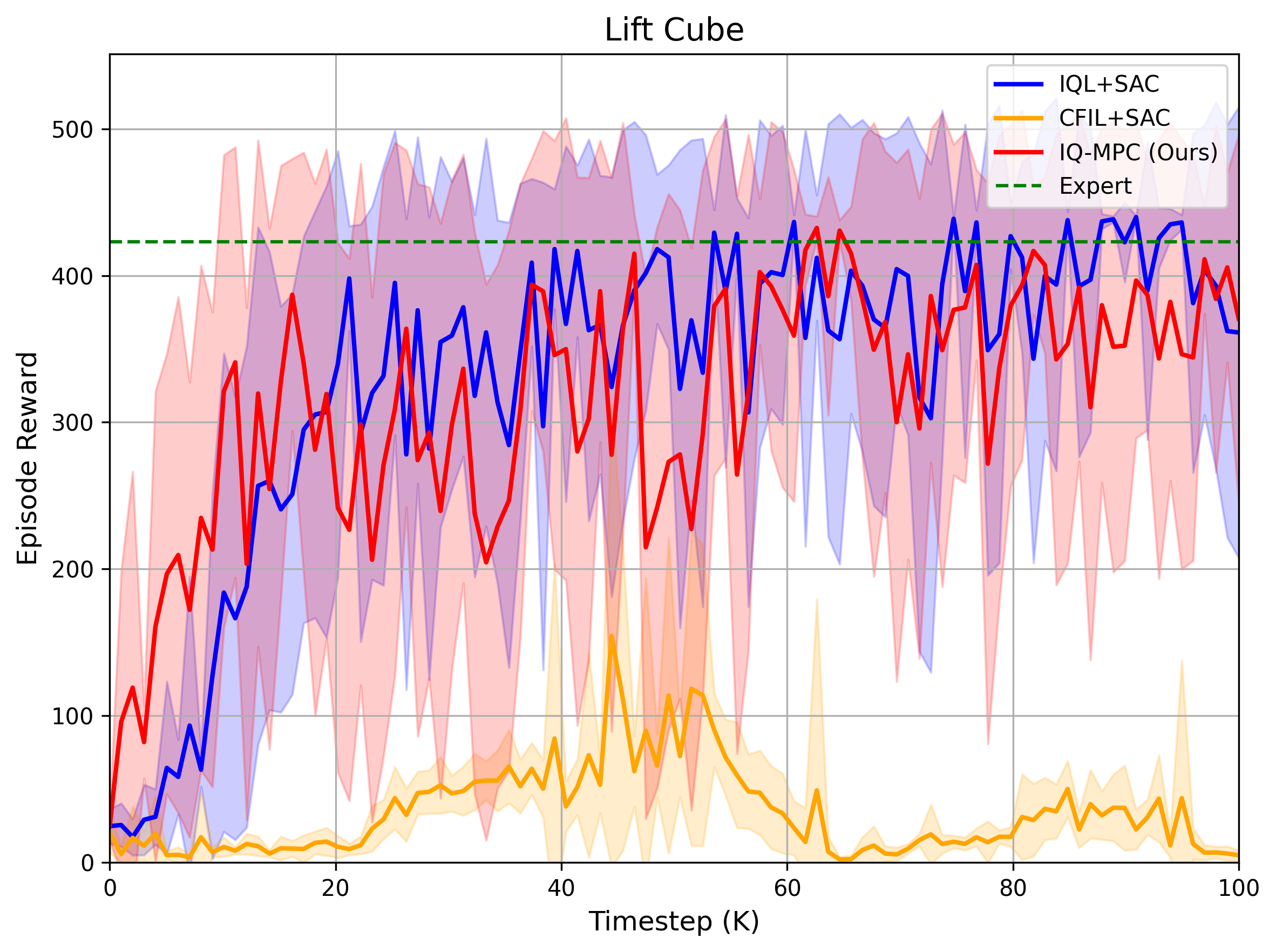}
    \end{minipage}
    \caption{\textbf{Manipulation Results in ManiSkill2} Our IQ-MPC shows stable and comparable results in ManiSkill2 manipulation experiments. In the plots, the color settings are the same as those in Figure \ref{fig:locomotion-high-dim}.}
    \label{fig:manipulation-results-maniskill}
\end{figure}

\begin{table}[H]
    \centering
    \begin{tabular}{c|ccc}
        \toprule
        \textbf{Method} & \textbf{IQL+SAC} & \textbf{CFIL+SAC} & \textbf{IQ-MPC(Ours)} \\
        \midrule
        Pick Cube & 0.61$\pm$0.13 & 0.00$\pm$0.00 & \textbf{0.79$\pm$0.05}\\
        Lift Cube & 0.85 $\pm$ 0.04 & 0.01$\pm$0.01 & \textbf{0.89$\pm$0.02}\\
        \bottomrule
    \end{tabular}
    \caption{\textbf{Manipulation Success Rate Results in ManiSkill2} We evaluate the success rate of IQ-MPC on pick and lift tasks in the ManiSkill2 environment. We show outperforming empirical results compared to IQL+SAC, and CFIL+SAC. We show the results by averaging over 100 trajectories and evaluating over 3 random seeds.}
    \label{tab:manipulation-success-maniskill}
\end{table}

\subsection{Additional Ablation Studies}
\label{sec:additional-ablation}
In this section, we demonstrate the results of ablating over objective formulation, gradient penalty, and hyperparameter $\alpha$.

\paragraph{Objective Formulation}
We observe performance improvement using the reformulated objective Eq.\ref{eqn:iq-loss-mod} as we mentioned in the Model Learning part of Section \ref{sec:learning-process}. In details, we changed the value temporal difference term $\mathbb{E}_{(\mathbf{s}_t,\mathbf{a}_t,\mathbf{s}'_t)\sim\mathcal{B}_\pi}[V^\pi(\mathbf{z}_t)-\gamma V^\pi(\mathbf{z}'_t)]$ into a form only containing value from initial distribution $\mathbb{E}_{\mathbf{s}_0 \sim \mathcal{B}_E}[(1-\gamma)V^\pi(\mathbf{z}_0)]$. This technique is also mentioned in the original IQ-Learn paper \citep{garg2021iq}. We have given the theoretical proof for mathematical equivalence in Lemma \ref{thm:obj-eq}. In this section, we provide the empirical analysis regarding the effectiveness of this technique in the context of our IQ-MPC model. We observe stabler Q estimation leveraging this technique. Moreover, in this case, the difference in Q estimation between the expert batch and the behavioral batch can converge more easily, especially for high-dimensional cases like the Humanoid Walk and Dog Run task. The better convergence of Q estimation difference shows that the Q function faces difficulty in distinguishing between expert and behavioral demonstrations, which implies that the policy prior behaves similarly as expert demonstrations. The stable Q estimation results in a better learning behavior for the latent dynamics model, which is observed by measuring the prediction consistency loss (The first term in Eq.\ref{eqn:critic-obj}) during training. The results in Humanoid Walk task are shown in Figure \ref{fig:ablation-obj}.

\begin{figure}[H]
    \centering
    \begin{minipage}{0.5\textwidth}
        \centering
        \includegraphics[width=0.8\textwidth]{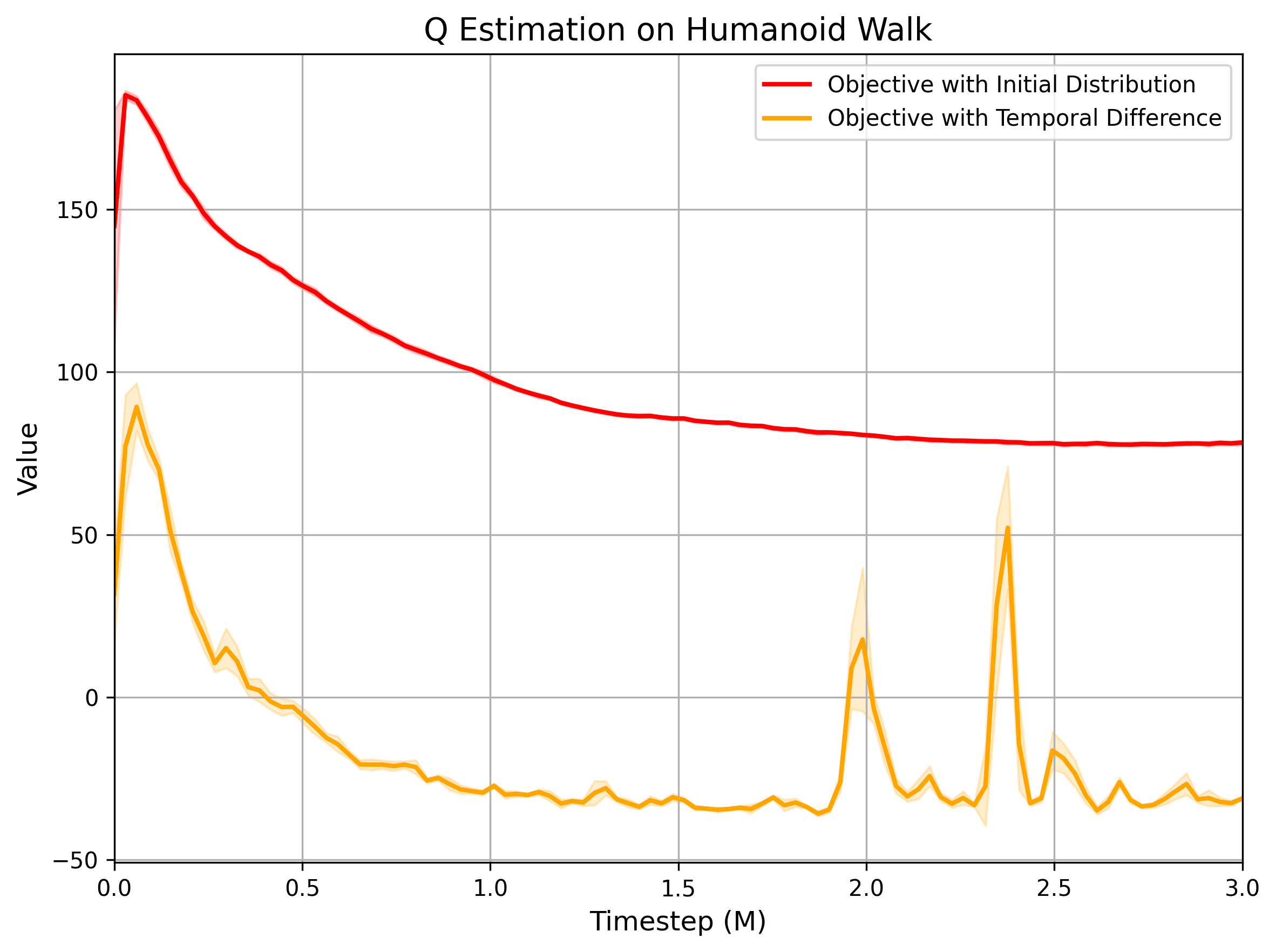}
    \end{minipage}\hfill
    \begin{minipage}{0.5\textwidth}
        \centering
        \includegraphics[width=0.8\textwidth]{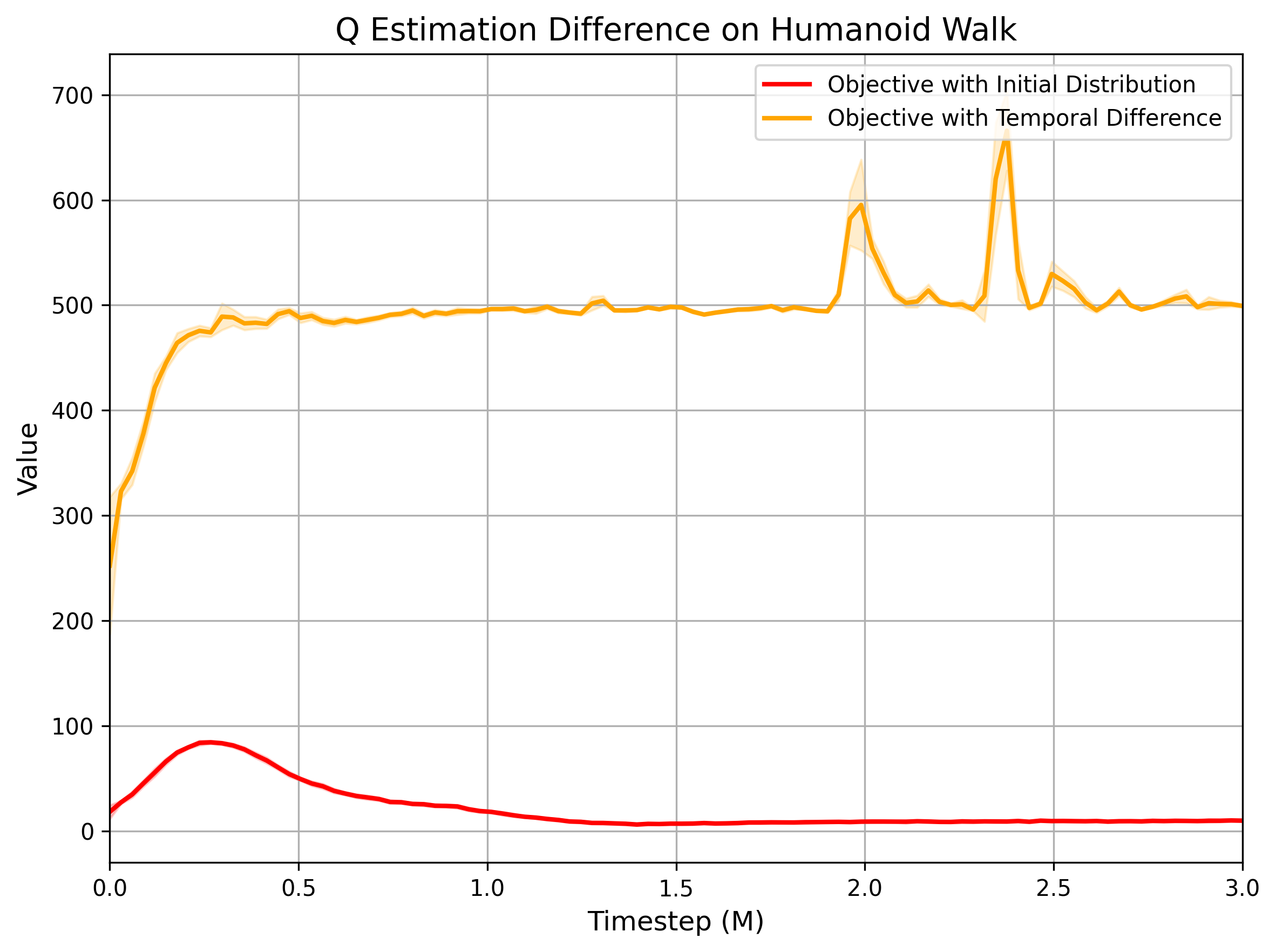}
    \end{minipage}\hfill
    \begin{minipage}{0.5\textwidth}
        \centering
        \includegraphics[width=0.8\textwidth]{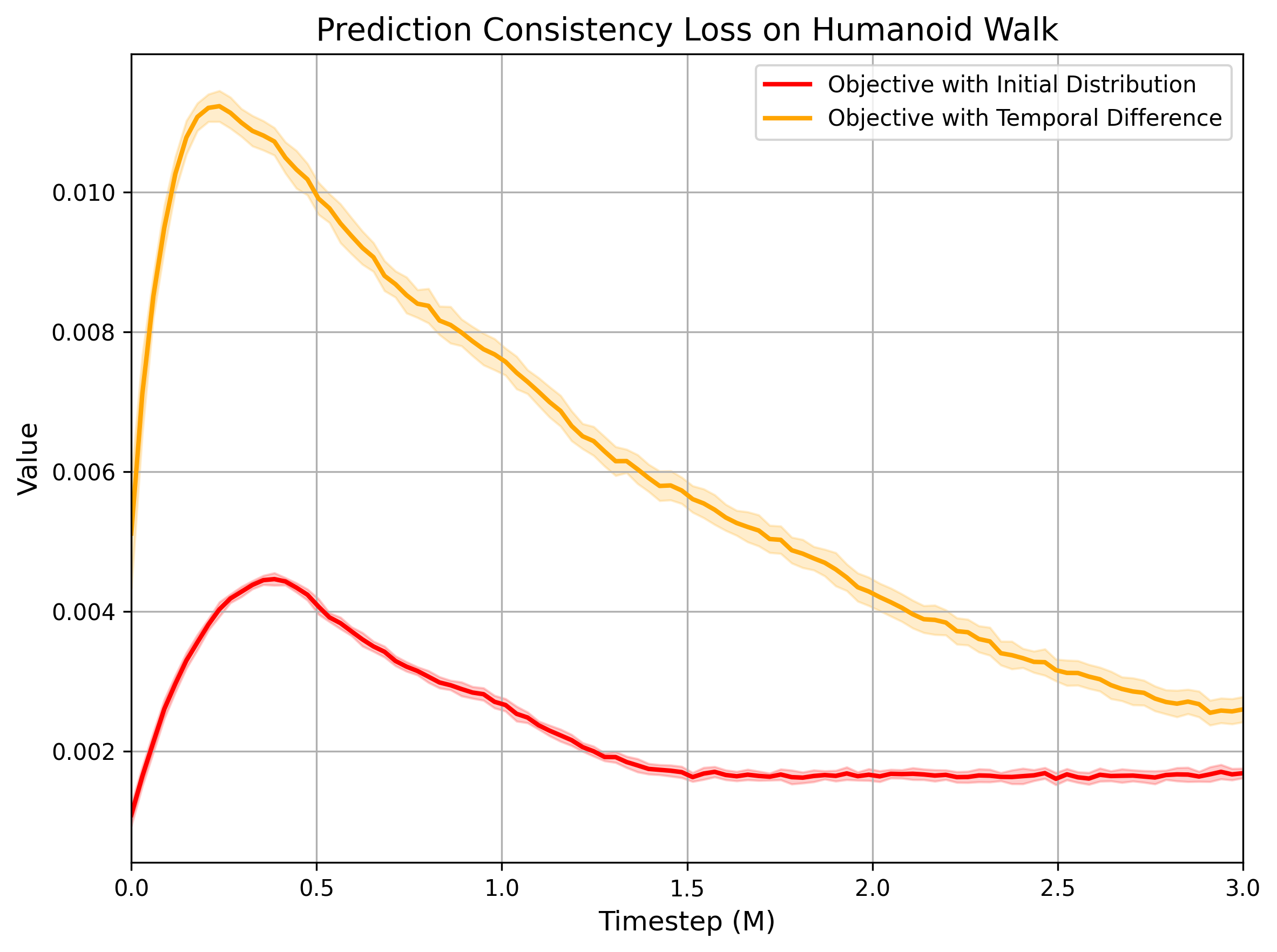}
    \end{minipage}\hfill
    \begin{minipage}{0.5\textwidth}
        \centering
        \includegraphics[width=0.8\textwidth]{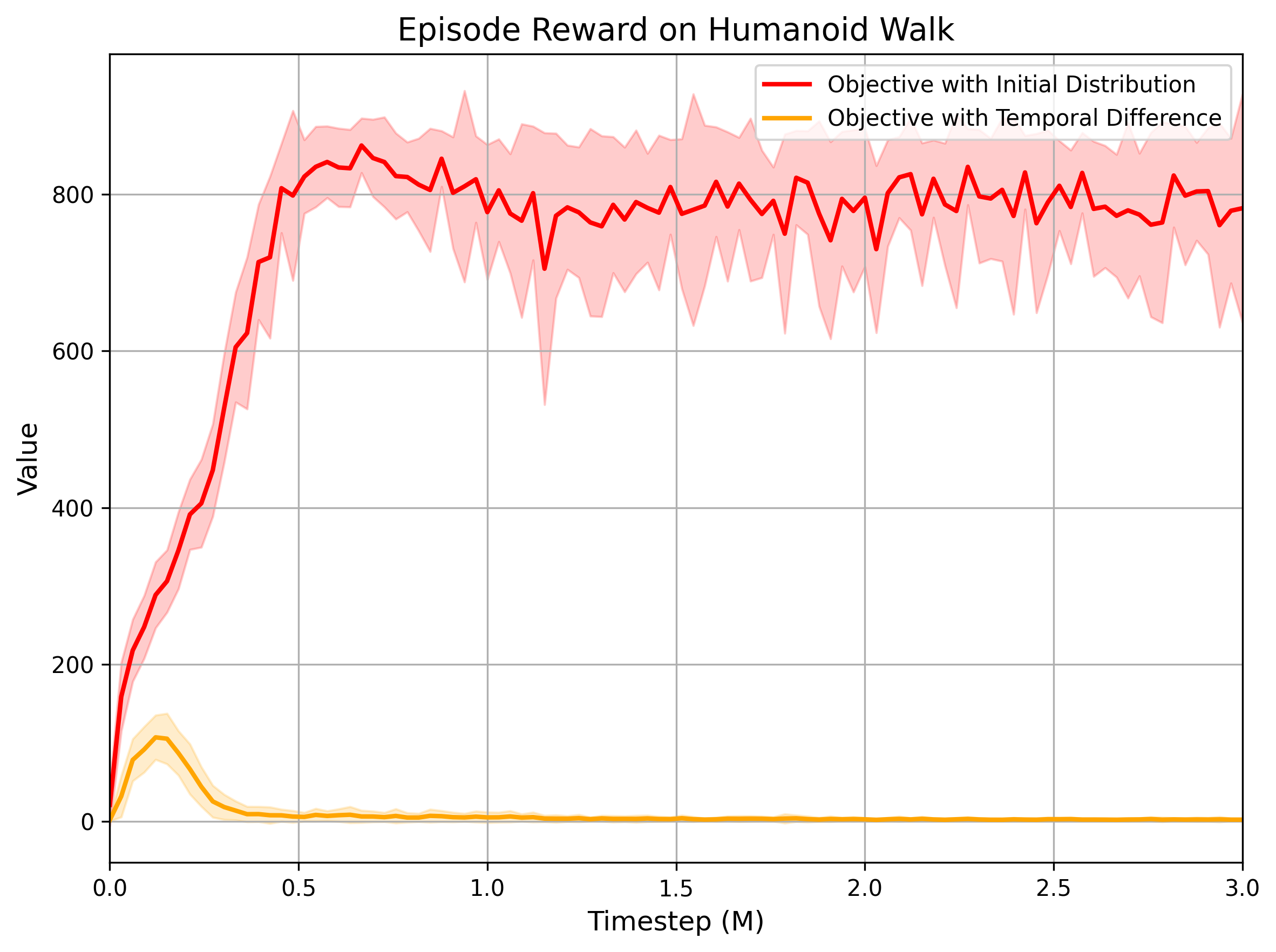}
    \end{minipage}
    \caption{\textbf{Ablation on Objective Formulation} We show that the Q estimation and training dynamics are stabler by utilizing objective with initial distribution compared to leveraging objective with temporal difference. Moreover, we obtain stable expert-level performance leveraging the objective with the initial distribution. We depict the stability by showing plots regarding Q estimation and prediction consistency. The {\color{red}red lines} are the stabler results using the objective with initial distribution while {\color{orange}orange lines} are the results with temporal difference objective. The ablation experiments are conducted on the Humanoid Walk task.}
    \label{fig:ablation-obj}
\end{figure}
\newpage
\paragraph{Gradient Penalty}
We ablate over the Wasserstein-1 metric gradient penalty in Eq.\ref{eqn:grad-pen} with our experiments. This training technique balances the discriminative power of the Q network to ensure stable policy learning. We show improvement in training stability on the Pick Cube task in the ManiSkill2 environment. By leveraging gradient penalty, we observe stable convergence regarding the difference in Q estimation between expert and behavioral batch. This behavior results in stabler policy learning, especially in tasks with low dimensional state or action space, where expert and behavioral demonstrations can be easily distinguished. The ablation results are shown in Figure \ref{fig:ablation-grad-pen} and Table \ref{tab:ablation-grad-pen-success}.

\begin{figure}[H]
    \centering
    \begin{minipage}{0.5\textwidth}
        \centering
        \includegraphics[width=0.8\textwidth]{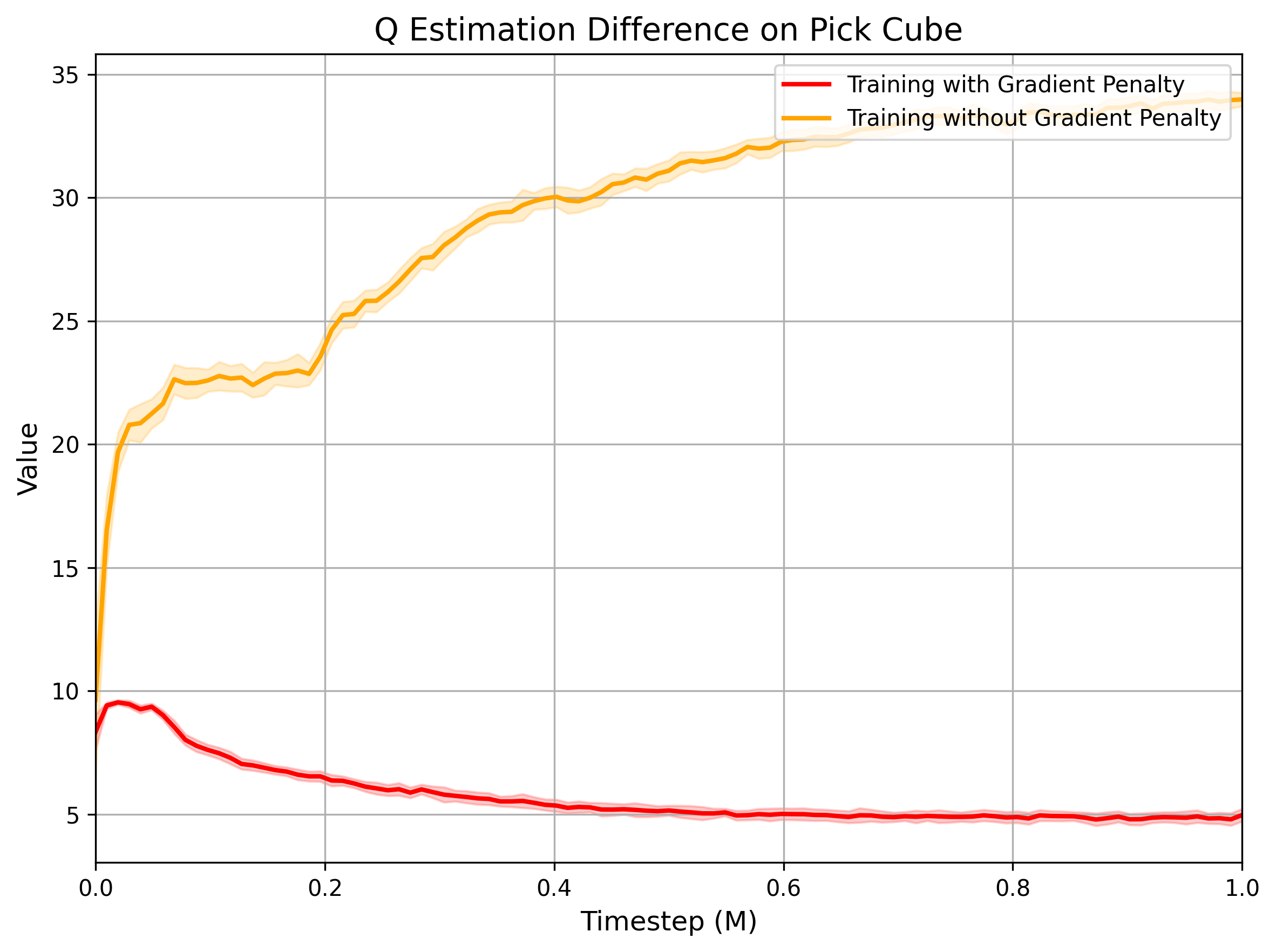}
    \end{minipage}\hfill
    \begin{minipage}{0.5\textwidth}
        \centering
        \includegraphics[width=0.8\textwidth]{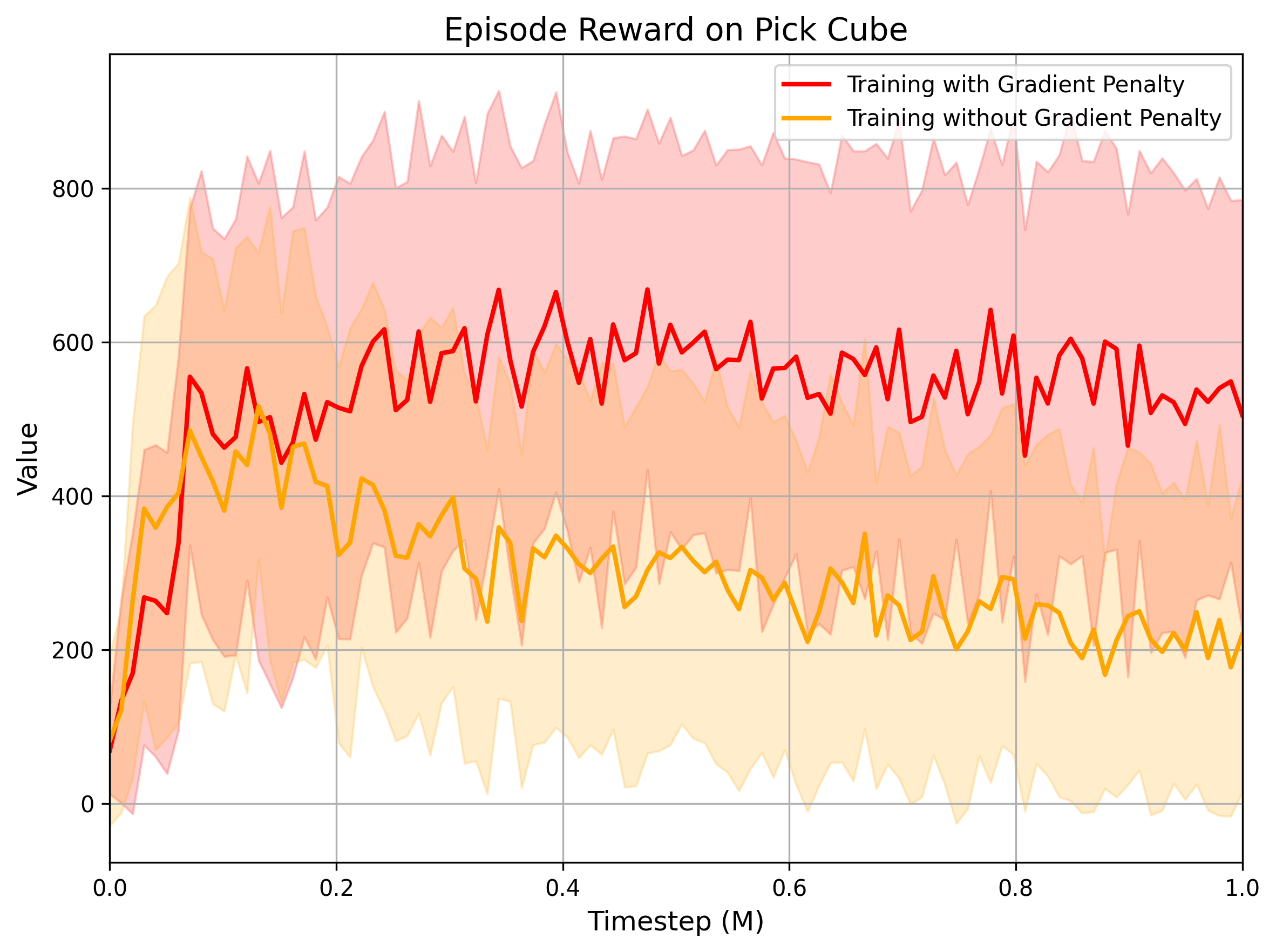}
    \end{minipage}
    \caption{\textbf{Ablation on Gradient Penalty.} We show the improvement by adopting the Wasserstein-1 gradient penalty by demonstrating the effect over the convergence of Q-difference, which is the difference between Q estimation on expert and behavioral demonstrations. The converging Q-difference implies stable policy learning and reasonable discriminative power of the Q network. We also demonstrate the effectiveness of the gradient penalty by episode reward during training. The {\color{red}red lines} represent results with gradient penalty while {\color{orange}orange lines} represent results without it.}
    \label{fig:ablation-grad-pen}
\end{figure}

\begin{table}[H]
    \centering
    \begin{tabular}{c|cc}
        \toprule
            \textbf{Gradient Penalty?} & \textbf{Yes} & \textbf{No} \\
        \midrule
        Success Rate & \textbf{0.79$\pm$0.05} & 0.51$\pm$0.11 \\
        \bottomrule
    \end{tabular}
    \caption{\textbf{Ablation on Gradient Penalty with Success Rate} We evaluate the success rate of IQ-MPC with and without gradient penalty on ManiSkill2 Pick Cube task. We show the results by averaging over 100 trajectories and evaluating over 3 random seeds.}
    \label{tab:ablation-grad-pen-success}
\end{table}

\paragraph{Hyperparameter Selection}

{We perform an ablation study on the selection of the hyperparameter $\alpha$ in Eq.\ref{eqn:iq-loss-mod}. The hyperparameter $\alpha$ controls the strength of the $\chi^2$ regularization applied to the inverse soft-Q objective. Intuitively, the last term in Eq.\ref{eqn:iq-loss-mod} serves as a penalty on the magnitude of the estimated reward. Therefore, smaller values of $\alpha$ result in a larger penalty on the estimated reward magnitude, which helps enforce training stability and prevents Q estimation from exploding. In contrast, larger values of $\alpha$ encourage more aggressive estimation of the reward and Q value, increasing the chances of training instability. We experiment with the effect of this hyperparameter in the Humanoid Walk task and conclude that $\alpha=0.5$ is the optimal choice. We present our results in Figure \ref{fig:ablation-alpha}.}

\begin{figure}[h]
    \centering
    \begin{minipage}{0.5\textwidth}
        \centering
        \includegraphics[width=0.8\textwidth]{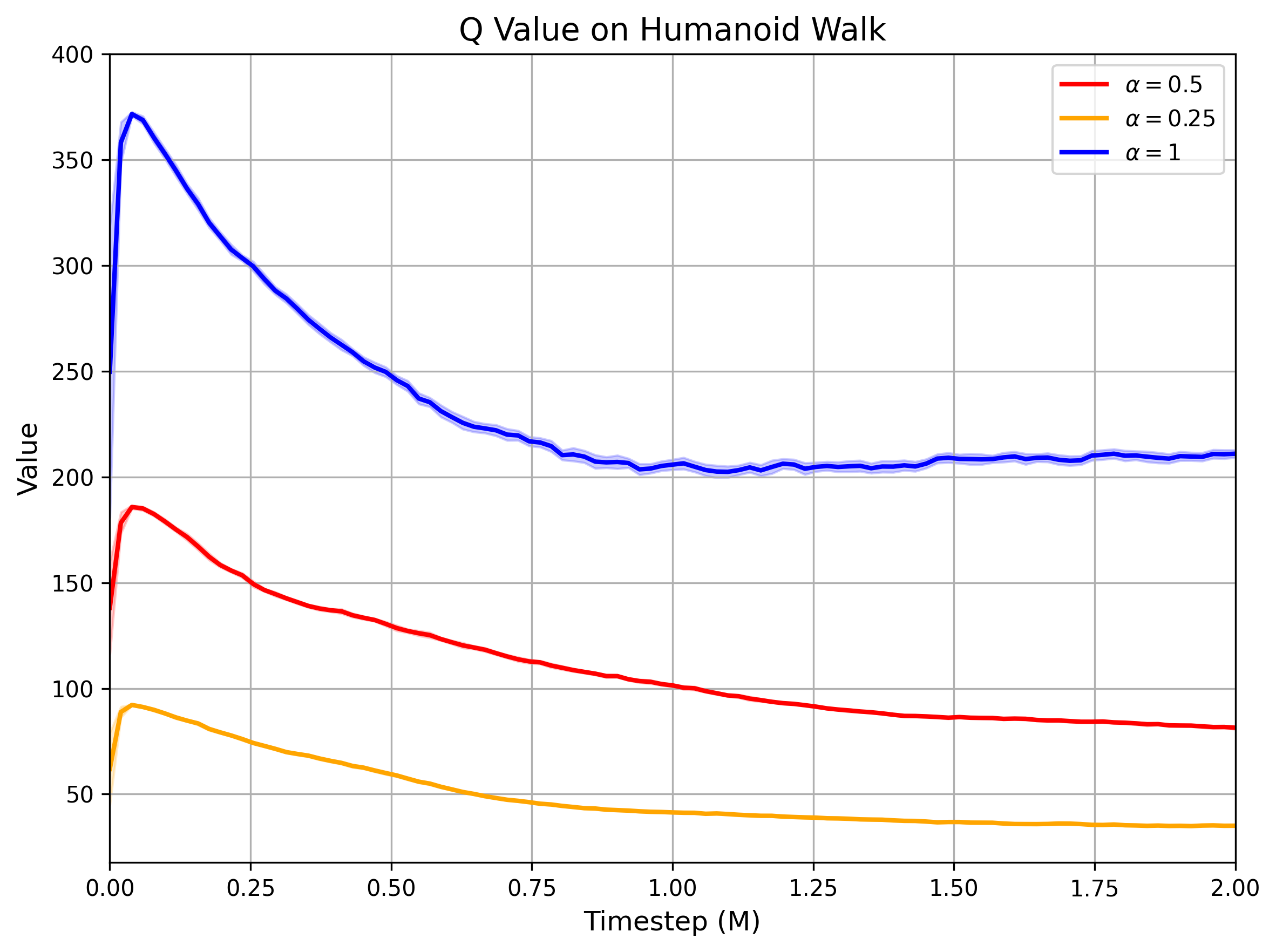}
    \end{minipage}\hfill
    \begin{minipage}{0.5\textwidth}
        \centering
        \includegraphics[width=0.8\textwidth]{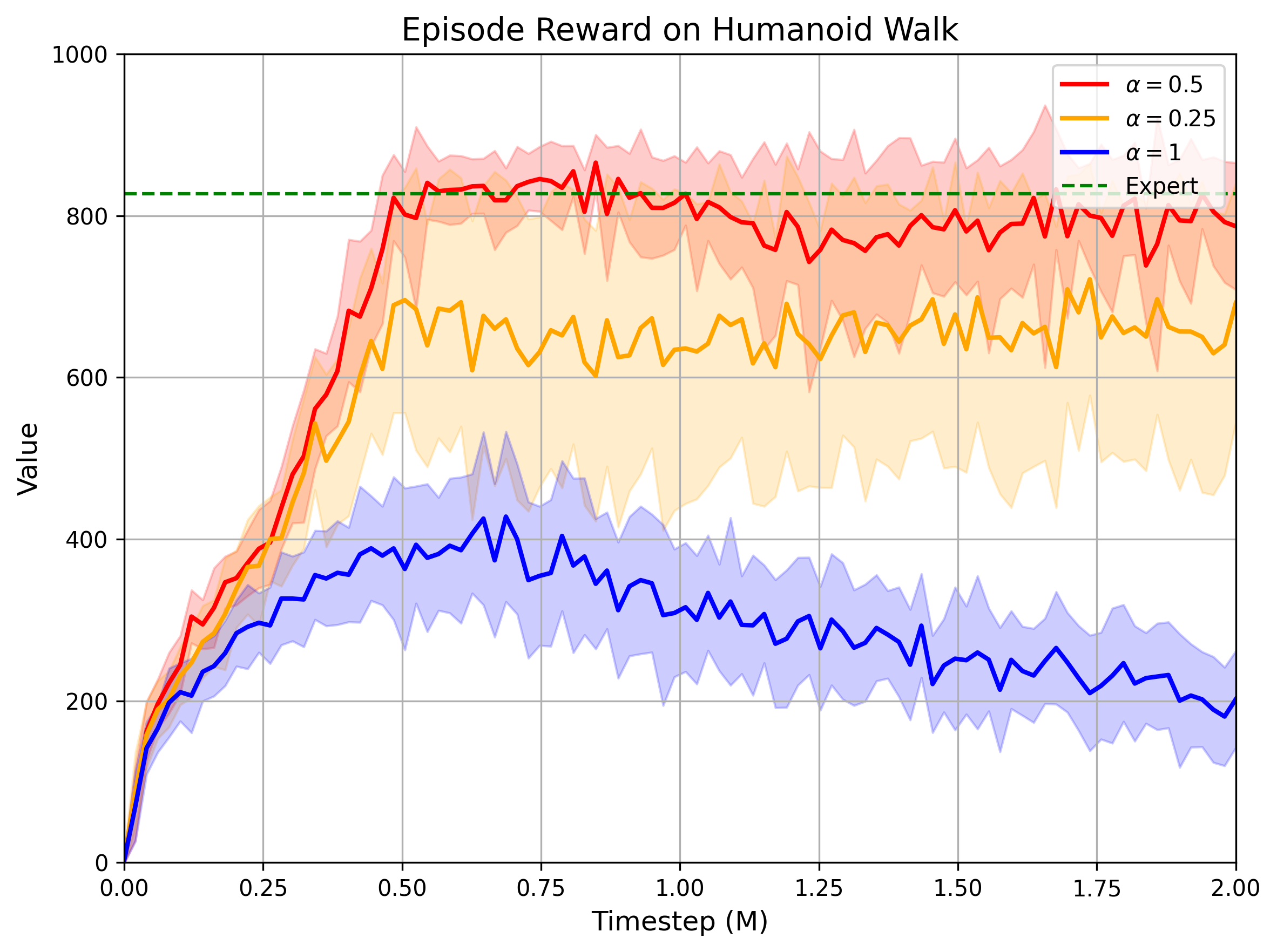}
    \end{minipage}
    \caption{\textbf{Ablation on Hyperparameter Selection.} In our ablation study, we found that larger values of $\alpha$ may lead to higher Q estimations, resulting in suboptimal and unstable training behavior. Conversely, smaller values of $\alpha$ lead to lower Q estimations, which correspond to stronger regularization and may also cause suboptimal performance.
}
    \label{fig:ablation-alpha}
\end{figure}

\subsection{Experiments on Noisy Environment Dynamics}
\label{sec:noisy-environment}
{In this section, we evaluate the robustness of our IQ-MPC model under noisy and stochastic environment dynamics. HyPE \citep{ren2024hybrid} has demonstrated relatively robust performance when subjected to minor noise perturbations in environment transitions. Although our model is primarily designed for fully deterministic settings, we observe that it exhibits a degree of robustness when handling stochastic environment dynamics.

For our experiments, we adopt the same environment settings as HyPE \citep{ren2024hybrid}, introducing a trembling noise probability, $p_{tremble}$, into the environment transitions. Specifically, we assess the impact of $p_{tremble}$ on our IQ-MPC model in the Walker Run task. The results indicate that our model maintains certain robustness even in the presence of transition noise. We present the results the Figure \ref{fig:ablation-noise}.}

\begin{figure}[h]
    \centering
    \begin{minipage}{0.5\textwidth}
        \centering
        \includegraphics[width=0.8\textwidth]{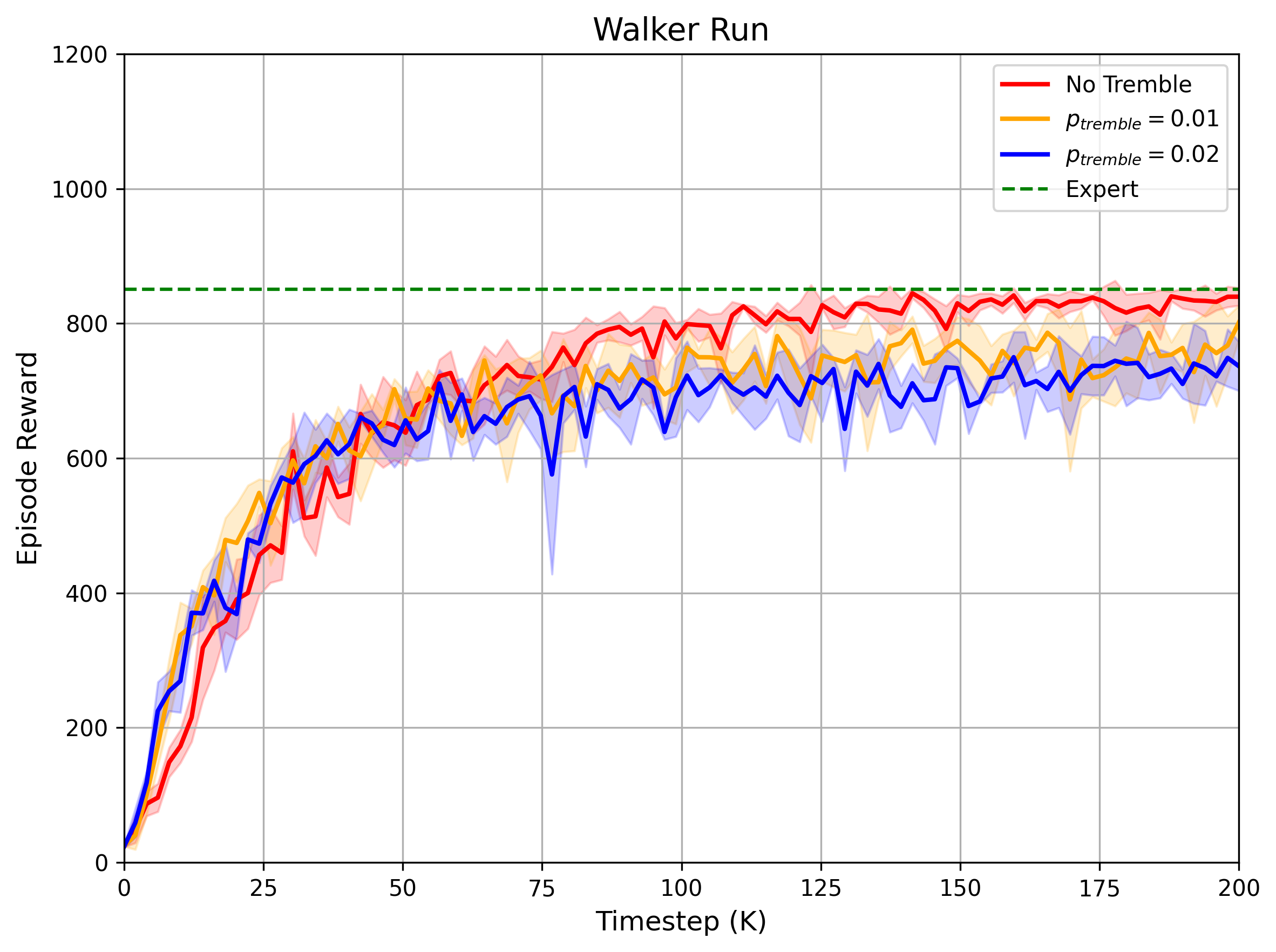}
    \end{minipage}\hfill
    \begin{minipage}{0.5\textwidth}
        \centering
        \includegraphics[width=0.8\textwidth]{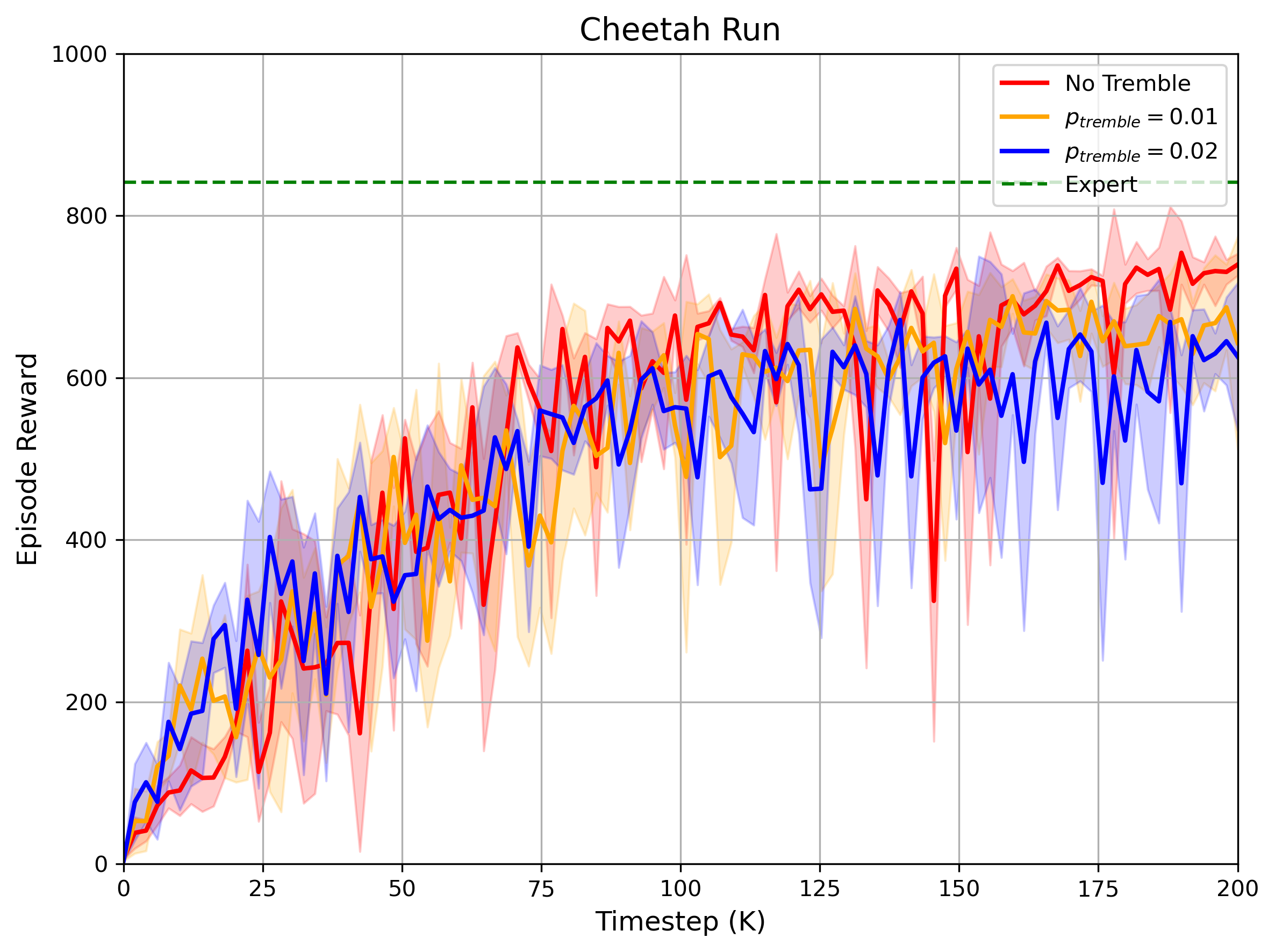}
    \end{minipage}
    \caption{\textbf{Experiments on Noisy Environment Dynamics} We evaluate our model's performance on the Walker Run and Cheetah Run task under different values of $p_{tremble}$, where a larger $p_{tremble}$ indicates greater stochasticity in the environment dynamics. Specifically, we experiment with $p_{tremble} = 0.01$ and $p_{tremble} = 0.02$, observing only slight performance degradation. This suggests that our model exhibits a degree of robustness to noisy environment dynamics.
}
    \label{fig:ablation-noise}
\end{figure}

\subsection{Additional Experiments with Few Expert Demonstrations on Visual Tasks}
In this section, we demonstrate that our approach can also learn visual tasks using only 10 expert demonstrations. Results with 10 demonstrations are shown in Figure \ref{fig:few-exp-demo-visual}. While the convergence is slower, successful learning is still achievable under this low-data regime.

\begin{figure}[H]
    \centering
    \begin{minipage}{0.3\textwidth}
        \centering
        \includegraphics[width=\textwidth]{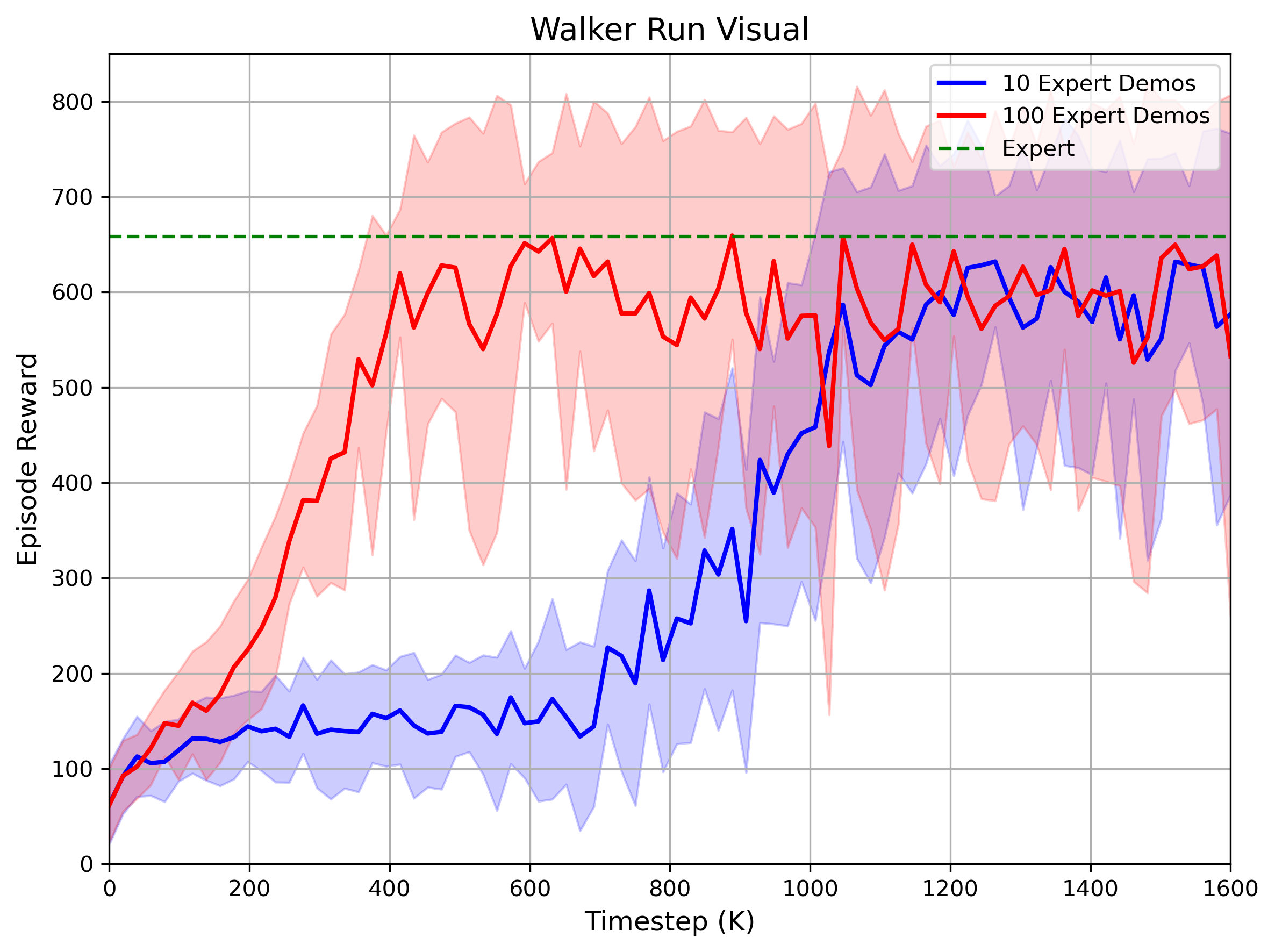}
    \end{minipage}
    \begin{minipage}{0.3\textwidth}
        \centering
        \includegraphics[width=\textwidth]{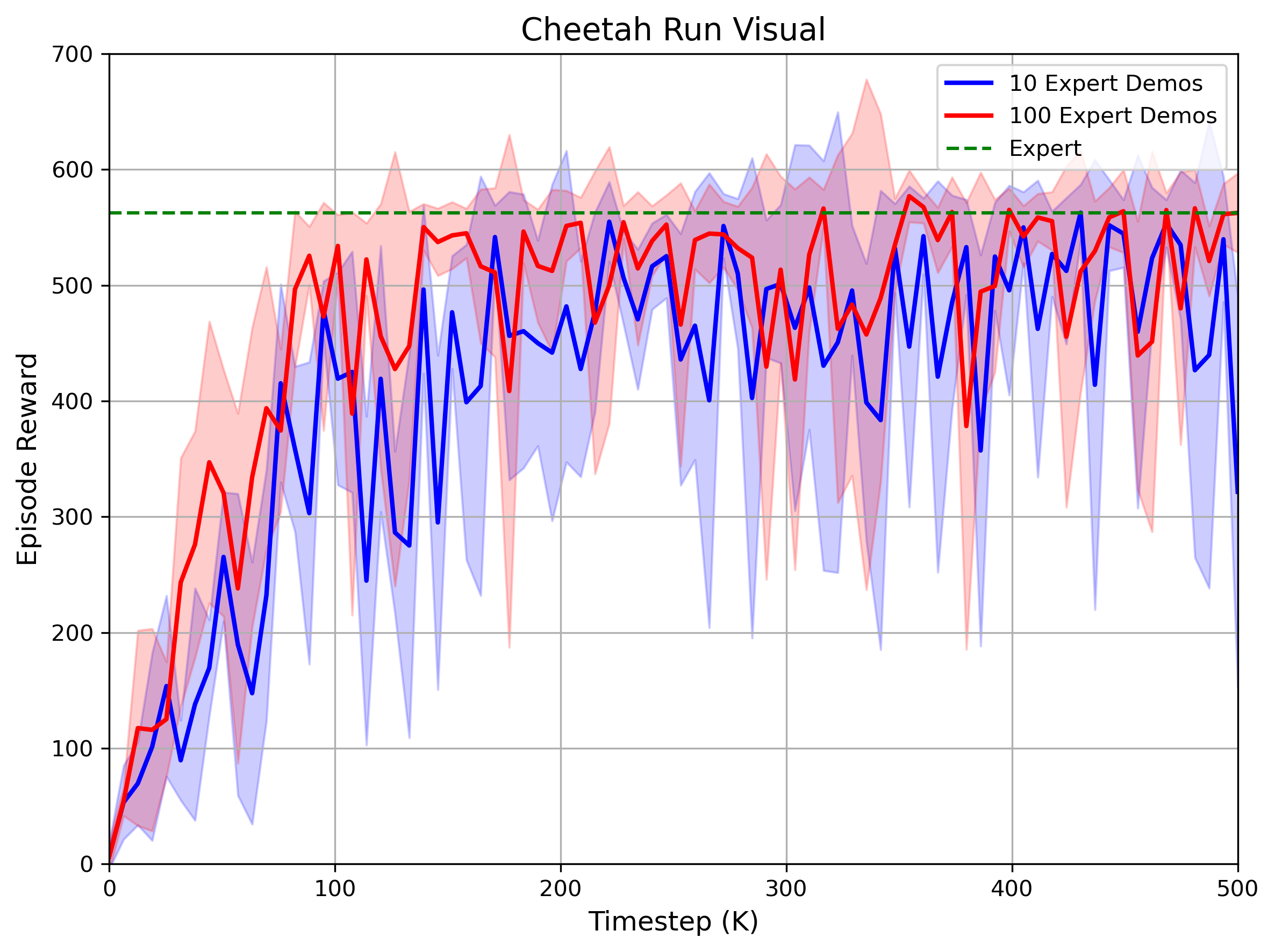}
    \end{minipage}
    \begin{minipage}{0.3\textwidth}
        \centering
        \includegraphics[width=\textwidth]{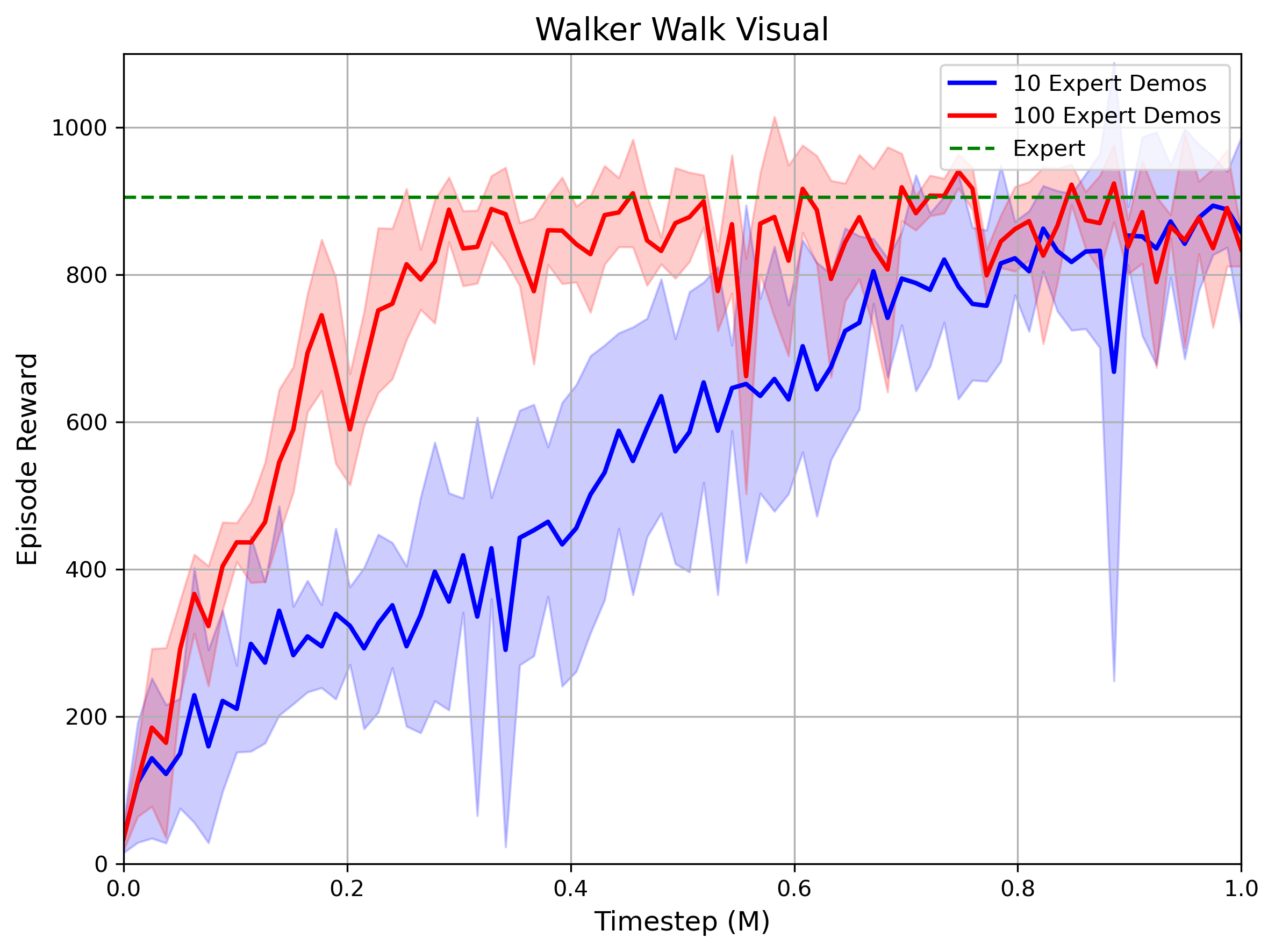}
    \end{minipage}
    \caption{\textbf{Additional Experiments with Few Expert Demonstrations on Visual Tasks} We show that our IQ-MPC method can successfully learn from only 10 expert demonstrations in DMControl visual tasks. We present results on the Walker Run, Cheetah Run, and Walker Walk tasks, and include performance curves with 100 expert demonstrations as a reference.}
    \label{fig:few-exp-demo-visual}
\end{figure}

\section{Training Time Evaluation}
\label{sec:training-time}

{We evaluate the computational overhead of our approach in comparison to model-free baselines (IQL+SAC \citep{garg2021iq}, CFIL+SAC \citep{freund2023coupled}, HyPE \citep{ren2024hybrid}) and the model-based baseline (HyPER \citep{ren2024hybrid}). Additionally, we assess the training time of IQ-MPC with model predictive control enabled and when interacting solely with the policy prior. The experiments are conducted on the Humanoid Walk task, with training time reported in seconds. All baselines are trained using a single RTX 2080 Ti GPU. The results are shown in Figure \ref{fig:training-time}.}

\begin{figure}[h]
    \centering
    \includegraphics[width=0.5\linewidth]{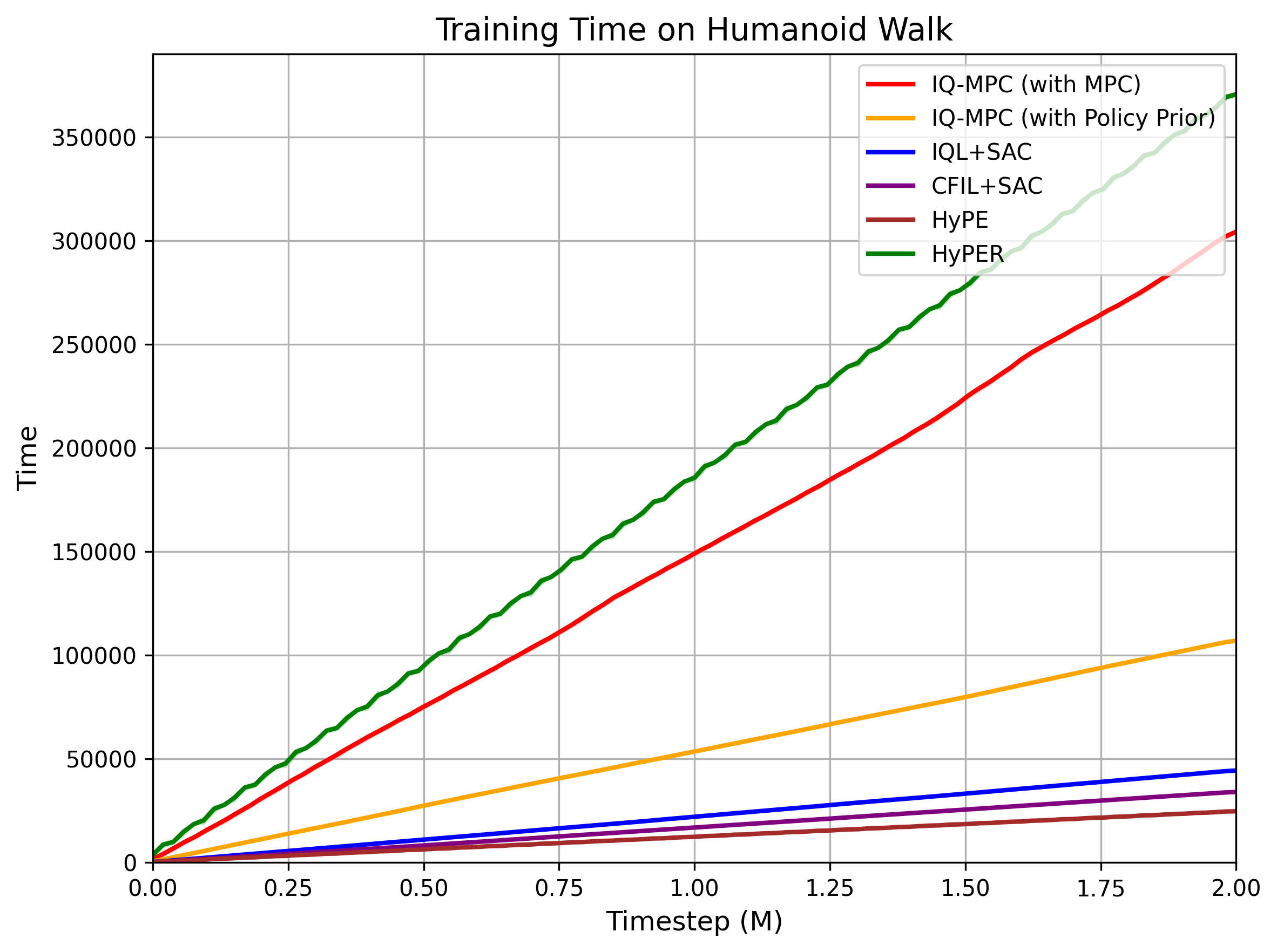}
    \caption{\textbf{Computational Overhead} We evaluate the computational cost during training of our model on the Humanoid task. Leveraging a policy prior for direct interaction, instead of relying on MPC, accelerates the training process but may introduce greater instability. Our model requires less computational time compared to HyPER, although its training remains slower than model-free baselines.
}
    \label{fig:training-time}
\end{figure}

\section{Reward Correlation Analysis}
\label{sec:reward-correlation}

We evaluate the ability to recover rewards using a trained IQ-MPC model, which demonstrates our model's capability of handling inverse RL tasks. We observe a positive correlation between ground-truth rewards and our recovered rewards. We conduct this experiment on the DMControl Cheetah Run task and decode rewards via $r(\mathbf{z},\mathbf{a})=Q_\theta(\mathbf{z},\mathbf{a})-\gamma \mathbb{E}_{\mathbf{z}'\sim d_\theta(\cdot|\mathbf{z},\mathbf{a})}V^\pi(\mathbf{z}')$. We evaluate over 5 trajectories sampled from a trained IQ-MPC. The results are revealed in Figure \ref{fig:reward-recovery}.

{We further analyze the correlation between the decoded rewards and ground-truth rewards in the Hopper Hop, Cheetah Run, Quadruped Run, and Walker Run tasks. Specifically, we compute the Pearson correlation between the estimated and ground-truth rewards in these settings, using IQL+SAC \citep{garg2021iq} as the comparison baseline. The results are presented in Table \ref{tab:reward-recovery-correlation}. 

In Figure \ref{fig:reward-recovery}, we observe that the variance of the estimated rewards is higher when the ground-truth reward is high. One possible explanation for this high variance in the estimated expert rewards is as follows:

There are multiple equivalent reward formulations that result in optimal trajectories, and the maximum entropy objective selects the one with the highest entropy. Our actor-critic architecture, optimized with the maximum entropy inverse RL objective, leads to a more evenly distributed reward structure for expert demonstrations. Consequently, rewards closer to the expert tend to exhibit higher variance, a phenomenon also observed in \citep{freund2023coupled}.}

\begin{figure}[h]
    \centering
    \includegraphics[width=0.48\textwidth]{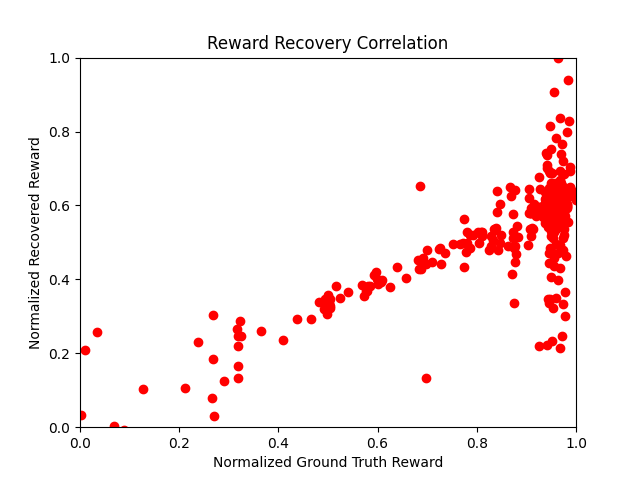}
    \caption{\textbf{Reward Recovery.} The IQ-MPC model successfully recovers rewards in the inverse RL setting, showing a positive correlation with ground-truth rewards. This experiment is conducted on the Cheetah Run task with state-based observations from DMControl.}
    \label{fig:reward-recovery}
\end{figure}

\begin{table}[h]
    \centering
    \begin{tabular}{c|cc}
        \toprule
            \textbf{Method} & \textbf{IQL+SAC} & \textbf{IQ-MPC (Ours)} \\
        \midrule
        Hopper Hop & 0.49 & \textbf{0.88} \\
        Cheetah Run & 0.79 & \textbf{0.87} \\
        Walker Run & 0.65 & \textbf{0.91}\\
        Quadruped Run & 0.88 & \textbf{0.93}\\
        \bottomrule
    \end{tabular}
    \caption{\textbf{Pearson Correlations of Reward Recovery} We evaluate the Pearson correlation between the decoded rewards from IQL+SAC and IQ-MPC in the Hopper Hop, Cheetah Run, Quadruped Run, and Walker Run tasks. Our results demonstrate that IQ-MPC achieves a higher correlation with ground-truth rewards when trained on these tasks.}
    \label{tab:reward-recovery-correlation}
\end{table}

\section{Additional Theoretical Supports}
We first give a proper definition of distributions involving latent state representations:
\begin{definition}
    Define a latent state distribution $\tilde p^\pi_t=h_* p^\pi_t$ as a pushforward distribution of original state distribution $p^\pi_t$ for policy $\pi$ given an encoder mapping $h:\mathcal{S}\rightarrow\mathcal{Z}$. 
\end{definition}
\begin{definition}
    Define a latent state-action distribution with policy $\pi$ as $\tilde \rho^\pi$ on $\mathcal{Z}\times\mathcal{A}$ from an original state-action distribution $\rho^\pi$ on $\mathcal{S}\times\mathcal{A}$ with an encoder mapping $h:\mathcal{S}\rightarrow\mathcal{Z}$.
\end{definition}
\subsection{Objective Equivalence}
\label{sec:obj-eq}
In this section, we will provide proof for the reformulation of the second term in Eq.\ref{eqn:iq-loss-mod} for completeness. We borrow the proof from \cite{garg2021iq} and slightly modify it to fit our setting with latent representations instead of actual states. The proof is demonstrated in Lemma \ref{thm:obj-eq}. In Eq.\ref{eqn:iq-loss-mod}, we use the mean over encoded latent representation batch sampled from the expert buffer $\mathcal{B}_E$ to approximate the mean over initial distribution $\tilde p_0$ on latent representation.
\begin{lemma}[Objective Equivalence]
\label{thm:obj-eq}
    Given a latent transition model $d(\mathbf{z}'|\mathbf{z},\mathbf{a})$, a latent state distribution $\tilde p^\pi_t$ for time step $t$ and a latent state-action distribution $\tilde\rho^\pi$, we have:
    \begin{equation*}
        \label{eqn:value_transform}
        \mathbb{E}_{(\mathbf{z},\mathbf{a})\sim\tilde\rho_\pi}[V^\pi(\mathbf{z})-\gamma\mathbb{E}_{\mathbf{z}'\sim d(\cdot|\mathbf{z},\mathbf{a})}V^\pi(\mathbf{z}')]=(1-\gamma)\mathbb{E}_{\mathbf{z}_0 \sim \tilde p_0}[V^\pi(\mathbf{z}_0)]
\end{equation*}
\end{lemma}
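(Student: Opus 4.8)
The plan is to establish the identity by a telescoping-series argument applied to the discounted latent-state occupancy measure. First I would make explicit the structure of the latent state-action distribution: writing $\tilde\rho_\pi$ as the normalized discounted sum over timesteps, $\tilde\rho_\pi(\mathbf{z},\mathbf{a}) = (1-\gamma)\sum_{t=0}^\infty \gamma^t \tilde p_t^\pi(\mathbf{z})\,\pi(\mathbf{a}|\mathbf{z})$, so that its state marginal is $(1-\gamma)\sum_{t=0}^\infty \gamma^t \tilde p_t^\pi$. This representation is what connects the occupancy measure $\tilde\rho_\pi$ to the per-timestep pushforward distributions $\tilde p_t^\pi$ introduced in the definitions above.

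Next I would split the left-hand side into the two expectations $\mathbb{E}_{(\mathbf{z},\mathbf{a})\sim\tilde\rho_\pi}[V^\pi(\mathbf{z})]$ and $\gamma\,\mathbb{E}_{(\mathbf{z},\mathbf{a})\sim\tilde\rho_\pi}[\mathbb{E}_{\mathbf{z}'\sim d(\cdot|\mathbf{z},\mathbf{a})}V^\pi(\mathbf{z}')]$, and substitute the series representation into each. The first term becomes $(1-\gamma)\sum_{t} \gamma^t \mathbb{E}_{\mathbf{z}\sim\tilde p_t^\pi}[V^\pi(\mathbf{z})]$ directly. For the second term, the crucial step is the consistency relation: sampling $(\mathbf{z},\mathbf{a})$ from the time-$t$ component and pushing it through $\pi$ and $d$ reproduces exactly the time-$(t+1)$ latent marginal $\tilde p_{t+1}^\pi$, so the second term equals $(1-\gamma)\sum_t \gamma^{t+1}\mathbb{E}_{\mathbf{z}'\sim\tilde p_{t+1}^\pi}[V^\pi(\mathbf{z}')]$.

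Finally I would reindex the second sum with $s=t+1$ and subtract, so that all contributions for $t\geq 1$ cancel pairwise and only the $t=0$ term $(1-\gamma)\mathbb{E}_{\mathbf{z}_0\sim\tilde p_0}[V^\pi(\mathbf{z}_0)]$ survives, which is exactly the right-hand side. This is a direct adaptation of the analogous identity in \citep{garg2021iq}, with actual states replaced by latent representations $\mathbf{z}$.

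I expect the main obstacle to be making precise the consistency condition linking the per-timestep pushforward marginals $\tilde p_t^\pi$ to the latent transition model $d$ — namely, requiring that the latent state distributions are themselves generated by rolling out $d$ under $\pi$, rather than being arbitrary pushforwards of the true environment trajectories. I would state this as the working assumption implicit in the derivation I am adapting. As a secondary technical point, I would justify interchanging the infinite summation with the expectations by boundedness of $V^\pi$, which follows from $\gamma<1$ together with bounded rewards.
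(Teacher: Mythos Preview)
Your proposal is correct and follows essentially the same telescoping argument as the paper: expand $\tilde\rho_\pi$ as the discounted sum of per-timestep latent marginals, use the one-step consistency $\tilde p_t^\pi \xrightarrow{\pi,d} \tilde p_{t+1}^\pi$ to rewrite the second term, and cancel all but the $t=0$ contribution. Your explicit flagging of the consistency assumption on $\tilde p_t^\pi$ and the boundedness justification for swapping sum and expectation are actually more careful than the paper, which treats both as implicit.
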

\begin{proof}
    We decompose the left-hand side into a summation:
\begin{equation*}
    \begin{split}
        &\mathbb{E}_{(\mathbf{z},\mathbf{a})\sim\tilde\rho_\pi}[V^\pi(\mathbf{z})-\gamma\mathbb{E}_{\mathbf{z}'\sim d(\cdot|\mathbf{z},\mathbf{a})}V^\pi(\mathbf{z}')]\\
        &=(1-\gamma)\sum^\infty_{t=0}\gamma^t\mathbb{E}_{\mathbf{z}\sim \tilde p^\pi_t,\mathbf{a}\sim\pi(\mathbf{z})}[V^\pi(\mathbf{z})-\gamma\mathbb{E}_{\mathbf{z}'\sim d(\cdot|\mathbf{z},\mathbf{a})}V^\pi(\mathbf{z}')]\\
        &=(1-\gamma)\sum_{t=0}^\infty\gamma^t\mathbb{E}_{\mathbf{z}\sim \tilde p^\pi_t}[V^\pi(\mathbf{z})]-(1-\gamma)\sum^\infty_{t=0}\gamma^{t+1}\mathbb{E}_{\mathbf{z}\sim \tilde p^\pi_{t+1}}[V^\pi(\mathbf{z})]\\
        &=(1-\gamma)\mathbb{E}_{\mathbf{z}_0\sim \tilde p_0}[V^\pi(\mathbf{z}_0)]
    \end{split}
\end{equation*}
\end{proof}
\subsection{Policy Update Guarantee}
We prove that policy update objective Eq.\ref{eqn:policy-learning} can search for the saddle point in optimization, which increases $\mathcal{L}_{iq}(\pi, Q)$ with $Q$ fixed, following \cite{garg2021iq}. For simplicity, we prove it with horizon $H=1$, and it's generalizable to objective with discounted finite horizon. 
\begin{theorem}[Policy Update]
\label{thm:policy-update}
    Updating the policy prior via maximum entropy objective increases $\mathcal{L}_{iq}(\pi, Q)$ with $Q$ fixed. We assume entropy coefficient $\beta=1$.
\end{theorem}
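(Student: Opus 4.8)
The plan is to fix the critic $Q$, specialize to $H=1$ and $\beta=1$, and exploit the fact that the policy $\pi$ enters $\mathcal{L}_{iq}(\pi,Q)$ (Eq.~\ref{eqn:iq-loss-mod}) only through the soft value $V^\pi(\mathbf{z})=\mathbb{E}_{\mathbf{a}\sim\pi(\cdot|\mathbf{z})}[Q(\mathbf{z},\mathbf{a})-\log\pi(\mathbf{a}|\mathbf{z})]$. First I would rewrite the policy objective $\mathcal{L}_\pi$ (Eq.~\ref{eqn:policy-learning}): with $\beta=1$, the integrand $-Q(\mathbf{z},\mathbf{a})+\log\pi(\mathbf{a}|\mathbf{z})$ averaged over $\mathbf{a}\sim\pi$ is exactly $-V^\pi(\mathbf{z})$, so $\mathcal{L}_\pi=-\mathbb{E}_{\mathbf{z}\sim\mathcal{B}}[V^\pi(\mathbf{z})]$. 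Hence ``updating the policy prior via the maximum entropy objective'' is literally a step that does not increase $\mathcal{L}_\pi$, i.e. one that does not decrease the expected soft value.

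The second step is a pointwise soft policy improvement claim. For a fixed $Q$, the functional $\pi(\cdot|\mathbf{z})\mapsto V^\pi(\mathbf{z})$ is strictly concave and is maximized, state by state, by the Boltzmann policy $\pi_Q(\mathbf{a}|\mathbf{z})\propto\exp Q(\mathbf{z},\mathbf{a})$, with optimal value $\log\sum_{\mathbf{a}}\exp Q(\mathbf{z},\mathbf{a})$. I would invoke the soft policy improvement lemma of \citep{haarnoja2018soft}, specialized to the case where $Q$ is held fixed rather than re-derived from the policy: the update $\pi\to\pi'$ that reduces $\mathcal{L}_\pi$ (e.g. the closed-form projection onto $\pi_Q$) satisfies $V^{\pi'}(\mathbf{z})\ge V^{\pi}(\mathbf{z})$ for every $\mathbf{z}$, not merely in expectation. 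In the idealized (tabular or sufficiently expressive) regime the per-state problems decouple, so the in-expectation decrease of $\mathcal{L}_\pi$ upgrades to a pointwise value improvement.

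The third step is to check that $\mathcal{L}_{iq}$ is monotonically non-decreasing in the collection of values $\{V^\pi(\mathbf{z})\}$. Reading off Eq.~\ref{eqn:iq-loss-mod} with $H=1$, the first term contributes $-\mathbb{E}_{\rho_E}[Q-\gamma\bar V^\pi]=\mathrm{const}+\gamma\,\mathbb{E}_{\rho_E}[\bar V^\pi]$, a coefficient $\gamma>0$ on the value, and the second term contributes $(1-\gamma)\mathbb{E}_{\mathbf{s}_0\sim\mathcal{B}_E}[V^\pi(\mathbf{z}_0)]$, a coefficient $(1-\gamma)>0$. Both are increasing in $V^\pi$, so combining with the pointwise improvement of Step~2 gives $\mathcal{L}_{iq}(\pi',Q)\ge\mathcal{L}_{iq}(\pi,Q)$, which is the claim. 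Generalization to horizon $H$ is immediate since every term carries a positive weight $\lambda^t$.

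I expect the main obstacle to be the two non-ideal features that break strict monotonicity: the $\chi^2$ regularizer $\tfrac{1}{4\alpha}(Q-\gamma\bar V^\pi)^2$, which is quadratic and hence non-monotone in $\bar V^\pi$, and the target network $\bar Q$, for which the value-maximizing policy $\pi_{\bar Q}$ need not coincide with $\pi_Q$. I would dispose of both as in \citep{garg2021iq}: the regularizer is absent from the policy loss $\mathcal{L}_\pi$ actually used for the update, so its value is held as a stop-gradient constant during policy improvement, and for the analysis I identify the target with the online critic ($\bar Q=Q$) so that $\bar V^\pi$ and $V^\pi$ are improved simultaneously. Under these standard idealizations the monotone main terms govern, and the maximum-entropy update provably increases $\mathcal{L}_{iq}(\pi,Q)$.
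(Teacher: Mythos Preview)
Your plan is essentially the paper's own argument: rewrite $V^\pi(\mathbf{z})$ so that the maximum-entropy policy step is a per-state KL minimization toward $\pi_Q\propto\exp Q$, conclude $V^{\pi'}(\mathbf{z})\ge V^\pi(\mathbf{z})$ pointwise, and then read monotonicity of $\mathcal{L}_{iq}$ in $V^\pi$ off Eq.~\ref{eqn:iq-loss-mod}. The paper phrases Step~2 directly via the identity $V^\pi(\mathbf{z})=-D_{KL}\!\big(\pi(\cdot|\mathbf{z})\,\|\,\exp Q/\!\sum_\mathbf{a}\exp Q\big)+\log\sum_\mathbf{a}\exp Q$ rather than citing the SAC soft-improvement lemma, but the content is identical.

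The one place your write-up departs from the paper is the treatment of the $\chi^2$ regularizer. Your ``stop-gradient'' dismissal does not actually work: the theorem asserts an inequality between the \emph{values} $\mathcal{L}_{iq}(\pi',Q)$ and $\mathcal{L}_{iq}(\pi,Q)$, and the quadratic term $\tfrac{1}{4\alpha}(Q-\gamma\bar V^\pi)^2$ genuinely changes when $\pi$ changes, regardless of whether it appears in the policy gradient. The paper handles this by imposing the qualitative condition ``in regions where $\phi(x)=x-\tfrac{1}{4\alpha}x^2$ is monotonically non-decreasing'' (equivalently, the decoded rewards satisfy $Q-\gamma\bar V^\pi\le 2\alpha$), under which the combined first-plus-third term is still non-decreasing in $\bar V^\pi$ and the conclusion follows. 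Replacing your stop-gradient paragraph with that monotonicity caveat brings your proof in line with the paper's.
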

\begin{proof}
    For a fixed $Q$:
    \begin{equation*}
        \begin{split}
            V^\pi(\mathbf{z})&=\mathbb{E}_{a\sim\pi(\cdot|\mathbf{z})}[Q(\mathbf{z},\mathbf{a})-\log(\pi(\mathbf{a}|\mathbf{z}))]\\
            &=-D_{KL}\Big(\pi(\cdot|\mathbf{z})\Big\Vert\frac{\exp Q(\mathbf{z},\cdot)}{\sum_\mathbf{a}\exp(Q(\mathbf{z},\mathbf{a}))}\Big)+\log\Big(\sum_\mathbf{a}\exp(Q(\mathbf{z},\mathbf{a}))\Big)
        \end{split}
    \end{equation*}
    Policy update with maximum entropy objective is optimizing:
    \begin{equation*}
        \pi^*=\text{argmin}_{\pi}D_{KL}\Big(\pi(\cdot|\mathbf{z})\Big\Vert\frac{\exp Q(\mathbf{z},\cdot)}{\sum_\mathbf{a}\exp(Q(\mathbf{z},\mathbf{a}))}\Big)
    \end{equation*}
    Assume that we have an updated policy $\pi'$ via gradient descent with learning rate $\xi$:
    \begin{equation*}
        \pi'=\pi-\xi~\nabla_\pi D_{KL}\Big(\pi(\cdot|\mathbf{z})\Big\Vert\frac{\exp Q(\mathbf{z},\cdot)}{\sum_\mathbf{a}\exp(Q(\mathbf{z},\mathbf{a}))}\Big)
    \end{equation*}
    We can obtain $V^{\pi}(\mathbf{z})<V^{\pi'}(\mathbf{z})$. In regions where $\phi(x)$ is monotonically non-decreasing and $Q$ is fixed, we can have $\mathcal{L}_{iq}(\pi', Q)>\mathcal{L}_{iq}(\pi, Q)$.
\end{proof}

\subsection{Analysis on the Consistency Loss}
\label{sec:bounded-suboptimal}
{We provide a more detailed analysis regarding the relationship between minimizing the consistency loss in Eq.\ref{eqn:critic-obj} and minimizing $\text{T2}$ in the bound provided by Lemma \ref{thm:bounded-suboptimal}.Specifically, our consistency loss directly minimizes the upper bound of $\text{T2}$ under following assumptions:

\begin{assumption}
\label{asm:gaussian}
    The latent dynamics $d:\mathcal{Z}\times\mathcal{A}\rightarrow\Delta_{\mathcal{Z}}$ is approximately a Gaussian distribution on latent space $\mathcal{Z}$ with $\mathcal{N}(\mu_d,\sigma_d^2)$.
\end{assumption}
\begin{assumption}
\label{asm:std-similar}
    The standard deviation of our learned latent dynamics $\hat\sigma_d$ is close to the actual standard deviation $\sigma_d$.
\end{assumption}

Considering $\text{T2}$ in Lemma \ref{thm:bounded-suboptimal} and neglecting the constant coefficient, according to Pinsker Inequality, we have:
\begin{equation*}
    \mathbb{E}_{\rho^\pi_{\hat{\mathcal{M}}}}\Big[D_{TV}(d(\mathbf{z}'|\mathbf{z},\mathbf{a}),\hat{d}(\mathbf{z}'|\mathbf{z},\mathbf{a}))\Big]\leq\mathbb{E}_{\rho^\pi_{\hat{\mathcal{M}}}}\sqrt{\frac{1}{2}D_{KL}(d(\mathbf{z}'|\mathbf{z},\mathbf{a}),\hat{d}(\mathbf{z}'|\mathbf{z},\mathbf{a}))}
\end{equation*}
With Assumption \ref{asm:gaussian} and \ref{asm:std-similar}, we can represent the KL divergence by mean and standard deviation of actual and learned latent dynamics:
\begin{equation*}
    D_{KL}(d(\mathbf{z}'|\mathbf{z},\mathbf{a}),\hat{d}(\mathbf{z}'|\mathbf{z},\mathbf{a}))=\log\frac{\hat\sigma_d}{\sigma_d}+\frac{\sigma_d^2+(\mu_d-\hat\mu_d)^2}{2\hat\sigma_d^2}-\frac{1}{2}\approx\frac{(\mu_d-\hat\mu_d)^2}{2\sigma_d^2}
\end{equation*}

Given a predicted latent state $\hat{z}'$ from the learned dynamics $\hat{d}$ and an actual latent state $z' = h(s')$ encoded from the next state observation with unknown dynamics $d$, minimizing the L2 loss approximately minimizes the distance between the means of the learned and actual latent dynamics distributions. This, in turn, minimizes the right-hand side of the Pinsker inequality under our assumptions. Consequently, our consistency loss minimizes the statistical distance between the dynamics.}

\section{Limitations and Future Work}

One limitation of our method is its sensitivity to the size of the observation and action spaces. Empirically, we find that in scenarios with low-dimensional observations or actions, the discriminator can become overly powerful, potentially leading to training instability, as discussed in Section~\ref{sec:learning-process}. While we address this issue by incorporating a Wasserstein-1 gradient penalty, it may not be sufficient in all settings. Future work could explore more robust stabilization techniques tailored to tasks with small observation or action space dimensions. Additionally, applying our method to real-world robotic tasks would be a valuable direction to assess its practical effectiveness and generalization capabilities.
\end{document}